\newcommand{\fakefootnote}{\protect\phantom{\footnotesize 1}\textsuperscript{,}}
\newtheorem{theorem}{Theorem}
\newtheorem{lemma}{Lemma}
\newtheorem{remark}{Remark}
\newcommand{\mdvi}{\hyperref[algo:mdvi]{\texttt{MDVI}}\xspace}
\newcommand{\qlearning}{\hyperref[algo:qlearning]{\texttt{Q-LEARNING}}\xspace}
\newcommand{\Eone}{\hyperref[lemma:E_k bound]{\cE_1}\xspace}
\newcommand{\Etwo}{\hyperref[lemma:eps_k bound]{\cE_2}\xspace}
\newcommand{\Ethree}{\hyperref[lemma:refined E_k bound]{\cE_3}\xspace}
\newcommand{\Efour}{\hyperref[lemma:refined eps_k bound]{\cE_4}\xspace}
\newcommand\numeq[2]%
\crefname{algorithm}{Algorithm}{Algorithms}
\crefname{assumption}{Assumption}{Assumptions}
\crefname{corollary}{Corollary}{Corollaries}
\crefname{definition}{Definition}{Definitions}
\crefname{equation}{Equation}{Equations}
\crefname{example}{Example}{Examples}
\crefname{figure}{Figure}{Figures}
\crefname{lemma}{Lemma}{Lemmas}
\crefname{proposition}{Proposition}{Propositions}
\crefname{remark}{Remark}{Remarks}
\crefname{table}{Table}{Tables}
\crefname{theorem}{Theorem}{Theorems}
\Crefname{algorithm}{Algorithm}{Algorithms}
\Crefname{assumption}{Assumption}{Assumptions}
\Crefname{corollary}{Corollary}{Corollaries}
\Crefname{definition}{Definition}{Definitions}
\Crefname{equation}{Equation}{Equations}
\Crefname{example}{Example}{Examples}
\Crefname{figure}{Figure}{Figures}
\Crefname{lemma}{Lemma}{Lemmas}
\Crefname{proposition}{Proposition}{Propositions}
\Crefname{remark}{Remark}{Remarks}
\Crefname{table}{Table}{Tables}
\Crefname{theorem}{Theorem}{Theorems}
\title{KL-Entropy-Regularized RL with a \\ Generative Model is Minimax Optimal}
\date{}
\author{
  Tadashi Kozuno\thanks{
  Correspondence: \texttt{tadashi.kozuno@gmail.com}.
  \textsuperscript{1} University of Alberta,
  \textsuperscript{2} Peking University,
  \textsuperscript{3} Google Research, Brain team,
  \textsuperscript{4} Universit\'e de Lorraine, CNRS, INRIA, IECL,
  \textsuperscript{5} University of Tokyo,
  \textsuperscript{6} DeepMind,
  \textsuperscript{7} Otto von Guericke University Magdeburg.
  }\fakefootnote\textsuperscript{1},
  Wenhao Yang\textsuperscript{2},
  Nino Vieillard\textsuperscript{3,4},
  Toshinori Kitamura\textsuperscript{5},
  \\
  Yunhao Tang\textsuperscript{6},
  Jincheng Mei\textsuperscript{3},
  Pierre M\'enard\textsuperscript{7},
  Mohammad Gheshlaghi Azar\textsuperscript{6},
  \\
  Michal Valko\textsuperscript{6},
  R\'emi Munos\textsuperscript{6},
  Olivier Pietquin\textsuperscript{3},
  Matthieu Geist\textsuperscript{3},
  Csaba Szepesv\'ari\textsuperscript{1,6}
}
\begin{document}

\maketitle

\begin{abstract}
    \looseness=-1
    In this work, we consider and analyze the sample complexity of model-free reinforcement learning with a generative model.
    Particularly, we analyze mirror descent value iteration (MDVI) by
    \citet{geist2019theory} and \citet{vieillard2020leverage},
    which uses the Kullback-Leibler divergence and entropy regularization
    in its value and policy updates.
    Our analysis shows that it is nearly minimax-optimal for finding
    an $\varepsilon$-optimal policy when $\varepsilon$ is sufficiently small.
    This is the first theoretical result that demonstrates that
    a simple model-free algorithm without variance-reduction
    can be nearly minimax-optimal under the considered setting.
\end{abstract}

\doparttoc 
\faketableofcontents 

\section{Introduction}

In the generative model setting, the agent has access to a simulator
of a Markov decision process (MDP), to which the agent can query next
states of arbitrary state-action pairs \citep{azar2013minimax}.
The agent seeks a near-optimal policy using as small number of queries
as possible.

While the generative model setting is simpler than the
online reinforcement learning (RL) setting, proof techniques
developed under this setting often generalize to more complex settings.
For example, the total-variance technique developed by \citet{azar2013minimax}
and \citet{lattimore2012pac} is now an indispensable tool for a sharp analysis
of RL algorithms in the online RL setting
for tabular MDP \citep{azar2017minimaxRegret,jin2018isQlearning}
and linear function approximation \citep{zhou2021nearlyMinimax}.

In this paper, we consider a model-free approach for the generative model setting with tabular MDP.
Particularly, we analyze mirror descent value iteration (MDVI) by
\citet{geist2019theory} and \citet{vieillard2020leverage},
which uses Kullback-Leibler (KL) divergence and entropy regularization
in its value and policy updates.
We prove its near minimax-optimal sample complexity for finding
an $\varepsilon$-optimal policy when $\varepsilon$ is sufficiently small.
Our result and analysis have the following consequences.

First, we demonstrate the effectiveness of KL and entropy regularization.
There are some previous works that argue the benefit of regularization
from a theoretical perspective in value-iteration-like algorithms \citep{kozuno2019theoretical,vieillard2020leverage,vieillard2020munchausen} and
policy optimizaiton \citep{mei2020globalConv,cen2021fast,lan2022policy}.
Compared to those works, we show that simply combining value iteration with
regularization achieves the near minimax-optimal sample complexity.

Second, as discussed by \citet{vieillard2020leverage}, MDVI encompasses
various algorithms as special cases or equivalent forms.
While we do not analyze each algorithm, most of them
are minimax-optimal too in the generative model setting with tabular MDP.

Lastly and most importantly, MDVI uses no variance-reduction technique, in contrast to previous model-free approaches \citep{sidford2018nearOptimal,wainwright2019variance,khamaru2021instance}.
Consequently, our analysis is straightforward, and it would be easy
to extend it to more complex settings, such as the online RL and linear function approximation.
Furthermore, previous approaches need pessimism to obtain a near-optimal policy,
which prevents them from being extended to the online RL setting,
where the optimism plays an important role for an efficient exploration
\citep{azar2017minimaxRegret,jin2018isQlearning}.
On the other hand, MDVI is compatible with optimism.
Our analysis paves the way for the combination of online exploration techniques with minimax model-free algorithms.

\section{Related work}\label{sec:related work}

Write $\gamma$, $H$, $X$, and $A$ for the discount factor, effective horizon $\frac{1}{1-\gamma}$, and number of states and actions.

\paragraph{Learning with a generative model}

In the generative model setting, there are two problem settings:
finding (i) an $\varepsilon$-optimal Q-value function with probability at least
$1-\delta$,
and (ii) an $\varepsilon$-optimal policy with probability at least
$1-\delta$, where $\delta \in (0, 1)$, and $\varepsilon > 0$.
Both problems are known to have sample complexity lower bounds of $\Omega(XAH^3 / \varepsilon^2)$ \citep{azar2013minimax,sidford2018nearOptimal}.
Note that even if an $\varepsilon$-optimal Q-value function is obtained,
additional data and computation are necessary to find an $\varepsilon$-optimal policy \citep{sidford2018nearOptimal}.
In this paper, we consider the learning of an $\varepsilon$-optimal policy.

There exist minimax-optimal model-based algorithms for learning a near-optimal
value function \citep{azar2013minimax} and policy \citep{agarwal2020modeBased, li2020breaking}.
Also, there exist minimax-optimal model-free algorithms for learning a near-optimal
value function \citep{wainwright2019variance,khamaru2021instance,li2021polyak} and policy \citep{sidford2018nearOptimal}.
While model-based algorithms are conceptually simple,
they have a higher computational complexity than that of model-free algorithms.
The algorithm (\mdvi) we analyze in this paper is a model-free algorithm
for finding a near-optimal policy, and has a low computational complexity.

Arguably, Q-learning is one of the simplest model-free algorithms \citep{watkins1992qlearning,even2003learning}.
Unfortunately, \citet{li2021q} provide a tight analysis of
Q-learning and show that its sample complexity is $\widetilde{\cO}( XAH^4 / \varepsilon^2 )$
for finding an $\varepsilon$-optimal Q-value function,
\footnote{
$\widetilde{\cO}$ hides terms poly-logarithmic in $H$, $X$, $A$, $1/\varepsilon$, and $1/\delta$.}
which
is one $H$ factor away from the lower bound.
To remove the extra $H$ factor, some works \citep{sidford2018nearOptimal,wainwright2019variance,khamaru2021instance} leverage variance reduction techniques.
While elegant, variance reduction techniques lead to multi-epoch algorithms
with involved analyses.
In contrast, \mdvi requires no variance reduction and is significantly simpler.

\mdvi's underlying idea that enables such simplicity is, while implicit, the averaging of value function estimates.
\cite{li2021polyak} shows that averaging Q-functions computed in Q-learning
can find a near-optimal Q-function with a minimax-optimal sample complexity.
\citet{azar2011speedyQ} also provides a simple algorithm called Speedy Q-learning (SQL), which performs the averaging of value function estimates.
In fact, as argued in \citep{vieillard2020leverage},
SQL is equivalent to a special case of MDVI with only KL regularization.
While previous works on MDVI \citep{vieillard2020leverage}
and an equivalent algorithm called CVI \citep{kozuno2019theoretical}
provide error propagation analyses,
they do not provide sample complexity.
\footnote{\citet{vieillard2020leverage} note SQL's
sample complexity of $\widetilde{\cO} (XA H^4 / \varepsilon^2)$ for finding a near-optimal policy as a corollary of their result without proof.}
This paper proves the first nearly minimax-optimal sample complexity bound for MDVI-type algorithm.
We tighten previous results by
(i) using the entropy regularization, which speeds up the convergence rate,
(ii) improved error propagation analyses (\cref{lemma:non-stationary error propagation,lemma:error propagation}),
and (iii) careful application of the total variance technique \citep{azar2013minimax}.

In addition to the averaging, \cref{theorem:non-stationary pac bound} is based on
the idea of using a non-stationary policy \citep{scherrer2012use}.
While the last policy of \mdvi is near-optimal when $\varepsilon$ is small,
a non-stationary policy constructed from policies outputted by \mdvi is near-optimal
for a wider range of $\varepsilon$.

\paragraph{Range of Valid $\varepsilon$}

Although there are multiple minimax-optimal algorithms for the generative model setting,
there ranges of valid $\varepsilon$ differ.
The model-based algorithm by \citet{azar2013minimax} is nearly minimax-optimal for $\varepsilon \leq \sqrt{H / \aX}$, which is later improved to $\sqrt{H}$ by \citet{agarwal2020modeBased},
and to $H$ by \citet{li2020breaking}.
As for model-free approaches, the algorithm
by \citet{sidford2018nearOptimal} is nearly minimax-optimal for $\varepsilon \leq 1$.
\mdvi is nearly minimax-optimal for $\varepsilon \leq 1/\sqrt{H}$
(non-stationary policy case, \cref{theorem:non-stationary pac bound})
and $\varepsilon \leq 1/H$ (last policy case, \cref{theorem:pac bound}).
Therefore, it has one of the narrowest range of valid $\varepsilon$ (second worst) compared to other algorithms.
It is unclear if this is an artifact of our analysis or the real limitation of
MDVI-type algorithm.
We leave this topic as a future work.

\paragraph{Regularization in MDPs}

Sometimes, regularization is added to the reward to encourage exploration in MDPs \citep{fox2015taming, vamplew2017softmax}. In recent years, \cite{neu2017unified, geist2019theory, lee2018sparse, yang2019regularized} have provided a unified framework for regularized MDPs. Specifically, \cite{geist2019theory} propose the Regularized Modified Policy Iteration algorithm and Mirror Descent Modified Policy Iteration to solve regularized MDPs. In the meantime, \cite{vieillard2020leverage} provide theoretical guarantees of KL-regularized value iteration in the approximate setting.
Particularly, they show that KL regularization results in
the averaging of Q-value functions and show that
the averaging leads to an improved error propagation result.
We extend their improved error propagation result to a KL- and entropy- regularization case.
Our results provide theoretical underpinnings to
many regularized RL algorithms in \citet[Table~1]{vieillard2020leverage}
and a high-performing deep RL algorithm called Munchausen DQN \citep{vieillard2020munchausen}.
\section{Preliminaries}

For a set $\bS$, we denote its complement as $\bS^c$.
For a positive integer $N$, we let $\brack*{N} \df \{1, \ldots, N \}$.
Without loss of generality, every finite set is assumed to be a subset of integers.
For a finite set, say $\bS$, the set of probability distributions over $\bS$ is denoted by $\Delta(\bS)$.
For a vector $v \in \R^M$, its $m$-th element is denoted by $v_m$ or $v(m)$.
\footnote{Unless noted otherwise, all vectors are column vectors.}
We let $\bone \df (1, \ldots, 1)^\top$ and $\bzero \df (0, \ldots, 0)^\top$, whose dimension will be clear from the context.
For a matrix $A \in \R^{N \times M}$, we denote its $n$-th row and $m$-th value of the $n$-th row by $A_n$ and $A_n^m$, respectively.
The expectation and variance of a random variable $X$ are denoted as $\E [X]$ and $\Var [X]$, respectively.
The empty sum is defined to be $0$, e.g., $\sum_{i=j}^k a_i = 0$ if $j > k$.

We consider a Markov Decision Process (MDP) defined by $\paren*{\X, \A, \gamma, r, P}$,
where $\X$ is the state space of size $X$,
$\A$ the action space of size $A$,
$\gamma \in [0, 1)$ the discount factor,
$r \in [-1, 1]^{\aXA}$ the reward vector with $r_{x, a}$ denoting the reward
when taking an action $a$ at a state $x$,
and $P \in \R^{\aXA \times \aX}$ state transition probability matrix with $P_{x, a}^y$
denoting the state transition probability to a new state $y$ from a state
$x$ when taking an action $a$.
We let $H$ be the (effective) time horizon $1 / (1-\gamma)$.

Note that $(P v) (x, a) = E \brackc*{v (X_1)}{X_0=x, A_0=a}$ for any $v \in \R^{\aX}$. Any policy $\pi$ is identified as a matrix $\pi \in \R^{\aX \times \aXA}$ such that $(\pi q)(x) := \sum_{a \in \A} \pi(a|x) q \paren*{x, a}$ for any $q \in \R^{\aXA}$. For convenience, we adopt a shorthand notation, $P^{\pi} := P \pi$. With these notations, the Bellman operator $T^{\pi}$ for a policy $\pi$ is defined as an operator such that $T^{\pi} q := r + \gamma P^{\pi} q$. The Q-value function $\qf{\pi}$ for a policy $\pi$ is its unique fixed point. The state-value function $\vf{\pi}$ is defined as $\pi \qf{\pi}$. An optimal policy $\pi_*$ is a policy such that $\vf{*} := \vf{\pi_*} \geq \vf{\pi}$ for any policy $\pi$, where the inequality is point-wise.
\section{Mirror Descent Value Iteration and Main Results}\label{sec:mdvi}

For any policies $\pi$ and $\mu$, let $\log \pi$ and $\log \frac{\pi}{\mu}$ be the functions $x,a \mapsto \log \pi (a|x)$ and $x,a \mapsto \log \frac{\pi (a|x)}{\mu (a|x)}$ over $\X\times\A$.
We analyze (approximate) MDVI whose update is the following \citep{vieillard2020leverage}:
\begin{gather}\label{eq:MDVI update}
    q_{k+1}
    =
    r + \gamma P v_k + \varepsilon_k\,,
\end{gather}
where
$
    v_k \df \pi_k \paren*{
        q_k - \tau \log \dfrac{\pi_k}{\pi_{k-1}} - \kappa \log \pi_k
    }
$,
\begin{align}\label{eq:policy update}
    \pi_k \parenc*{\cdot}{x}
    =
    \argmax_{p \in \Delta(\A)}
    \sum_{a \in \A} p (a) \paren*{
        q_k (s, a) - \tau \log \frac{p (a)}{\pi_{k-1} \parenc*{a}{x}} - \kappa \log p (a)
    }
\end{align}
for all $x \in \X$,
and $\varepsilon_k: \XA \rightarrow \R$ is an ``error'' function,
which abstractly represents the deviation of $q_{k+1}$ from
the update target $r + \gamma P v_k$.
In other words, MDVI is value iteration with KL and entropy regularization.

Let $s_k \df q_k + \alpha s_{k-1} = \sum_{j=0}^{k-1} \alpha^j q_{k-j}$.
The policy \eqref{eq:policy update} can be rewritten as a Boltzmann policy of $s_k$, i.e.,
$
    \pi_k (a|x)
    \propto
    \exp \paren*{ \beta s_k (x, a) }
$,
where $\alpha \df \tau / (\tau + \kappa)$, and $\beta \df 1 / (\tau + \kappa)$,
 see \cref{sec:equivalence proof} for details. 
Substituting $\pi_{k-1}$ and $\pi_k$ in $v_k$ with this expression of the policy, we deduce that
\begin{align}
    v_k (x)
    =
    \frac{1}{\beta} \log \sum_{a \in \A} \exp \paren*{ \beta s_k (x, a) }
    -
    \frac{\alpha}{\beta} \log \sum_{a \in \A} \exp \paren*{ \beta s_{k-1} (x, a) }\,.
\end{align}
Thus, letting $w_k$ be the function $x \mapsto \beta^{-1} \log \sum_{a \in \A} \exp \paren*{ \beta s_k (x, a) }$ over $\X$,
MDVI's update rules can be equivalently written as
\begin{gather}
    q_{k+1}
    =
    r + \gamma P \paren*{
        w_k - \alpha w_{k-1}
    } + \varepsilon_k
    \text{ and }
    \pi_k \parenc{a}{x}
    \propto
    \exp \paren*{ \beta s_k (x, a) }
    \text{ for all } (x, a) \in \XA\,.
\end{gather}
A sample-approximate version of MDVI shown in \cref{algo:mdvi} (\mdvi)
uses this equivalent form of MDVI.
Furthermore, for simplicity of the analysis, we consider the limit of $\tau, \kappa \to 0$ while keeping $\alpha = \tau / (\tau+\kappa)$ to a constant value (which corresponds to letting $\beta\to\infty$).

\begin{algorithm}[t!]
    \KwIn{$\alpha \in [0, 1)$, number of iterations $K$, and number of next-state samples per iteration $M$.}
    Let $s_0 = \bzero \in \R^{\aXA}$ and $w_0 = w_{-1} = \bzero \in \R^{\aX}$\;
    \For{$k$ \textbf{\emph{from}} $0$ \textbf{\emph{to}} $K-1$}{
        Let $v_k = w_k - \alpha w_{k-1}$\;
        \For{\textbf{\emph{each state-action pair}} $\paren*{x, a} \in \XA$}{
            Sample $(y_{k, m, x, a})_{m=1}^{M}$ from the generative model $P(\cdot|x, a)$\;
            Let $q_{k+1} (x, a) = r (x, a) + \gamma M^{-1} \sum_{m=1}^M v_k (y_{k, m, x, a})$\;
        }
        Let $s_{k+1} = q_{k+1} + \alpha s_k$ and $w_{k+1} (x) = \max_{a \in \A} s_{k+1} (x, a)$ for each $x \in \X$\;
    }
    \Return{$(\pi_k)_{k=0}^K$ , where $\pi_k$ is greedy policy with respect to $s_k$\;}
    \caption{$\mdvi (\alpha, K, M)$}\label{algo:mdvi}
\end{algorithm}

\begin{remark}
    Even if $\beta$ is finite, \mdvi is nearly minimax-optimal
    as long as $\beta$ is large enough.
    Indeed, $\beta^{-1} \log \sum_{a \in \A} \exp (q (x, a))$ satisfies \citep[Lemma~7]{kozuno2019theoretical} that
    \[
        \max_{a \in \A} q (x, a)
        \leq
        \beta^{-1} \log \sum_{a \in \A} \exp (q (x, a))
        \leq
        \max_{a \in \A} q (x, a) + \beta^{-1} \log A\,.
    \]
    Thus, while $\beta$ appears in the proofs of \cref{theorem:non-stationary pac bound,theorem:pac bound} if it is finite, it always appear as $\beta^{-1} \log A$ multiplied by $H$-dependent constant. Therefore,
    \mdvi is nearly minimax-optimal as long as $\beta$ is large enough.
\end{remark}

\paragraph{Why KL Regularization?}
The weight $\alpha$ used in $s_k$ updates monotonically increases as the coefficient of the KL regularization $\tau$ increases.
As we see later, error terms appear in upper bounds of $\infnorm{\vf{*} - \vf{\pi_k}}$ as $(1 - \alpha) \sum_{j=1}^k \alpha^{k-j} \varepsilon_j$.
Applying Azuma-Hoeffiding inequality, it is approximately bounded by $H \sqrt{1-\alpha}$.
Therefore, \mdvi becomes more robust to sampling error as $\alpha$ increases.
The KL regularization confers this benefit to the algorithm.

\looseness=-1
\paragraph{Why Entropy Regularization?}
When there is no entropy regularization ($\alpha=1$),
the convergence rate of MDVI becomes $1/K$ while it
is $\alpha^K$ for $\gamma \leq \alpha < 1$ \citep{vieillard2020leverage}.
In the former case, we need to set $K \approx H^2 / \varepsilon$,
whereas in the latter case, $K \approx 1 / (1 - \alpha)$ suffices.
Since we will set $\alpha$ to either $\gamma$ or $1 - (1-\gamma)^2$,
$K \approx H$ or $H^2$.
Thus, we can use more samples per one value update (i.e., larger $M$).
A larger $M$ leads to a smaller value estimation variance ($\sigma (v_k)$ in \cref{lemma:variance upper bounds}),
which is important to improve the range of $\varepsilon$.
Even when $\alpha=1$, \mdvi is nearly minimax-optimal (proof omitted).
However, $\varepsilon$ must be less than or equal to $1/H^2$.

\paragraph{Main Theoretical Results}
The following theorems show the near minimax-optimality of \mdvi.
For a sequence of policies $(\pi_k)_{k=0}^K$ outputted by \mdvi,
we let $\pi_k'$ be the non-stationary policy that follows $\pi_{k-t}$ at the $t$-th time step until $t=k$,
after which $\pi_0$ is followed.
\footnote{The time step index $t$ starts from $0$.}
Note that the value function of such a non-stationary policy is given by $\vf{\pi'_k} = \pi_k T^{\pi_{k-1}} \cdots T^{\pi_1} \qf{\pi_0}$.

\begin{theorem}\label{theorem:non-stationary pac bound}
    Assume that $\varepsilon \in (0, 1 / \sqrt{H}]$.
    Then, there exist positive constants $c_1, c_2 \geq 1$ independent
    of $H$, $\aX$, $\aA$, $\varepsilon$, and $\delta$ such that
    when \mdvi is run with the settings
    \begin{align}
        \alpha = \gamma\,,
        K = \ceil*{ \frac{3}{1-\alpha} \log \frac{c_1 H}{\varepsilon} + 2 }\,,
        \text{ and }
        M = \ceil*{ \frac{c_2 H^2}{\varepsilon^2} \log \frac{16 K \aXA}{\delta} }\,,
    \end{align}
    it outputs a sequence of policies $(\pi_k)_{k=0}^K$ such that
    $
        \infnorm{\vf{*} - \vf{\pi'_K}}
        \leq
        \varepsilon
    $
    with probability at least $1 - 3 \delta / 4$, using $\widetilde{\cO} \paren*{H^3 \aXA / \varepsilon^2}$ samples from the generative model.
\end{theorem}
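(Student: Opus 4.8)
\emph{Approach.} The plan is to first reduce $\infnorm{\vf{*}-\vf{\pi'_K}}$, via error propagation, to a $\gamma$-discounted average of the per-iteration sampling errors, exploiting that the KL term makes these errors enter only through a moving average, and then to control that average by two successive concentration arguments: a crude Azuma--Hoeffding bound, followed by a refined Bernstein bound combined with the total-variance technique. Throughout we work in the stated $\beta\to\infty$ limit, so $v_k=w_k-\alpha w_{k-1}$ and $\varepsilon_k(x,a)=\gamma M^{-1}\sum_m v_k(y_{k,m,x,a})-\gamma(Pv_k)(x,a)$ is, conditionally on the history, a centred average of $M$ i.i.d.\ terms bounded by $O(H)$.

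\emph{Step 1 (error propagation + crude concentration).} First I would invoke the non-stationary error propagation lemma (\cref{lemma:non-stationary error propagation}); its content is that, thanks to the KL term, the suboptimality of $\pi'_K$ is governed by the averaged errors $E_k\df(1-\alpha)\sum_{j=1}^{k}\alpha^{k-j}\varepsilon_j$ (as foreshadowed in the ``Why KL regularization?'' remark), giving a bound of the form $c\,\gamma^{K}H$ plus a $\gamma$-discounted sum, over $t\ge 0$, of products of $t$ stochastic transition matrices $\Gamma_t$ applied to the $\lvert E_k\rvert$'s, the total discount weight being $O(H)$. With $\alpha=\gamma$ and the stated $K$, taking $c_1$ large makes $c\,\gamma^{K}H\le\varepsilon/4$, so it remains to bound the propagated error term by $3\varepsilon/4$ with probability $\ge 1-3\delta/4$. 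Conditioning on the history, $E_k(x,a)$ is a weighted martingale sum whose squared increments sum to $O((1-\alpha)H^2/M)$, so an Azuma--Hoeffding bound and a union bound over $k\le K$ and $(x,a)\in\XA$ (folded into the $\log(16K\aXA/\delta)$ factor in $M$) yield the events \Eone (and a companion \Etwo for the un-averaged $\varepsilon_k$), of probability at least $1-\delta/4$, on which $\infnorm{E_k}=O(H\sqrt{(1-\alpha)\log(\cdot)/M})$ and hence $\infnorm{v_k-\vf{*}}=O(\sqrt H\,\varepsilon)$. Since $\varepsilon\le 1/\sqrt H$ this is $O(1)$; the crude bound alone would only give a final error of $O(\sqrt H\,\varepsilon)$, so its real purpose is to pin the iterates in a neighbourhood of $\vf{*}$.

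\emph{Step 2 (total-variance refinement).} On the event of Step 1 I would replace Hoeffding by a Bernstein bound: the conditional variance proxy of $E_k$ at $(x,a)$ is $V(x,a)=\tfrac{(1-\alpha)^2\gamma^2}{M}\sum_j\alpha^{2(k-j)}\Var_{P(\cdot|x,a)}[v_j]$, so up to an $O((1-\alpha)H\log(\cdot)/M)$ Hoeffding remainder one has $\lvert E_k\rvert\lesssim\sqrt{V\log(\cdot)}$ pointwise (events \Ethree,~\Efour). The point is to keep the vector $\Var_P[v_j]$ \emph{inside} the transition products: Jensen ($\Gamma_t\sqrt V\le\sqrt{\Gamma_t V}$ for stochastic $\Gamma_t$) followed by Cauchy--Schwarz over the discount index turns the propagated error into $O(\sqrt H)\cdot\sqrt{\sum_t\gamma^t\Gamma_t V}$, and $\sum_t\gamma^t\Gamma_t V=\tfrac{(1-\alpha)^2\gamma^2}{M}\sum_j\alpha^{2(k-j)}\bigl(\sum_t\gamma^t\Gamma_t\Var_P[v_j]\bigr)$. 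The variance-upper-bound lemma (\cref{lemma:variance upper bounds}), i.e.\ the total-variance technique, bounds each inner sum by $O(H^2)+O(H)\infnorm{v_j-\vf{*}}$, and Step 1 keeps the residual $O(H)$; with $\sum_j\alpha^{2(k-j)}\le(1-\alpha^2)^{-1}$, $1-\alpha=1/H$, and $M\gtrsim c_2H^2\varepsilon^{-2}\log(\cdot)$, the propagated error is $O(\varepsilon/\sqrt{c_2})\le\varepsilon/2$ for $c_2$ large, while the Hoeffding remainder contributes only $O(\varepsilon^2/H)$. Together with the $\varepsilon/4$ bias this yields $\infnorm{\vf{*}-\vf{\pi'_K}}\le\varepsilon$; the sample count is $KM\aXA=\widetilde{\cO}(H^3\aXA/\varepsilon^2)$, and summing the failure probabilities of the concentration events gives $1-3\delta/4$.

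\emph{Main obstacle.} The delicate point is the circular dependence in Step~2: the Bernstein variance proxy involves $\Var_P[v_j]$, but $v_j$ is a random function correlated with the very errors being bounded, and the total-variance lemma controls $\sum_t\gamma^t\Gamma_t\Var_P[v_j]$ only up to an $O(H)\infnorm{v_j-\vf{*}}$ slack. Breaking this requires the two-stage scheme, in which the crude bound of Step~1 first certifies $\infnorm{v_j-\vf{*}}=O(\sqrt H\,\varepsilon)$ --- and this is exactly where the hypothesis $\varepsilon\le 1/\sqrt H$ is needed, so that the slack stays $O(H^2)$ and does not degrade the rate. The rest is bookkeeping: propagating $E_k$ correctly through the non-stationary policy $\pi'_K$, tracking the precise form of the transition-matrix products, and choosing $c_1$, $c_2$ (and the constant $16$ inside the logarithm) so that all union bounds and the $\varepsilon/4$ and $\varepsilon/2$ budgets close simultaneously and uniformly in $H,\aX,\aA,\varepsilon,\delta$.
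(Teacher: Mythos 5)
Your plan follows essentially the same route as the paper: the non-stationary error-propagation lemma (\cref{lemma:non-stationary error propagation}) reducing everything to the averaged errors $E_k$, a coarse Azuma--Hoeffding stage (\Eone, \Etwo, \cref{lemma:non-stationary coarse bound,lemma:variance upper bounds}) that pins the iterates and policy values near $\vf{*}$, and then a Bernstein refinement combined with the total-variance technique (\Ethree, \cref{lemma:total variance}), with $\varepsilon \le 1/\sqrt{H}$ invoked exactly where the paper needs it to absorb the variance slack. The only quantitative slips --- the coarse bound on the iterates is $\infnorm{\vf{*}-v_k} = O(H\varepsilon)$ rather than $O(\sqrt{H}\,\varepsilon)$ (the $\sqrt{H}\varepsilon$ rate holds for $\vf{\pi'_k}$), and the total-variance slack enters as $O(H)\infnorm{v_j-\vf{*}}^2$ rather than linearly --- do not change the final accounting under $\varepsilon \le 1/\sqrt{H}$, so the proposal is correct in outline.
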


Storing all policies requires the memory space of $KXA$ and can be prohibitive in some cases.
The next theorem shows that the last policy outputted by \mdvi is near-optimal when $\varepsilon \leq 1/H$.

\begin{theorem}\label{theorem:pac bound}
    Assume that $\varepsilon \in (0, 1 / H]$.
    Then, there exist positive constants $c_3, c_4 \geq 1$ independent
    of $H$, $\aX$, $\aA$, $\varepsilon$, and $\delta$ such that
    when \mdvi is run with the settings
    \begin{align}
        \alpha = 1 - (1 - \gamma)^2\,,
        K = \ceil*{ \frac{5}{1-\alpha} \log \frac{c_3 H}{\varepsilon} + 2 }\,,
        \text{ and }
        M = \ceil*{ \frac{c_4 H}{\varepsilon^2} \log \frac{16 K \aXA}{\delta} }\,,
    \end{align}
    it outputs a sequence of policies $(\pi_k)_{k=0}^K$ such that
    $
        \infnorm{\vf{*} - \vf{\pi_K}}
        \leq
        \varepsilon
    $
    with probability at least $1 - \delta$, using $\widetilde{\cO} \paren*{H^3 \aXA / \varepsilon^2}$ samples from the generative model.
\end{theorem}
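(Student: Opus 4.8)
The plan is to run a KL-and-entropy-regularized error-propagation argument, following the structure of \citet{vieillard2020leverage} but with the entropy term present and the total-variance technique plugged in to control the sampling error. The target $\infnorm{\vf{*} - \vf{\pi_K}}$ will be split into a deterministic ``bias'' term that decays geometrically in $K$ at rate $\alpha^K$ (thanks to the entropy regularization, since $\alpha = 1-(1-\gamma)^2 < 1$), and a stochastic ``error'' term driven by the per-iteration sampling errors $\varepsilon_k$. First I would invoke the equivalent Boltzmann-policy form of \mdvi and the $\beta\to\infty$ limit established in \cref{sec:mdvi}, so that $v_k = w_k - \alpha w_{k-1}$ with $w_k = \max_a s_k(\cdot,a)$ and $s_k = \sum_{j=0}^{k-1}\alpha^j q_{k-j}$. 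Then I would write down the one-step identity for $q_{k+1} - \qf{*}$ (or for the greedy value of $s_k$ versus $\vf{*}$) and unroll it, so that the quantity of interest is bounded by $\alpha^K$ times an initialization term plus a weighted sum $(1-\alpha)\sum_{j} \alpha^{K-j}\varepsilon_j$ of the errors, exactly as foreshadowed in the ``Why KL Regularization?'' paragraph; this is where the improved error-propagation \cref{lemma:error propagation} does the work.

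Next I would handle the stochastic sum. Each $\varepsilon_k(x,a) = \gamma(M^{-1}\sum_m v_k(y_{k,m,x,a}) - (Pv_k)(x,a))$ is, conditionally on the history, a mean-zero average of $M$ i.i.d. bounded terms, so Azuma–Hoeffding (or Bernstein) gives $|\varepsilon_k(x,a)| \lesssim \sigma(v_k)\sqrt{\log(KXA/\delta)/M}$ where $\sigma(v_k)$ is the per-state-action standard deviation of $v_k$ under $P$. Crucially, rather than bounding $\sigma(v_k)$ by the crude $H$, I would use the total-variance technique of \citet{azar2013minimax}: summing the variances along a trajectory of the (near-)optimal policy telescopes and yields $\sum$ of one-step variances $\lesssim H^2$ rather than $H^3$, i.e. effectively $\sigma(v_k) \lesssim \sqrt{H}$ on average (this is the content I expect to be packaged in \cref{lemma:variance upper bounds}). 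Combining this with the geometric weights $(1-\alpha)\sum_j \alpha^{K-j}$, whose $\ell_1$ mass is $1$ and $\ell_2$ mass is $\sqrt{1-\alpha} = (1-\gamma)$, the aggregate error is of order $H \cdot \sqrt{H} \cdot (1-\gamma) \cdot \sqrt{\log(\cdot)/M}$ — wait, more carefully, the error propagation contributes an extra factor $H$ from turning the $q$-error into a $v$-error bound, so the net is $\lesssim H^{3/2}\sqrt{\log(\cdot)/M}$, which is $\le \varepsilon/2$ once $M \gtrsim H^3 \varepsilon^{-2}\log(\cdot)$... but the theorem claims $M \asymp H\varepsilon^{-2}\log(\cdot)$, so the total-variance bookkeeping must be sharper than a one-line summary: the variance term and one of the horizon factors interact so that only $H$ (not $H^3$) survives in $M$, at the cost of requiring $\varepsilon \le 1/H$ so that lower-order terms are dominated. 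I would make this precise by carrying the total-variance telescoping through the $\alpha$-weighted sum simultaneously, not sequentially.

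Then I would set the parameters: choosing $K = \lceil \frac{5}{1-\alpha}\log\frac{c_3 H}{\varepsilon} + 2\rceil$ makes $\alpha^K \lesssim (\varepsilon/H)^{5}$ or so, which is far smaller than $\varepsilon/2$ given $\varepsilon \le 1/H$ (the constant $5$ and $c_3$ absorb the initialization magnitude, which is $O(H^2)$ after the unrolling, plus the $\beta^{-1}\log A$ remark if $\beta$ were finite), and choosing $M$ as stated makes the stochastic part $\le \varepsilon/2$. A union bound over the $\le K$ iterations and $XA$ state-action pairs (and the finitely many high-probability events invoked in the variance lemma) costs the $\log(16KXA/\delta)$ factor and delivers the $1-\delta$ confidence. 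The sample count is $KXAM = \widetilde{O}(H \cdot XA \cdot H\varepsilon^{-2}) = \widetilde{O}(H^3 XA/\varepsilon^2)$ — here $K$ contributes one $H$ (up to logs, since $1/(1-\alpha) = H^2$... no: $1-\alpha = (1-\gamma)^2$ so $1/(1-\alpha) = H^2$, giving $K = \widetilde{O}(H^2)$ and thus $KXAM = \widetilde{O}(H^2 \cdot XA \cdot H\varepsilon^{-2}) = \widetilde{O}(H^3 XA/\varepsilon^2)$), matching the lower bound up to logarithms.

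The main obstacle, and the step I would spend the most care on, is the coupled total-variance argument that simultaneously handles the $\alpha$-geometric averaging of errors and the trajectory-telescoping of one-step variances, so as to land on $M \asymp H\varepsilon^{-2}$ rather than $H^3\varepsilon^{-2}$; this is precisely where the entropy regularization pays off (it forces $K = \widetilde O(H^2)$ finite, permitting large $M$) and where the $\varepsilon \le 1/H$ restriction becomes necessary to kill the residual cross terms. A secondary subtlety is correctly relating the last greedy policy $\pi_K$'s value $\vf{\pi_K}$ to the iterate $s_K$ (as opposed to the non-stationary policy of \cref{theorem:non-stationary pac bound}), which needs the sharper last-iterate error propagation of \cref{lemma:error propagation} and is the reason the last-policy guarantee requires the smaller $\varepsilon$ range.
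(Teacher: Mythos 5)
Your high-level architecture (last-iterate error propagation via \cref{lemma:error propagation}, Bernstein-type bounds with variance control, the total-variance technique, a union bound, then the stated parameter choices) is the same as the paper's, but the decisive quantitative step is left unresolved, and your own accounting contradicts the statement you are proving. You compute the stochastic contribution as $H^{3/2}\sqrt{\log(\cdot)/M}$, note that this would force $M \gtrsim H^3/\varepsilon^2$ rather than the claimed $M \asymp H/\varepsilon^2$, and then defer the fix to an unspecified ``coupled'' total-variance argument. That fix is the core content of the theorem, and it is concrete: with $\alpha = 1-(1-\gamma)^2$ one has $A_\infty = 1/(1-\alpha) = H^2$, and \cref{lemma:error propagation} places the prefactor $1/A_\infty$ in front of $E_K$; since the squared geometric weights in $E_K$ sum to about $A_\infty$, the Bernstein bound of \cref{lemma:refined E_k bound} gives $\abs{E_K} \lesssim \sqrt{A_\infty\,\iota_2/M}\,\bigl(\sigma(\vf{*}) + \text{decaying corrections}\bigr)$, so $\abs{E_K}/A_\infty \lesssim \sigma(\vf{*})\sqrt{\iota_2/M}/H$; separately, \cref{lemma:total variance} (after \cref{lemma:coarse bound for last policy} and \cref{lemma:popoviciu} convert $\sigma(\vf{*})$ into $\sigma(\vf{\pi_K})$ or $\sigma(\vf{\pi'_k})$ up to $O(H\alpha^k + \varepsilon\sqrt{H})$ slack) turns $\cN^{\pi}\pi\,\sigma(\cdot)$ into $O(\sqrt{H^3})$ rather than $O(H^2)$. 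Multiplying, the error is of order $\sqrt{H^3}\cdot\sqrt{\iota_2/M}/H = \sqrt{H\iota_2/M}$, which is exactly what yields $M \asymp H\iota_2/\varepsilon^2$. Your ``extra factor of $H$ from turning the $q$-error into a $v$-error'' double counts the horizon: the accumulation over the trajectory is precisely what the $\sqrt{H^3}$ already accounts for, and your first back-of-the-envelope estimate ($H\cdot\sqrt{H}\cdot(1-\gamma)\cdot\sqrt{\log(\cdot)/M}$) was the correct one. Without carrying out this computation the proposal only supports the weaker sample complexity you derived, not near-minimax optimality.

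A second, related omission: \cref{lemma:error propagation} does not reduce the error to the single moving average $(1-\alpha)\sum_j \alpha^{K-j}\varepsilon_j$. It also contains the terms $\frac{1}{A_\infty}\sum_{j}\gamma^j\bigl(\cN^{\pi_*}\pi_* P_{K+1-j}^K - \cN^{\pi_K}\pi_K P_{K-j}^{K-1}\bigr)E'_{K+1-j}$ with $E'_{k} = \varepsilon_{k} - (1-\alpha)E_{k-1}$, i.e.\ raw, unaveraged errors $\varepsilon_k$ carrying total weight of order $H\cdot H$ damped only by $1/A_\infty = 1/H^2$. A crude bound $\infnorm{\varepsilon_k} \lesssim H\sqrt{\iota_1/M} \approx \varepsilon\sqrt{H}$ from \cref{lemma:eps_k bound} is off by $\sqrt{H}$, so these terms need their own variance-aware bound (\cref{lemma:refined eps_k bound}) together with a second pass of the total-variance argument along the non-stationary policies, which in turn requires \cref{lemma:variance upper bounds} (note it bounds $\sigma(v_k)$ by $\sigma(\vf{*})$ plus decaying corrections, not by $\sqrt{H}$) and the coarse bounds of \cref{lemma:coarse bound for last policy}; this is also where the restriction $\varepsilon \le 1/H$ is actually consumed, to dominate residual products such as $H\varepsilon$. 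Your proposal never engages with these terms, so the two places you gesture at --- the $H^2$ saving in $M$ and the treatment of the $E'$/$\varepsilon_k$ contributions --- are exactly the missing substance of the proof.
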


\section{Proofs of the Main Results}\label{sec:main result}

Before the proof, we introduce some notations.
A table of notations is provided in \cref{app:notations}.

\paragraph{Notation.}
$\square$ denotes an indefinite constant that changes throughout the proof
and is independent of $H$, $\aX$, $\aA$, $\varepsilon$, and $\delta$.
We let $A_{\gamma, k} \df \sum_{j=0}^{k-1} \gamma^{k-j} \alpha^j$ and $A_k \df \sum_{j=0}^{k-1} \alpha^j$ for any non-negative integer $k$
with $A_\infty \df 1 / (1-\alpha)$.
$\bF_{k, m}$ denotes the $\sigma$-algebra generated by random variables $\{ y_{j, n, x, a} | (j, n, x, a) \in [k-2] \times [M] \times \XA \} \cup \{ y_{j, n, x, a} | (j, n, x, a) \in \{k-1\} \times [m-1] \times \XA \}$.
For any $k \in \{0\} \cup [K-1]$ and $v \in \R^{\aX}$, $\PVar(v)$ and $\widehat{P}_k v$ denote the functions
\begin{align}
    \textstyle 
    \PVar(v): (x, a) \mapsto (P v^2) (x, a) - ( P v )^2 (x, a)
    \text{ and }
    \widehat{P}_k v: (x, a) \mapsto \sum_{m=1}^{M} v (y_{k, m, x, a}) / M\,,
\end{align}
respectively. We often write $\sqrt{\PVar(v)}$ as $\sigma(v)$. Furthermore, $\varepsilon_k$ and $E_k$ denote ``error'' functions
\begin{align}
    \textstyle 
    \varepsilon_k: (x, a) \mapsto \gamma \widehat{P}_{k-1} v_{k-1} (x, a) - \gamma P v_{k-1} (x, a)
    \text{ and }
    E_k : (x, a) \mapsto \sum_{j=1}^k \alpha^{k-j} \varepsilon_j (x, a)\,,
\end{align}
respectively. (Note that $\varepsilon_1 = E_1 = \bzero$ since $v_0 = \bzero$.)
For a sequence of policies $(\pi_k)_{k \in \Z}$, we let $T_j^i \df T^{\pi_i} T^{\pi_{i-1}} \cdots T^{\pi_{j+1}} T^{\pi_j}$ for $i \geq j$, and $T_j^i \df I$ otherwise.
We also let $P_j^i \df P^{\pi_i} P^{\pi_{i-1}} \cdots P^{\pi_{j+1}} P^{\pi_j}$ for $i \geq j$, and $P_j^i \df I$ otherwise.
As a special case with $\pi_k = \pi_*$ for all $k$, we let $P_*^i \df (P^{\pi_*})^i$.
Finally, throughout the proof, $\iota_1$ and $\iota_2$ denotes
$
    \log (8 K \aXA / \delta)
$
and
$
    \log (16 K \aXA / \delta)
$,
repspectively.

\subsection{
    Proof of \texorpdfstring{
        \cref{theorem:non-stationary pac bound}
    }{
        Theorem~\ref{theorem:non-stationary pac bound}
    }
    (Near-optimality of the Non-stationary Policy)
}
The first step of the proof is the error propagation analysis of MDVI given below. It differs from the one of \citet{vieillard2020leverage} since ours upper-bounds $\vf{*} - \vf{\pi'_k}$. It is proven in \cref{subsec:proof of non-stationary error propagation}

\begin{lemma}\label{lemma:non-stationary error propagation}
    For any $k \in [K]$,
    $
        \bzero
        \leq
        \vf{*} - \vf{\pi'_k}
        \leq
        \Gamma_k
    $,
    where
    \begin{align}
        \Gamma_k \df \displaystyle \frac{1}{A_\infty} \sum_{j=0}^{k-1} \gamma^j \paren*{
            \pi_k P_{k-j}^{k-1} - \pi_* P_*^j
        } E_{k-j}
        + 2 H \paren*{ \alpha^k + \frac{A_{\gamma, k}}{A_\infty} } \bone\,.
    \end{align}
\end{lemma}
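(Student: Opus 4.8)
The plan is to derive the upper bound from three ingredients: an ``averaging identity'' for the sequence $(s_k)$, a performance-difference telescoping for the non-stationary policy $\pi'_k$, and a one-sided recursion for the gap $\Delta_l \df q^* - s_l/A_\infty$. The lower bound $\bzero \le \vf{*} - \vf{\pi'_k}$ is immediate since $\vf{*}$ is optimal among all (including non-stationary) policies, so only the upper bound needs work. For the averaging identity, I combine the update rules $q_{k+1} = r + \gamma P v_k + \varepsilon_{k+1}$, $v_k = w_k - \alpha w_{k-1}$, and $s_{k+1} = q_{k+1} + \alpha s_k$ with the fact that in the $\beta \to \infty$ regime $\pi_k$ is greedy w.r.t.\ $s_k$, so $w_k = \pi_k s_k$ and $P w_k = P^{\pi_k} s_k$. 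An easy induction (base case $s_1 = r$, using $v_0 = \bzero$ and $\varepsilon_1 = \bzero$) gives $s_k = A_k r + \gamma P^{\pi_{k-1}} s_{k-1} + E_k$, hence $w_k = A_k \pi_k r + \gamma\, \pi_k P\, w_{k-1} + \pi_k E_k$; dividing by $A_\infty$ and using $A_k / A_\infty = 1 - \alpha^k$ gives $w_k/A_\infty = (1-\alpha^k)\pi_k r + \gamma\,\pi_k P\,(w_{k-1}/A_\infty) + \pi_k E_k / A_\infty$.

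Next I telescope twice. First, from $\vf{\pi'_k} = \pi_k r + \gamma\, \pi_k P\, \vf{\pi'_{k-1}}$ (peel one step off $\vf{\pi'_k} = \pi_k T^{\pi_{k-1}} \cdots T^{\pi_1} \qf{\pi_0}$) and the recursion for $w_k/A_\infty$ above, subtracting and iterating down to $\vf{\pi'_0} - w_0/A_\infty = \vf{\pi_0}$ (using $w_0 = \bzero$) gives
\begin{align*}
    \vf{\pi'_k} = \frac{w_k}{A_\infty} &+ \sum_{j=0}^{k-1} \gamma^j \alpha^{k-j}\, \pi_k P_{k-j}^{k-1} r - \frac{1}{A_\infty} \sum_{j=0}^{k-1} \gamma^j\, \pi_k P_{k-j}^{k-1} E_{k-j} \\
    &+ \gamma^k\, \pi_k P_1^{k-1} P\, \vf{\pi_0} ,
\end{align*}
whose last term has sup-norm at most $\gamma^k H$. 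Second, greediness of $\pi_k$ w.r.t.\ $s_k$ gives $w_k = \pi_k s_k \ge \pi_* s_k = A_\infty\, \pi_*(q^* - \Delta_k)$, hence $\vf{*} - w_k/A_\infty \le \pi_* \Delta_k$.

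It remains to estimate $\pi_* \Delta_k$. From $q^* = r + \gamma P^{\pi_*} q^*$, the averaging identity, and $1 - A_l/A_\infty = \alpha^l$, one obtains $\Delta_l = \alpha^l r + \gamma (P^{\pi_*} - P^{\pi_{l-1}}) q^* + \gamma P^{\pi_{l-1}} \Delta_{l-1} - E_l / A_\infty$. The crucial point is that $(P^{\pi_*} - P^{\pi_{l-1}}) q^* = P (\pi_* - \pi_{l-1}) q^* \le P (\pi_* - \pi_{l-1}) \Delta_{l-1}$, because $\pi_{l-1}$ is greedy w.r.t.\ $s_{l-1} = A_\infty(q^* - \Delta_{l-1})$, so $(\pi_* - \pi_{l-1}) s_{l-1} \le \bzero$; substituting, the $\pi_{l-1}$-contributions cancel, $\gamma P(\pi_* - \pi_{l-1}) \Delta_{l-1} + \gamma P^{\pi_{l-1}} \Delta_{l-1} = \gamma P^{\pi_*} \Delta_{l-1}$, leaving $\Delta_l \le \alpha^l r + \gamma P^{\pi_*} \Delta_{l-1} - E_l / A_\infty$. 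Unrolling this with $\Delta_0 = q^*$ (since $s_0 = \bzero$) gives $\Delta_k \le \sum_{i=0}^{k-1} \gamma^i \alpha^{k-i} P_*^i r - \frac{1}{A_\infty} \sum_{i=0}^{k-1} \gamma^i P_*^i E_{k-i} + \gamma^k P_*^k q^*$; applying $\pi_*$ bounds $\pi_* \Delta_k$.

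Finally I assemble: write $\vf{*} - \vf{\pi'_k} = (\vf{*} - w_k/A_\infty) - (\text{the three sums in the display above})$, bound $\vf{*} - w_k/A_\infty \le \pi_* \Delta_k$ by the last estimate, and collect. The two $E$-sums combine into $\frac{1}{A_\infty} \sum_{j=0}^{k-1} \gamma^j (\pi_k P_{k-j}^{k-1} - \pi_* P_*^j) E_{k-j}$; the two $r$-sums combine into $\sum_{j=0}^{k-1} \gamma^j \alpha^{k-j} (\pi_* P_*^j - \pi_k P_{k-j}^{k-1}) r$, of sup-norm at most $2 \sum_{j=0}^{k-1} \gamma^j \alpha^{k-j} = 2 k \gamma^k = 2 H A_{\gamma, k}/A_\infty$ in the setting $\alpha = \gamma$ of \cref{theorem:non-stationary pac bound}; and the residual $q^*$- and $\vf{\pi_0}$-tails have sup-norm at most $\gamma^k H + \gamma^k H = 2 H \alpha^k$. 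This is exactly $\Gamma_k$. The step I expect to be the main obstacle is the ``collapse'' $\gamma P(\pi_* - \pi_{l-1}) \Delta_{l-1} + \gamma P^{\pi_{l-1}} \Delta_{l-1} = \gamma P^{\pi_*} \Delta_{l-1}$: it is what turns the raw sampling errors $E_{k-j}$ into the clean form weighted by $\pi_* P_*^j$ (errors transported purely along the optimal dynamics) rather than a tangle of mixed transition operators, and it is the heart of why the non-stationary / KL-averaged analysis is tight. The secondary difficulty is bookkeeping: making the two independent sources of $E$-terms recombine with matching indices into $\Gamma_k$'s error term, which requires careful tracking of the $\R^{\aXA}$- versus $\R^{\aX}$-valued operators ($P^\pi$, $\pi_k P$, $\pi_k P_{k-j}^{k-1}$) and repeated use of $A_k/A_\infty = 1 - \alpha^k$ and $P^{\pi_*} = P \pi_*$.
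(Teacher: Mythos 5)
Your argument is correct, and once unwound it is essentially the paper's own proof in a different normalization. Your greediness step $w_k \geq \pi_* s_k$ and the ``collapse'' $(\pi_* - \pi_{l-1}) s_{l-1} \leq \bzero$ are exactly the inequality the paper uses at step (a) of its induction for $A_k \vf{*} - w_k$ (greediness of $\pi_j$ with respect to $s_j$, applied at every level), your $\Delta$-recursion reproduces that induction after dividing by $A_\infty$ instead of carrying $A_k$, and your exact telescope for $\vf{\pi'_k} - w_k/A_\infty$ corresponds to the paper's induction for $w_k - A_k \vf{\pi'_k}$; both rest on the same averaging identity $s_k = A_k r + \gamma P w_{k-1} + E_k$. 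So the step you flag as the main obstacle is in fact the same one-line greediness inequality the paper uses, not a new difficulty.

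The one loose end is the final matching with $\Gamma_k$. As written, you justify the identification of the $r$-sum with $2 H A_{\gamma,k} / A_\infty$ only ``in the setting $\alpha = \gamma$ of \cref{theorem:non-stationary pac bound}'', and your bound $2\gamma^k H \leq 2 H \alpha^k$ on the two tails silently uses $\gamma \leq \alpha$. But the lemma is stated for arbitrary $\alpha$ and is invoked with $\alpha = 1-(1-\gamma)^2$ (through \cref{lemma:v error prop} and in \cref{lemma:coarse bound for last policy}), so this must be closed. Fortunately your residual is not merely comparable to, but identically equal to, the one in $\Gamma_k$: for all $\alpha, \gamma \in [0,1)$ one has $\sum_{j=0}^{k-1}\gamma^j \alpha^{k-j} + \gamma^k H = H\alpha^k + H A_{\gamma,k}/A_\infty$, which reduces (using $A_\infty = 1/(1-\alpha)$, $H = 1/(1-\gamma)$, and \cref{lemma:A_gamma_k bound}) to the geometric identity $(\alpha-\gamma)\sum_{i=0}^{k}\alpha^i\gamma^{k-i} = \alpha^{k+1}-\gamma^{k+1}$, with the case $\alpha=\gamma$ checked directly. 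Adding that one line makes your proof cover the general case claimed by the lemma.
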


From this result, it can be seen that an upper bound for each $E_k$ is necessary.
The following lemma provides an upper bound, which readily lead to \cref{lemma:non-stationary coarse bound} when combined with \cref{lemma:non-stationary error propagation}.
These lemmas are proven in \cref{subsec:proof of non-stationary coarse bound}.

\begin{lemma}\label{lemma:E_k bound}
    Let $\cE_1$ be the event that
    $
        \norm{E_k}_\infty
        <
        3 H \sqrt{A_\infty \iota_1 / M}
    $
    for all $k \in [K]$.
    Then, $\P \paren*{\cE_1^c} \leq \delta / 4$.
\end{lemma}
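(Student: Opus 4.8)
The plan is to control $E_k = \sum_{j=1}^k \alpha^{k-j}\varepsilon_j$ as a weighted martingale sum and apply a Hoeffding-type concentration bound, then take a union bound over $k \in [K]$ and over the $XA$ state-action pairs. First I would fix a state-action pair $(x,a)$ and observe that, by the definition of $\varepsilon_j(x,a) = \gamma \widehat P_{j-1} v_{j-1}(x,a) - \gamma P v_{j-1}(x,a)$, each $\varepsilon_j(x,a)$ is (conditionally on the $\sigma$-algebra $\bF_{j-1,1}$ generated by samples from earlier iterations) a centered average of $M$ i.i.d.\ bounded terms: $\varepsilon_j(x,a) = \gamma M^{-1}\sum_{m=1}^M \paren*{v_{j-1}(y_{j-1,m,x,a}) - (Pv_{j-1})(x,a)}$. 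The key structural point is that $v_{j-1}$ is measurable with respect to samples from iterations $0,\dots,j-2$, so $\varepsilon_j(x,a)$ is a martingale difference with respect to the filtration indexed by $j$. Hence $E_k(x,a) = \sum_{j=2}^k \alpha^{k-j}\varepsilon_j(x,a)$ (the $j=1$ term vanishes since $v_0=\bzero$) is a sum of weighted martingale differences.

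Next I would establish a crude a priori bound on $\norm{v_{j}}_\infty$ so that the martingale increments are bounded in a known range. Since $v_j = w_j - \alpha w_{j-1}$ and the $w$'s are log-sum-exp (equivalently max in the $\beta\to\infty$ limit) of partial sums $s_j = \sum_{i=0}^{j-1}\alpha^i q_{j-i}$, and since $\norm{q_{k+1}}_\infty \leq 1 + \gamma\norm{v_k}_\infty$, one gets $\norm{v_j}_\infty \leq \square H$ by an easy induction (the $1/(1-\alpha)$ factors from the partial sum cancel against the $(1-\alpha)$-type normalization hidden in $w_j - \alpha w_{j-1}$; more bluntly, $\norm{v_j}_\infty\le \norm{q_j}_\infty/(1-\gamma)\le \square H$). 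Thus each increment $\alpha^{k-j}\varepsilon_j(x,a)$ lies in an interval of width at most $2\gamma\alpha^{k-j}\norm{v_{j-1}}_\infty \le \square\, \alpha^{k-j} H$, and by the standard Hoeffding/Azuma bound for martingales with bounded increments, together with $\sum_{j} (\alpha^{k-j}H)^2 \le H^2 \sum_{i\ge 0}\alpha^{2i} \le H^2/(1-\alpha^2) \le H^2 A_\infty \cdot \square$, we get for a single $(k,x,a)$ that $\P\paren*{|E_k(x,a)| \ge t} \le 2\exp\paren*{-\square\, t^2 / (H^2 A_\infty/M)}$. Setting $t = 3H\sqrt{A_\infty\iota_1/M}$ makes the right-hand side at most $2\exp(-\square\,\iota_1)$; choosing the absolute constants so this is $\le \delta/(4KXA)$ — which is exactly what the definition $\iota_1 = \log(8KXA/\delta)$ is calibrated for — and union-bounding over the $K$ values of $k$ and $XA$ pairs yields $\P(\cE_1^c)\le \delta/4$.

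The main obstacle is making the constants line up so that $3H\sqrt{A_\infty\iota_1/M}$, rather than some larger multiple, is the valid threshold: this requires being careful about (i) the precise variance proxy in the Hoeffding bound — one should use the sub-Gaussian-with-parameter-$\sigma^2$ version where $\sigma^2 = \sum_j (\text{increment range}/2)^2$, so the $M^{-1}$ comes out correctly since each $\varepsilon_j$ is itself an $M$-fold average with range $O(H/M)$ after dividing — and (ii) the crude bound $\norm{v_j}_\infty \le \square H$ must actually be sharpened to $\norm{v_j}_\infty \le \square H$ with a small enough $\square$ (or the geometric-series constant absorbed) so the final numerical constant is $3$ and not, say, $100$. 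In practice the authors likely take the martingale increments to have range bounded by something like $2\gamma H$ per term before the $\alpha^{k-j}$ weighting and $M$-averaging, apply Azuma with variance proxy $\le 4\gamma^2 H^2 A_\infty / M$, and absorb the slack into the choice of $c_1,c_2$ downstream; I would follow the same route, stating the a priori value bound as a preliminary claim and then invoking a clean ``weighted Azuma–Hoeffding'' lemma.
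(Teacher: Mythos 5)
Your proposal matches the paper's proof: the paper likewise writes $E_k(x,a)$ as the double martingale sum $\frac{\gamma}{M}\sum_{j=1}^{k}\alpha^{k-j}\sum_{m=1}^{M}\bigl(v_{j-1}(y_{j-1,m,x,a})-Pv_{j-1}(x,a)\bigr)$ with increments bounded by $2H$ (via a separate induction lemma giving $\lVert v_{j-1}\rVert_\infty\leq H$), applies Azuma--Hoeffding with variance proxy of order $\gamma^2H^2A_\infty/M$ so that $2\sqrt{2}\gamma\leq 3$ yields exactly the stated threshold, and union-bounds over $(x,a,k)$ with $\iota_1=\log(8K X A/\delta)$ absorbing the two-sided factor. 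One caution: the clean bound $\lVert v_{j-1}\rVert_\infty\leq H$ comes from the greedy-policy sandwich $\pi_{k-1}q_k\leq v_k\leq \pi_k q_k$ together with $\lVert q_k\rVert_\infty\leq 1+\gamma\lVert v_{k-1}\rVert_\infty$, not from your ``blunter'' $\lVert v_j\rVert_\infty\leq\lVert q_j\rVert_\infty/(1-\gamma)$, which would cost an extra factor of $H$ and break the constant $3$ in the lemma.
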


\begin{lemma}\label{lemma:non-stationary coarse bound}
    Assume that $\varepsilon \in (0, 1]$.
    When \mdvi is run with the settings $\alpha$, $K$, and $M$ in \cref{theorem:non-stationary pac bound},
    under the event $\Eone$,
    its output policies $(\pi_k)_{k=0}^K$ satisfy that
    $
        \infnorm{\vf{*} - \vf{\pi'_k}}
        \leq
        2 (k + H) \gamma^k + \square \varepsilon \sqrt{H / c_2}
    $
    for all $k \in [K]$.
    Furthermore, $\infnorm{\vf{*} - \vf{\pi'_K}} \leq \sqrt{H} \varepsilon$
    for some $c_1, c_2 \geq 1$.
\end{lemma}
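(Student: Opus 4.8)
The plan is to substitute the parameter setting $\alpha=\gamma$ of \cref{theorem:non-stationary pac bound} into the error-propagation bound $\bzero \le \vf{*} - \vf{\pi'_k} \le \Gamma_k$ of \cref{lemma:non-stationary error propagation}, and then bound the two terms of $\Gamma_k$ separately, invoking \cref{lemma:E_k bound} for the random one. For the deterministic term, note that $\alpha=\gamma$ gives $A_\infty = H$ and $A_{\gamma,k} = \sum_{j=0}^{k-1}\gamma^{k-j}\gamma^{j} = k\gamma^{k}$, so $2H\paren*{\alpha^k + A_{\gamma,k}/A_\infty}\bone = 2H\paren*{\gamma^k + k\gamma^k/H}\bone = 2(k+H)\gamma^k\bone$; this cancellation (the only place $\alpha=\gamma$ is used here) is exactly what produces the $2(k+H)\gamma^k$ in the target bound.

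For the random term, observe that each of $\pi_k P_{k-j}^{k-1}$ and $\pi_* P_*^j$ is a product of row-stochastic matrices and hence non-expansive in $\infnorm{\cdot}$, so $\infnorm{\paren*{\pi_k P_{k-j}^{k-1} - \pi_* P_*^j}E_{k-j}} \le 2\infnorm{E_{k-j}}$. On the event $\Eone$ this is at most $6H\sqrt{A_\infty\iota_1/M}$ for each relevant index, and since $\frac{1}{A_\infty}\sum_{j=0}^{k-1}\gamma^j \le \frac{H}{H}=1$, the whole first term of $\Gamma_k$ is bounded in $\infnorm{\cdot}$ by $6H\sqrt{A_\infty\iota_1/M} = 6H^{3/2}\sqrt{\iota_1/M}$. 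Because $\iota_2 \ge \iota_1$ and $M \ge c_2 H^2\iota_2/\varepsilon^2$, we have $\sqrt{\iota_1/M} \le \varepsilon/(H\sqrt{c_2})$, so this term is at most $6\varepsilon\sqrt{H/c_2} = \square\,\varepsilon\sqrt{H/c_2}$. Combining the two bounds with $\bzero \le \vf{*} - \vf{\pi'_k} \le \Gamma_k$ gives $\infnorm{\vf{*} - \vf{\pi'_k}} \le 2(k+H)\gamma^k + \square\,\varepsilon\sqrt{H/c_2}$ for all $k\in[K]$, which is the first assertion.

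It then remains to pick $c_1,c_2\ge1$ so that at $k=K$ the right-hand side is at most $\sqrt{H}\varepsilon$. Choosing $c_2 \ge (2\square)^2$ makes $\square\,\varepsilon\sqrt{H/c_2} \le \tfrac12\sqrt{H}\varepsilon$. For the bias term, $-\log\gamma \ge 1-\gamma$ and $(1-\gamma)K \ge 3\log(c_1H/\varepsilon)$ give $\gamma^K \le (\varepsilon/(c_1H))^3$, while $K+H \le \square\,H\log(c_1H/\varepsilon)$ once $c_1$ is not too small; hence $2(K+H)\gamma^K \le \sqrt{H}\varepsilon\cdot\square\, c_1^{-3}\,\varepsilon^2 H^{-5/2}\log(c_1H/\varepsilon)$. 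Using that $\sup_{\varepsilon\in(0,1]}\varepsilon^2\log(1/\varepsilon)$ and $\sup_{H\ge1}H^{-5/2}\log H$ are finite universal constants, we get $\varepsilon^2 H^{-5/2}\log(c_1H/\varepsilon) \le \square(1+\log c_1)$, so $2(K+H)\gamma^K \le \sqrt{H}\varepsilon\cdot\square\,c_1^{-3}(1+\log c_1) \le \tfrac12\sqrt{H}\varepsilon$ once $c_1$ is a large enough absolute constant. Adding the two halves yields the second assertion.

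The main obstacle is this last step: the leftover exponentially small bias $2(K+H)\gamma^K$ must be shown to be dominated by $\sqrt{H}\varepsilon$ uniformly over the whole admissible range $\varepsilon\in(0,1/\sqrt H]$ and all $H\ge1$ — in particular for arbitrarily small $\varepsilon$ — which works precisely because the $\varepsilon^2$ and $H^{-5/2}$ prefactors coming from the $\gamma^K \le (\varepsilon/(c_1 H))^3$ estimate absorb the $\log(1/\varepsilon)$ and $\log H$ factors, leaving only $\log c_1$ to be beaten by $c_1^3$. Everything before it is a direct substitution into \cref{lemma:non-stationary error propagation,lemma:E_k bound} plus the non-expansiveness of stochastic matrices under $\infnorm{\cdot}$.
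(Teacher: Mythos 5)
Your proposal is correct and follows essentially the same route as the paper: substitute $\alpha=\gamma$ into \cref{lemma:non-stationary error propagation} (so $A_\infty=H$, $A_{\gamma,k}=k\gamma^k$), bound the error term via \cref{lemma:E_k bound} and the setting of $M$, and then choose $c_1,c_2$ so that $2(K+H)\gamma^K$ and the $\square\varepsilon\sqrt{H/c_2}$ term are each dominated by $\sqrt{H}\varepsilon$. The only cosmetic difference is that you re-derive the $K\gamma^K$ estimate by hand where the paper invokes \cref{lemma:k gamma to k-th inequality}.
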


Unfortunately, \cref{lemma:non-stationary coarse bound} is insufficient to show the minimax optimality of \mdvi since it only holds that $\infnorm{\vf{*} - \vf{\pi'_K}} \leq \sqrt{H} \varepsilon$ while $\cP(\cE_1) \geq 1 - \delta$.
Any other setting of $\alpha$, $\beta$, $K$, and $M$ does not seem to lead to $\infnorm{\vf{*} - \vf{\pi'_K}} \leq \varepsilon$.
Nonetheless, \cref{lemma:non-stationary coarse bound} turns out to be useful later to obtain a refined result.

To show the minimax optimality, we need to remove the extra $\sqrt{H}$ factor. The standard tools for this purpose are a Bernstein-type inequality and the total variance (TV) technique \citep{azar2013minimax},
which leverages the fact that
$
    \norm{\paren{I - \gamma P^\pi}^{-1} \sigma(\vf{\pi})}_\infty \leq \sqrt{2H^3}
$
for any policy $\pi$.
In our case, the TV technique for a non-stationary policy is required due to $\pi_k P_{k-j}^{k-1}$, though.

Recall the definition of $\varepsilon_k$ and note that its standard deviation consists of $\sigma (v_{k-1})$.
As we use a Bernstein inequality for martingale because of $E_k$, we derive an upper bound for the sum of $\sigma (v_{j-1})^2$ over $j \in [k]$
($V$ in \cref{lemma:conditional bernstein})
using the fact that
$\sigma (v_{j-1}) \approx \sigma (\vf{*})$ when $v_{j-1} \approx \vf{*}$.
To this end, the following lemma, proven in \cref{subsec:proof of v error prop}, is useful.

\begin{lemma}\label{lemma:v error prop}
    For any $k \in [K]$,
    \begin{align}
        - 2 \gamma^k H \bone - \sum_{j=0}^{k-1} \gamma^j \pi_{k-1} P_{k-j}^{k-1} \varepsilon_{k-j}
        \leq
        \vf{*} - v_k
        \leq
        \Gamma_{k-1} + 2 H \gamma^k \bone - \sum_{j=0}^{k-1} \gamma^j \pi_{k-1} P_{k-1-j}^{k-2} \varepsilon_{k-j}\,.
    \end{align}
\end{lemma}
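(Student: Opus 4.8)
The plan is to reduce the statement to an elementary two-sided ``sandwich'' for $v_k$ plus routine unrolling of the two Bellman recursions available to us. The engine is the claim that, for every $k\in[K]$,
\[
  \pi_{k-1}q_k \;\le\; v_k \;\le\; \pi_k q_k,
\]
where $\pi_j$ is the greedy policy with respect to $s_j$ (so $w_j=\max_a s_j(\cdot,a)=\pi_j s_j$, and $\pi_0$ is an arbitrary greedy policy of $s_0=\bzero$). To prove it, use $v_k=w_k-\alpha w_{k-1}$ and $s_k=q_k+\alpha s_{k-1}$: since $w_k=\pi_k s_k=\pi_k q_k+\alpha\pi_k s_{k-1}$ and $w_{k-1}=\max_a s_{k-1}(\cdot,a)\ge\pi_k s_{k-1}$, one gets $v_k=\pi_k q_k+\alpha(\pi_k s_{k-1}-w_{k-1})\le\pi_k q_k$; and since $w_k\ge\pi_{k-1}s_k$ with $\pi_{k-1}s_{k-1}=w_{k-1}$, one gets $v_k=w_k-\alpha w_{k-1}\ge\pi_{k-1}s_k-\alpha w_{k-1}=\pi_{k-1}q_k$. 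Apart from this, the proof only uses $q_j=r+\gamma P v_{j-1}+\varepsilon_j$ for $j\ge1$ (with $q_1=r$, $v_0=\bzero$, $\varepsilon_1=\bzero$) and that $P$ and every $\pi$ act monotonically and non-expansively in $\norm{\cdot}_\infty$.

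For the upper bound on $\vf{*}-v_k$, I start from $v_k\ge\pi_{k-1}q_k$ and repeatedly substitute $q_j=r+\gamma P v_{j-1}+\varepsilon_j$ followed by $v_{j-1}\ge\pi_{j-2}q_{j-1}$, down to $q_1=r$. This telescopes into
\[
  v_k \;\ge\; \pi_{k-1}T^{\pi_{k-2}}\cdots T^{\pi_0} r \;+\; \sum_{j=0}^{k-1}\gamma^j\,\pi_{k-1}P_{k-1-j}^{k-2}\varepsilon_{k-j}.
\]
Because $\qf{\pi_0}$ is the fixed point of $T^{\pi_0}$ and $r-\qf{\pi_0}=-\gamma P^{\pi_0}\qf{\pi_0}$ has $\norm{\cdot}_\infty\le\gamma H$, replacing the innermost $T^{\pi_0}r$ by $\qf{\pi_0}$ perturbs the first term by at most $\gamma^kH$ in $\norm{\cdot}_\infty$ and turns it into $\pi_{k-1}T^{\pi_{k-2}}\cdots T^{\pi_1}\qf{\pi_0}=\vf{\pi'_{k-1}}$. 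Hence $v_k\ge\vf{\pi'_{k-1}}-\gamma^kH\bone+\sum_{j=0}^{k-1}\gamma^j\pi_{k-1}P_{k-1-j}^{k-2}\varepsilon_{k-j}$, and combining this with $\vf{*}-\vf{\pi'_{k-1}}\le\Gamma_{k-1}$ from \cref{lemma:non-stationary error propagation} gives the stated upper bound (with $2H\gamma^k$ in place of $\gamma^kH$, i.e.\ with slack). The degenerate case $k=1$, where $v_1=\max_a r$ and one only needs $-\gamma H\bone\le\vf{*}-v_1\le\gamma H\bone$, is checked directly.

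For the lower bound on $\vf{*}-v_k$, I use the other side of the sandwich. From $v_k\le\pi_k q_k=\pi_k(r+\gamma P v_{k-1}+\varepsilon_k)$ and $\vf{*}\ge\pi_k\qf{\pi_*}=\pi_k(r+\gamma P\vf{*})$, subtraction gives the one-step relation
\[
  \vf{*}-v_k \;\ge\; \gamma\,\pi_k P(\vf{*}-v_{k-1}) \;-\; \pi_k\varepsilon_k .
\]
Iterating this down to $v_0=\bzero$ and bounding the leading term by $\norm{\gamma^k\pi_k P\cdots\pi_1 P\,\vf{*}}_\infty\le\gamma^kH$ (a product of stochastic matrices applied to $\vf{*}$, with $\norm{\vf{*}}_\infty\le H$) yields $\vf{*}-v_k\ge-\gamma^kH\bone-\sum_{j=0}^{k-1}\gamma^j\pi_k P_{k-j}^{k-1}\varepsilon_{k-j}$, which is the claimed lower bound (again with slack in the $\gamma^k$ term).

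I expect the only non-mechanical step to be the sandwich $\pi_{k-1}q_k\le v_k\le\pi_k q_k$, together with care about which greedy policy (greedy for $s_j$ versus for $q_j$) one is entitled to pull out of $v_k$ from each side; once that is fixed, the remaining work — identifying the telescoped first term of the upper-bound chain with $\vf{\pi'_{k-1}}$, i.e.\ accounting for the single boundary discrepancy $T^{\pi_0}r$ versus $\qf{\pi_0}$ that produces the $O(\gamma^kH)$ slack, and tracking which product $P^{\,\cdot}_{\,\cdot}$ multiplies which $\varepsilon_{k-j}$ — is routine.
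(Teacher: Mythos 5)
Your proposal is correct and follows essentially the same route as the paper: the greediness sandwich $\pi_{k-1}q_k \le v_k \le \pi_k q_k$, unrolling both sides down to $v_0=\bzero$, absorbing the boundary mismatch between the telescoped reward sum and $\vf{\pi'_{k-1}}$ (resp.\ $\vf{*}$) into a $\gamma^k H$ slack, and invoking \cref{lemma:non-stationary error propagation} for the upper bound, with your lower bound's direct iteration of $\vf{*}-v_k \ge \gamma\pi_k P(\vf{*}-v_{k-1})-\pi_k\varepsilon_k$ being only a cosmetic variant of the paper's detour through $\vf{\pi'_k}\le\vf{*}$. Note that your lower bound carries the prefix $\pi_k P_{k-j}^{k-1}$ rather than the $\pi_{k-1}P_{k-j}^{k-1}$ written in the lemma statement, but this matches what the paper's own proof produces (the statement's $\pi_{k-1}$ appears to be a typo), and it is immaterial downstream where only $\infnorm{\varepsilon_j}$ is used.
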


Combining this lemma with \cref{lemma:E_k bound} and the following one, we can obtain an upper-bound for $\sigma (v_{k-1})$.
The proofs of both results are given in \cref{subsec:proof of coarse v bound}.

\begin{lemma}\label{lemma:eps_k bound}
    Let $\cE_2$ be the event that
    $
        \norm{\varepsilon_k}_\infty
        <
        3 H \sqrt{\iota_1/M}
    $
    for all $k \in [K]$.
    Then, $\P \paren*{\cE_2^c} \leq \delta / 4$.
\end{lemma}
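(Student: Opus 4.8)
The plan is to bound $\norm{\varepsilon_k}_\infty$ for each $k \in [K]$ using a Hoeffding-type concentration inequality, and then apply a union bound over all iterations. Recall that $\varepsilon_k(x, a) = \gamma \widehat{P}_{k-1} v_{k-1}(x, a) - \gamma P v_{k-1}(x, a) = \frac{\gamma}{M} \sum_{m=1}^M \paren*{v_{k-1}(y_{k-1, m, x, a}) - (P v_{k-1})(x, a)}$. Conditioned on the $\sigma$-algebra $\bF_{k-1, 1}$ (so that $v_{k-1}$ is fixed), this is an average of $M$ i.i.d.\ mean-zero random variables, each bounded in absolute value by $\gamma \cdot 2\infnorm{v_{k-1}}$. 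So the first step is to establish a deterministic bound $\infnorm{v_{k-1}} \leq \square H$; this follows from the recursion $q_{k+1} = r + \gamma P(w_k - \alpha w_{k-1}) + \varepsilon_k$ together with $v_k = w_k - \alpha w_{k-1}$, but since $\varepsilon_k$ itself is what we're bounding, the cleanest route is an inductive argument showing both $\infnorm{q_k}$ and $\infnorm{v_k}$ stay $\cO(H)$ as long as the errors so far are bounded — or, better, to observe directly from \mdvi that $v_k(y) = M^{-1}\sum_m(\ldots)$ built from $q$'s that telescope, so a crude $\infnorm{v_k} \leq \square H$ holds unconditionally. I expect the paper has a preliminary lemma (or a line in the notation section) giving $\infnorm{v_k} \leq \square H$; assuming such a bound, each summand is bounded by $\square H$.

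The second step is the concentration bound itself. Given the deterministic bound on $v_{k-1}$, apply the Azuma--Hoeffding inequality (for i.i.d.\ or martingale-difference sequences) to the sum $\sum_{m=1}^M \paren*{v_{k-1}(y_{k-1,m,x,a}) - (P v_{k-1})(x,a)}$, conditioning on $\bF_{k-1,1}$: with probability at least $1 - \delta'$,
\begin{align}
    \abs*{\varepsilon_k(x, a)}
    \leq
    \gamma \cdot \square H \sqrt{\frac{2 \log(2/\delta')}{M}}
    \leq
    \square H \sqrt{\frac{\log(2/\delta')}{M}}\,.
\end{align}
Choosing $\delta'$ so that $\log(2/\delta') \lesssim \iota_1 = \log(8 K \aXA / \delta)$ and tuning the absolute constant so that the right-hand side is at most $3 H \sqrt{\iota_1/M}$, we get the claimed bound for a single $(k, x, a)$ triple with failure probability at most $\delta / (4 K \aXA)$ (roughly).

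The third step is the union bound. There are $K$ iterations and $\aXA$ state-action pairs, hence at most $K \aXA$ events (note $\varepsilon_1 = \bzero$ deterministically, so that iteration is free), and $\iota_1 = \log(8 K \aXA / \delta)$ is chosen precisely so that $K \aXA \cdot \exp(-\Theta(\iota_1))$ collapses to $\leq \delta/4$ after absorbing the factor-of-$8$ slack. Summing the per-triple failure probabilities gives $\P(\cE_2^c) \leq \delta/4$, as desired. This mirrors exactly the structure of the proof of \cref{lemma:E_k bound}, the difference being that there one bounds a weighted martingale sum $E_k = \sum_j \alpha^{k-j}\varepsilon_j$ (picking up the extra $\sqrt{A_\infty}$ factor from $\sum_j \alpha^{2(k-j)} \leq A_\infty$), whereas here we bound a single $\varepsilon_k$ directly, so no $A_\infty$ appears.

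The main obstacle — really the only subtlety — is making the conditioning rigorous: $v_{k-1}$ depends on all samples drawn in iterations $0, \ldots, k-2$, so it is $\bF_{k-1, 1}$-measurable, while $(y_{k-1, m, x, a})_m$ are drawn fresh and are independent of $\bF_{k-1, 1}$; one must state this measurability carefully (it is exactly why the filtration $\bF_{k, m}$ was defined in the notation section) so that, conditionally on $\bF_{k-1,1}$, the summands are genuinely i.i.d.\ mean-zero and bounded, and Hoeffding applies. Everything else is the routine choice of constants and a union bound.
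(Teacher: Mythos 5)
Your proposal is correct and follows essentially the same route as the paper: the paper writes $\varepsilon_k(x,a)$ as a sum of martingale differences bounded by $2H$ via exactly the preliminary boundedness result you anticipate (\cref{lemma:v is bounded}, proved by induction using the greediness of $\pi_{k-1}$ and $\pi_k$, giving $\infnorm{v_{k-1}}\leq H$ unconditionally), applies Azuma--Hoeffding for each fixed $(k,x,a)$, and takes a union bound over $\XA\times[K]$ with $\iota_1$ absorbing the $8K\aXA/\delta$ factor. The conditioning subtlety you flag is handled in the paper simply by invoking the martingale (filtration $(\bF_{k,m})_m$) form of Azuma--Hoeffding rather than conditional i.i.d.\ Hoeffding, which is equivalent here.
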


\begin{lemma}\label{lemma:variance upper bounds}
    Conditioned on the event $\Eone \cap \Etwo$, it holds for any $k \in [K]$ that \begin{align}
        \sigma (v_k)
        \leq
        2 H \min \brace*{
            1,
            2 \max\{\alpha, \gamma\}^{k-1}
            + \frac{A_{\gamma, k-1}}{A_\infty}
            + 6 H \sqrt{\frac{\iota_1}{M}}
        } \bone
        + \sigma (\vf{*})\,.\label{eq:variance upper bounds}
    \end{align}
    Furthermore, $\sigma (v_0) = \bzero$.
\end{lemma}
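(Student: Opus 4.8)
}
The plan is to control $\sigma(v_k)$ by comparing $v_k$ against $\vf{*}$, using that for each fixed pair $(x,a)$ the map $v \mapsto \sigma(v)(x,a) = \sqrt{\PVar(v)(x,a)}$ is the $L^2(P(\cdot\mid x,a))$-distance from $v$ to the constants, hence a seminorm on $\R^{\aX}$. Applying its triangle inequality to $v_k = \vf{*} + (v_k - \vf{*})$ gives the pointwise bound $\sigma(v_k) \le \sigma(\vf{*}) + \sigma(v_k - \vf{*})$, and since $\PVar(w)(x,a) \le (Pw^2)(x,a) \le \infnorm{w}^2$ for any $w$, this yields $\sigma(v_k) \le \sigma(\vf{*}) + \infnorm{v_k - \vf{*}}\,\bone$. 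So it suffices to prove, under $\Eone \cap \Etwo$, the two bounds $\infnorm{v_k - \vf{*}} \le 2H$ and $\infnorm{v_k - \vf{*}} \le 2H\bigl(2\max\{\alpha,\gamma\}^{k-1} + A_{\gamma,k-1}/A_\infty + 6H\sqrt{\iota_1/M}\bigr)$; taking the smaller of the two resulting bounds on $\sigma(v_k)$ gives the claim because $\min\{2H,2Hx\} = 2H\min\{1,x\}$ for $x \ge 0$. The case $k=0$ is immediate since $v_0 = \bzero$.

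The crude bound $\infnorm{v_k - \vf{*}} \le 2H$ uses no error event. I first argue $\infnorm{v_k} \le H$. From $s_k = q_k + \alpha s_{k-1}$ and $w_k(x) = \max_a s_k(x,a)$ one gets, for every $x$, $\min_a q_k(x,a) \le v_k(x) = w_k(x) - \alpha w_{k-1}(x) \le \max_a q_k(x,a)$: the upper inequality by bounding $s_{k-1}(x,\cdot) \le w_{k-1}(x)$ at the action maximizing $s_k(x,\cdot)$, the lower one by evaluating $s_k(x,\cdot)$ at the action maximizing $s_{k-1}(x,\cdot)$. Hence $\infnorm{v_k} \le \infnorm{q_k}$, and since $q_{k+1} = r + \gamma \widehat P_k v_k$ with $\widehat P_k$ (an empirical average) a non-expansion in $\infnorm{\cdot}$, induction from $q_1 = r$ gives $\infnorm{q_k} \le \sum_{j=0}^{k-1}\gamma^j \le H$. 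Combined with $\infnorm{\vf{*}} \le H$ this gives the crude bound.

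The refined bound follows from \cref{lemma:v error prop}, which sandwiches $\vf{*} - v_k$ between two explicit vectors, of which I bound the $\infnorm{\cdot}$. Invoking $\Etwo$ (so $\infnorm{\varepsilon_i} < 3H\sqrt{\iota_1/M}$ for all $i$) and using that $\pi_{k-1}$ and the $P$-products are non-expansions, each weighted sum $\sum_{j=0}^{k-1}\gamma^j \pi_{k-1}P_{k-j}^{k-1}\varepsilon_{k-j}$ (and its counterpart with $P_{k-1-j}^{k-2}$) has norm at most $3H^2\sqrt{\iota_1/M}$, and the $2H\gamma^k$ terms are at most $2H\max\{\alpha,\gamma\}^{k-1}$. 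For $\Gamma_{k-1}$ in the upper endpoint I substitute its definition from \cref{lemma:non-stationary error propagation}: its deterministic part is $2H(\alpha^{k-1} + A_{\gamma,k-1}/A_\infty)\bone$, while its stochastic part $\frac{1}{A_\infty}\sum_{j=0}^{k-2}\gamma^j(\pi_{k-1}P_{k-1-j}^{k-2} - \pi_* P_*^j)E_{k-1-j}$, under $\Eone$ (so $\infnorm{E_i} < 3H\sqrt{A_\infty\iota_1/M}$), has norm at most $\frac{1}{A_\infty}\cdot\frac{1}{1-\gamma}\cdot 2\cdot 3H\sqrt{A_\infty\iota_1/M} = \frac{6H^2}{\sqrt{A_\infty}}\sqrt{\iota_1/M} \le 6H^2\sqrt{\iota_1/M}$. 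Summing the pieces and absorbing $\alpha^{k-1},\gamma^k \le \max\{\alpha,\gamma\}^{k-1}$ gives $\infnorm{v_k - \vf{*}} \le 4H\max\{\alpha,\gamma\}^{k-1} + 2HA_{\gamma,k-1}/A_\infty + 9H^2\sqrt{\iota_1/M}$, which is at most the target (since $9 \le 12$).

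The only delicate point is the arithmetic in the last paragraph, and specifically noticing that the $1/A_\infty$ prefactor in $\Gamma_{k-1}$ cancels exactly one factor $\sqrt{A_\infty}$ from the $E_k$ bound of \cref{lemma:E_k bound}, so that the resulting $\sqrt{\iota_1/M}$ term is free of the KL weight $\alpha$; the seminorm inequality, the a-priori bound $\infnorm{v_k}\le H$, and the passage to the minimum are routine.
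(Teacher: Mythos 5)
Your proposal is correct and follows essentially the same route as the paper: the triangle inequality for $\sigma$ (the paper's \cref{lemma:variance decomposition}), a Popoviciu-type bound $\sigma(v_k-\vf{*})\le\infnorm{v_k-\vf{*}}$, and a uniform bound on $\infnorm{\vf{*}-v_k}$ obtained from \cref{lemma:v error prop} together with \cref{lemma:E_k bound,lemma:eps_k bound}. The only difference is cosmetic: you re-derive inline what the paper factors out as \cref{lemma:v is bounded} and \cref{lemma:coarse v bound} (with slightly different but sufficient constants, $9H^2\sqrt{\iota_1/M}\le 12H^2\sqrt{\iota_1/M}$), and both derivations are sound.
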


Using \cref{lemma:variance upper bounds}, we can prove refined bounds for $E_k$ and $\varepsilon_k$, as in \cref{subsec:proof of refined bound}.

\begin{lemma}\label{lemma:refined E_k bound}
    Let $\cE_3$ be the event that
    \begin{align}
        \abs{E_k} (x, a)
        <
        \frac{4 H \iota_2}{3 M} + \sqrt{2 V_k (x, a) \iota_2}
        \text{ for all }
        (x, a, k) \in \XA \times [K]\,,
    \end{align}
    where
    $
        V_k
        \df
        4 \sum_{j=1}^k \alpha^{2 (k-j)} \overline{\PVar}_j / M
    $
    with
    \begin{align}
        \overline{\PVar}_j
        \df \PVar (\vf{*})
        + 4 H^2
        \paren*{
            4 \max \brace{\alpha, \gamma}^{2{j-2}}
            + \frac{A_{\gamma, j-2}^2}{A_\infty^2}
            + \frac{36 H^2 \iota_1}{M}
        } \bone
    \end{align}
    for $k \geq 2$ and $\overline{\PVar}_1 \df \bzero$.
    Then, $\P \parenc*{\cE_3^c}{\cE_1 \cap \cE_2} \leq \delta / 4$.
\end{lemma}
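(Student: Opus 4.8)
The plan is to view $E_k(x,a)=\sum_{j=1}^k\alpha^{k-j}\varepsilon_j(x,a)$ as a martingale sum, bound it with the Bernstein-type inequality of \cref{lemma:conditional bernstein} fed with the variance control of \cref{lemma:variance upper bounds}, and then union bound over the $K\aXA$ triples $(x,a,k)$. Fix $(x,a)\in\XA$ and $k\in[K]$. Since $\varepsilon_j(x,a)=\tfrac{\gamma}{M}\sum_{m=1}^M\big(v_{j-1}(y_{j-1,m,x,a})-(Pv_{j-1})(x,a)\big)$, set $d_{j,m}\df\tfrac{\gamma\alpha^{k-j}}{M}\big(v_{j-1}(y_{j-1,m,x,a})-(Pv_{j-1})(x,a)\big)$, so $E_k(x,a)=\sum_{j=1}^k\sum_{m=1}^M d_{j,m}$ with $(j,m)$ ordered lexicographically. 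As $v_{j-1}$ depends only on samples from iterations $\le j-2$, it is $\bF_{j,m}$-measurable, while $y_{j-1,m,x,a}\sim P(\cdot\mid x,a)$ is independent of $\bF_{j,m}$; hence $\E[d_{j,m}\mid\bF_{j,m}]=0$ and $\{d_{j,m}\}$ is a martingale difference sequence.

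\cref{lemma:conditional bernstein} needs an almost-sure increment bound and a control on the predictable quadratic variation. For the former, $|d_{j,m}|\le\tfrac1M\operatorname{span}(v_{j-1})\le\tfrac{2H}{M}$ almost surely, the span bound on $v_{j-1}$ being a deterministic consequence of the MDVI recursion (together with $\infnorm{\vf{*}}\le H$; cf.\ \cref{lemma:v error prop}); this yields the additive $\tfrac{4H\iota_2}{3M}$ term. For the latter, $\E[d_{j,m}^2\mid\bF_{j,m}]=\tfrac{\gamma^2\alpha^{2(k-j)}}{M^2}\PVar(v_{j-1})(x,a)$, so summing over $m$ and $j$,
\[
  \sum_{j=1}^k\sum_{m=1}^M\E[d_{j,m}^2\mid\bF_{j,m}]
  =\frac1M\sum_{j=1}^k\gamma^2\alpha^{2(k-j)}\PVar(v_{j-1})(x,a)
  \le\frac1M\sum_{j=1}^k\alpha^{2(k-j)}\PVar(v_{j-1})(x,a).
\]
On $\cE_1\cap\cE_2$, \cref{lemma:variance upper bounds} gives $\sigma(v_{j-1})\le b_j\bone+\sigma(\vf{*})$ with $b_j\df2H\min\{1,\,2\max\{\alpha,\gamma\}^{j-2}+A_{\gamma,j-2}/A_\infty+6H\sqrt{\iota_1/M}\}$; squaring this (Young's inequality $2b_j s\le\tfrac13 b_j^2+3s^2$, then $b_j^2\le4H^2$ times the square of its three-term upper bound, then Cauchy--Schwarz) gives $\PVar(v_{j-1})\le4\,\overline{\PVar}_j$ pointwise --- this is exactly what the constants $4$, $4H^2$, $36H^2$ in $\overline{\PVar}_j$ and $\overline{\PVar}_1=\bzero$ are tuned for. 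Hence the predictable quadratic variation is at most $V_k(x,a)=\tfrac4M\sum_{j=1}^k\alpha^{2(k-j)}\overline{\PVar}_j(x,a)$ on $\cE_1\cap\cE_2$.

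The key point is that $V_k$ is a \emph{deterministic} quantity, so \cref{lemma:conditional bernstein} applies with threshold $V_k(x,a)$: for each $(x,a,k)$, with probability at least $1-2e^{-\iota_2}$ either the predictable quadratic variation exceeds $V_k(x,a)$ or $|E_k|(x,a)<\tfrac{4H\iota_2}{3M}+\sqrt{2V_k(x,a)\iota_2}$. Intersecting with $\cE_1\cap\cE_2$ removes the first case, and a union bound over the $K\aXA$ triples (using $\iota_2=\log(16K\aXA/\delta)$, so $2K\aXA\,e^{-\iota_2}=\delta/8$) gives $\P(\cE_3^c\cap\cE_1\cap\cE_2)\le\delta/8$. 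Dividing by $\P(\cE_1\cap\cE_2)\ge1-\delta/2\ge1/2$ (from \cref{lemma:E_k bound,lemma:eps_k bound}) yields $\P(\cE_3^c\mid\cE_1\cap\cE_2)\le\delta/4$.

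The main obstacle is the bookkeeping in the second paragraph: turning the additive bound $\sigma(v_{j-1})\le b_j\bone+\sigma(\vf{*})$ into the clean multiplicative bound $\PVar(v_{j-1})\le4\,\overline{\PVar}_j$ with exactly the stated constants, uniformly over $j$ (including the small-$j$ corner cases), so that the geometrically weighted sums collapse into $V_k$. The martingale construction and the union bound are routine; the only other delicate point is that the span bound on $v_{j-1}$ must be deterministic --- it cannot invoke $\cE_1$ or $\cE_2$, or the martingale property of $\{d_{j,m}\}$ would be compromised.
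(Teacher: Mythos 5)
Your proof is correct and follows essentially the same route as the paper's: the same martingale decomposition of $E_k(x,a)$ over the filtration $(\bF_{j,m})$, the same conditional Bernstein argument with the predictable quadratic variation bounded via \cref{lemma:variance upper bounds} on $\cE_1\cap\cE_2$, and the same union bound, with your Young-plus-Cauchy--Schwarz bookkeeping reproducing exactly the paper's constants (the paper gets them in one step from \cref{lemma:square inequality} with four terms). The only nitpick is a citation slip: the deterministic bound $\infnorm{v_{j-1}}\le H$ is \cref{lemma:v is bounded}, not \cref{lemma:v error prop}.
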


\begin{lemma}\label{lemma:refined eps_k bound}
    Let $\cE_4$ be the event that
    \begin{align}
        \abs{\varepsilon_k} (x, a)
        <
        \frac{4 H \iota_2}{3 M} + \sqrt{2 W_k (x, a) \iota_2}
        \text{ for all }
        (x, a, k) \in \XA \times [K]
    \end{align}
    where
    $
        W_k
        \df
        4 \overline{\PVar}_k / M
    $.
    Then, $\P \parenc*{\cE_4^c}{\cE_1 \cap \cE_2} \leq \delta / 4$.
\end{lemma}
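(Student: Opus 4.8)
The plan is to exploit that, unlike the cumulative error $E_k = \sum_{j=1}^k \alpha^{k-j}\varepsilon_j$ of \cref{lemma:refined E_k bound} (a martingale in both the iteration index $j$ and the within-iteration sample index $m$, which forces a martingale Bernstein inequality there), the single-step error
\[
  \varepsilon_k(x,a)
  =
  \gamma\paren*{\widehat{P}_{k-1}v_{k-1} - P v_{k-1}}(x,a)
  =
  \frac{\gamma}{M}\sum_{m=1}^{M}\paren*{v_{k-1}(y_{k-1,m,x,a}) - (P v_{k-1})(x,a)}
\]
is, conditionally on $\bF_{k,1}$, just a normalized sum of $M$ i.i.d.\ mean-zero random variables: $v_{k-1}$ is $\bF_{k,1}$-measurable, whereas $\{y_{k-1,m,x,a}\}_{m=1}^{M}$ are i.i.d.\ $\sim P(\cdot\,|\,x,a)$ and independent of $\bF_{k,1}$. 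Each summand has magnitude at most $\gamma\cdot 2\infnorm{v_{k-1}}/M \le 2H/M$ (using the standard bound $\infnorm{v_{k-1}}\le H$ on the iterates), and the conditional variances sum to $\gamma^2\PVar(v_{k-1})(x,a)/M$. So a plain Bernstein inequality for bounded independent variables---with no cross-iteration bookkeeping---suffices.

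Concretely, I would fix $(x,a,k)$ with $k\ge 2$ (the case $k=1$ is trivial, $\varepsilon_1 = \bzero$ and $W_1 = \bzero$), condition on $\bF_{k,1}$, and apply Bernstein with range bound $b = 2H/M$ and variance proxy $\gamma^2\PVar(v_{k-1})(x,a)/M$, obtaining that with conditional---hence, since the right-hand side is $\bF_{k,1}$-measurable, also unconditional---probability at least $1-2e^{-\iota_2}$,
\[
  \abs{\varepsilon_k}(x,a)
  <
  \frac{2}{3}\cdot\frac{2H}{M}\,\iota_2 + \sqrt{\frac{2\gamma^2\PVar(v_{k-1})(x,a)\,\iota_2}{M}}
  =
  \frac{4H\iota_2}{3M} + \sqrt{\frac{2\gamma^2\PVar(v_{k-1})(x,a)\,\iota_2}{M}}.
\]
A union bound over the $KXA$ triples $(x,a,k)\in\XA\times[K]$, with $2e^{-\iota_2} = \delta/(8KXA)$ by definition of $\iota_2$, then places this inequality simultaneously for all $(x,a,k)$ on an event $\mathcal{G}$ with $\P(\mathcal{G}^c)\le\delta/8$; crucially, $\mathcal{G}$ is built without reference to $\cE_1\cap\cE_2$.

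It then remains to replace the random $\gamma^2\PVar(v_{k-1})$ by the deterministic $W_k = 4\overline{\PVar}_k/M$. On $\cE_1\cap\cE_2$, \cref{lemma:variance upper bounds} at index $k-1$ gives $\sigma(v_{k-1}) \le \paren*{4H\max\{\alpha,\gamma\}^{k-2} + 2H A_{\gamma,k-2}/A_\infty + 12H^2\sqrt{\iota_1/M}}\bone + \sigma(\vf{*})$, and squaring this pointwise sum of four nonnegative terms and using $\paren*{\sum_{i=1}^4 t_i}^2 \le 4\sum_{i=1}^4 t_i^2$ reproduces exactly $\PVar(v_{k-1}) \le 4\overline{\PVar}_k$, so $\gamma^2\PVar(v_{k-1})/M \le \PVar(v_{k-1})/M \le W_k$ (as $\gamma<1$). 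Hence $\cE_1\cap\cE_2\cap\mathcal{G}\subseteq\cE_4$, so $\cE_4^c\cap\cE_1\cap\cE_2\subseteq\mathcal{G}^c$ and $\P\paren*{\cE_4^c\cap\cE_1\cap\cE_2} \le \P(\mathcal{G}^c)\le\delta/8$; dividing by $\P(\cE_1\cap\cE_2) \ge 1-\delta/4-\delta/4 \ge 1/2$ (by \cref{lemma:E_k bound,lemma:eps_k bound} and $\delta<1$) gives $\P\parenc*{\cE_4^c}{\cE_1\cap\cE_2} \le \delta/4$. The one point that needs care---more bookkeeping than difficulty---is this order of operations: $\cE_1\cap\cE_2$ may enter only through the deterministic variance bound of \cref{lemma:variance upper bounds} defining $W_k$, and never inside the Bernstein tail estimate, since conditioning a tail bound on $\cE_1\cap\cE_2$ would distort the law of the iteration-$(k-1)$ samples; constructing $\mathcal{G}$ first and intersecting afterwards sidesteps this. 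Modulo that, the lemma is strictly easier than its companion \cref{lemma:refined E_k bound}, which additionally requires the martingale Bernstein inequality and the telescoping variance-sum bound defining $V_k$.
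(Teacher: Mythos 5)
Your proof is correct, and it rests on the same two ingredients as the paper's: Bernstein's inequality applied to $\varepsilon_k(x,a)=\frac{\gamma}{M}\sum_{m=1}^M\bigl(v_{k-1}(y_{k-1,m,x,a})-Pv_{k-1}(x,a)\bigr)$, and the variance bound of \cref{lemma:variance upper bounds}, which on $\cE_1\cap\cE_2$ gives $\gamma^2\PVar(v_{k-1})\le 4\overline{\PVar}_k$, i.e.\ exactly $W_k$ after dividing by $M$. The difference is in the probabilistic bookkeeping. The paper treats $\varepsilon_k(x,a)$ as a martingale sum in $m$ and invokes its Conditional Bernstein inequality (\cref{lemma:conditional bernstein}), in which $\cE_1\cap\cE_2$ enters as the event guaranteeing the deterministic cap $V'\le W_k(x,a)$ on the predictable quadratic variation, and the division by $\P(\cE_1\cap\cE_2)\ge 1/2$ is absorbed into the use of $\iota_2$. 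You instead exploit that, conditionally on $\bF_{k,1}$, the summands are i.i.d.\ (since $v_{k-1}$ is $\bF_{k,1}$-measurable and the iteration-$(k-1)$ samples are independent of $\bF_{k,1}$), apply plain Bernstein with the random variance $\gamma^2\PVar(v_{k-1})(x,a)/M$ kept in the threshold, build the unconditional event $\mathcal{G}$ first, and only afterwards intersect with $\cE_1\cap\cE_2$ and divide by its probability. The two routes are equivalent in substance---the intersect-then-divide step you carry out explicitly is precisely how \cref{lemma:conditional bernstein} is proved---so what your variant buys is a self-contained argument avoiding the martingale/conditional-Bernstein machinery for this lemma (it is genuinely needed only for $E_k$ in \cref{lemma:refined E_k bound}), at the cost of redoing that conditioning argument by hand; the constants, the $k=1$ degenerate case, and the final bound $\P(\cE_4^c\mid\cE_1\cap\cE_2)\le\delta/4$ all come out the same.
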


With these lemmas, we are ready to prove \cref{theorem:non-stationary pac bound}.

\begin{proof}[Proof of \cref{theorem:non-stationary pac bound}]
    We condition the proof by $\Eone \cap \Etwo \cap \Ethree$.
    As for any events $A$ and $B$,
    $
        \P (A \cap B)
        = \P ( (A \cup B^c) \cap B)
        \geq 1 - \P (A^c \cap B) - \P (B^c)\,,
    $
    and $\P (A^c \cap B) = \P (A^c | B) \P (B) \leq \P (A^c | B)$,
    \begin{align}
        \P (\cE_1 \cap \cE_2 \cap \cE_3)
        &\geq 1 - \P (\cE_3^c | \cE_1 \cap \cE_2) - \P ((\cE_1 \cap \cE_2)^c)
        \\
        &\geq 1 - \P (\cE_3^c | \cE_1 \cap \cE_2) - \P (\cE_1^c) - \P (\cE_2^c)\,.
    \end{align}
    Therefore, from \cref{lemma:E_k bound,lemma:eps_k bound,lemma:refined E_k bound}, we conclude that
    $
        \P (\cE_1 \cap \cE_2 \cap \cE_3)
        \geq 1 - 3 \delta / 4\,.
    $
    Accordingly, any claim proven under $\cE_1 \cap \cE_2 \cap \cE_3$ holds with probability at least $1 - 3 \delta / 4$.
    
    From \cref{lemma:non-stationary error propagation},
    the setting that $\alpha = \gamma$,
    and the monotonicity of stochastic matrices,
    \begin{align}
        \vf{*} - \vf{\pi'_K}
        \leq
        \frac{1}{H} \underbrace{
            \sum_{k=0}^{K-1} \gamma^k \pi_* P_*^k \abs{E_{K-k}}
        }_{\heartsuit}
        + \frac{1}{H} \underbrace{
            \sum_{k=0}^{K-1} \gamma^k \pi_K P_{K-k}^{K-1} \abs{E_{K-k}}
        }_{\clubsuit}
        + 2 \paren*{ H + K } \gamma^K \bone\,.
    \end{align}
    As the last term is less than $\square \varepsilon / c_1$
    from \cref{lemma:k gamma to k-th inequality},
    it remains to upper-bound $\heartsuit$ and $\clubsuit$.
    We note that $A_{\infty} = H$ and $A_{\gamma, k} = k \gamma^k$ under the considered setting of $\alpha$.
    
    From the settings of $\alpha$ and $M$,
    \begin{align}
        2 V_k \iota_2
        &\leq
        \frac{\square \PVar (\vf{*}) \varepsilon^2}{c_2 H}
        + \frac{\square \varepsilon^2}{c_2} \paren[\Big]{
            k \gamma^{2(k-2)}
            + \frac{\gamma^{2(k-2)}}{H^2} \underbrace{
                \sum_{j=2}^k (j-2)^2
            }_{\leq \square k^3 \text{ from (a)}}
            + \frac{\varepsilon^2}{c_2} \underbrace{
                \sum_{j=2}^k \gamma^{2(k-j)}
            }_{\leq H}
        } \bone\,,
    \end{align}
    where (a) follows from \cref{lemma:sum of k power}.
    From this result and \cref{lemma:sqrt inequality}, it follows that
    \begin{align}
        \heartsuit
        &\leq
        \frac{\square \varepsilon^2}{c_2} \bone
        + \frac{\square \varepsilon}{\sqrt{c_2 H}} \underbrace{
            \sum_{k=0}^{K-1} \gamma^k \pi_* P_*^k \sigma (\vf{*})
        }_{\leq \sqrt{2 H^3} \bone \text{from \cref{lemma:total variance}}}
        + \frac{\square \varepsilon}{\sqrt{c_2}} \paren[\Big]{
            \gamma^{K-2} \underbrace{
                \sum_{k=1}^K \paren*{
                    \sqrt{k} + \frac{k \sqrt{k}}{H}
                }
            }_{\leq \square \paren*{K^{2.5} / H} \text{ from (a)}}
            + H \sqrt{H} \varepsilon
        } \bone\,,
    \end{align}
    where (a) follows from \cref{lemma:sum of k power}
    and that $H \leq K$.
    From \cref{lemma:k gamma to k-th inequality},
    $K^{2.5} \gamma^{K-2} / H \leq \square \varepsilon / c_1$.
    Therefore, using the inequality $\varepsilon \leq 1 / \sqrt{H} \leq 1$,
    $
        H^{-1} \heartsuit
        \leq
        \square \paren*{c_2^{-1} + c_2^{-0.5}} \varepsilon \bone
    $.
    
    Although an upper bound for $\clubsuit$ can be similarly derived,
    a care must be taken when upper-bounding $\diamondsuit \df \sum_{k=0}^{K-1} \gamma^k \pi_K P_{K-k}^{K-1} \sigma (\vf{*})$.
    From \cref{lemma:variance decomposition}, for any $k \in [K]$,
    \begin{align}
        \sigma (\vf{*})
        \leq
        \sigma (\vf{*} - \vf{\pi_k'}) + \sigma (\vf{\pi_k'})
        \leq
        2 (k+H) \gamma^k \bone
        + \square \sqrt{H / c_2} \varepsilon \bone
        + \sigma (\vf{\pi_k'})\,,
    \end{align}
    where the second inequality follows
    from \cref{lemma:popoviciu,lemma:non-stationary coarse bound}.
    Accordingly,
    \begin{align}
        \diamondsuit
        \leq
        2 \gamma^K \underbrace{
            \sum_{k=0}^{K-1} (k+H)
        }_{\leq \square K^2 \text{ from (a)}} \bone
        + \square H \sqrt{H / c_2} \varepsilon \bone
        + \underbrace{
            \sum_{k=0}^{K-1} \gamma^k \pi_K P_{K-k}^{K-1} \sigma (\vf{\pi_{K-k}'})
        }_{\leq \sqrt{2 H^3} \bone \text{ from \cref{lemma:total variance}}}
        \leq
        \square H \sqrt{H}\,,
    \end{align}
    where (a) follows from \cref{lemma:sum of k power}
    and that $H \leq K$, and the second inequality follows
    since $\varepsilon \leq 1 / \sqrt{H} \leq 1$ and
    $K^2 \gamma^K \leq \square \varepsilon / c_1$ from \cref{lemma:k gamma to k-th inequality}.
    Thus,
    $
        H^{-1} \clubsuit
        \leq
        \square \paren{c_2^{-1} + c_2^{-0.5}} \varepsilon \bone
    $.

    Combining these results, we conclude that there are constants
    $c_1$ and $c_2$ that satisfy the claim.
\end{proof}

\subsection{
    Proof of \texorpdfstring{
        \cref{theorem:pac bound}
    }{
        Theorem~\ref{theorem:pac bound}
    } (Near-optimality of the Last Policy)
}

We need the following error propagation result. Its proof is given in \cref{subsec:proof of error propagation}.

\begin{lemma}[Error Propagation of \mdvi]\label{lemma:error propagation}
    For any $k \in [K]$,
    \begin{align}
        \bzero
        \leq
        \vf{*} - \vf{\pi_k}
        &\leq
        2 H \paren*{\alpha^k + \frac{A_{\gamma, k}}{A_\infty}} \bone
        + \frac{1}{A_\infty} \paren*{\cN^{\pi_k} \pi_k - \cN^{\pi_*} \pi_*} E_k
        \\
        &\hspace{3em}+ \frac{1}{A_\infty} \sum_{j=1}^k \gamma^j\paren*{
            \cN^{\pi_*} \pi_* P_{k+1-j}^k - \cN^{\pi_k} \pi_k P_{k-j}^{k-1}
        } E_{k+1-j}'\,,\label{eq:error prop}
    \end{align}
    where $\cN^\pi \df \sum_{t=0}^\infty (\gamma \pi P)^t$
    for any policy $\pi$, and
    $E_{k+1-j}' \df \varepsilon_{k+1-j} - (1-\alpha) E_{k-j}$.
\end{lemma}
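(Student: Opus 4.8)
The lower bound $\bzero \le \vf{*} - \vf{\pi_k}$ is immediate from the optimality of $\pi_*$, so everything is in the upper bound. The strategy is a recursive argument — a ``performance-difference identity, then greedy inequality, then the fundamental recursion of \mdvi'' — run backwards from iteration $k$ down to $0$; it parallels the proof behind \cref{lemma:non-stationary error propagation}, the essential difference being that $\vf{\pi_k}$ is the infinite-horizon value $\cN^{\pi_k}\pi_k r$ rather than a finite product, which is why the resolvents $\cN^{\pi_k}$ and $\cN^{\pi_*}$ appear. The first ingredient is the fundamental recursion: unrolling $s_k = q_k + \alpha s_{k-1}$, $q_k = r + \gamma P v_{k-1} + \varepsilon_k$, $v_{k-1} = w_{k-1} - \alpha w_{k-2}$, the quantity $u_k \df s_k - \gamma P w_{k-1}$ satisfies $u_k = \alpha u_{k-1} + r + \varepsilon_k$ with $u_0 = \bzero$, hence $u_k = A_k r + E_k$; i.e.\ $s_k = \gamma P w_{k-1} + A_k r + E_k$ with $w_k = \max_a s_k(\cdot,a) = \pi_k s_k$ (as $\pi_k$ is greedy w.r.t.\ $s_k$), so $(\pi_* - \pi_k)s_k \le \bzero$.

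The recursion step goes as follows. Writing $\qf{*} = r + \gamma P\vf{*}$, a standard identity gives $\vf{*} - \vf{\pi_k} = \cN^{\pi_k}(\pi_* - \pi_k)\qf{*}$; since $(\pi_* - \pi_k)s_k \le \bzero$ we may subtract $s_k$ inside, $\vf{*} - \vf{\pi_k} \le \cN^{\pi_k}(\pi_* - \pi_k)(\qf{*} - s_k)$, and then use $\qf{*} - s_k = (1-A_k)\qf{*} - \gamma P(w_{k-1} - A_k\vf{*}) - E_k$ from the fundamental recursion. The $(1-A_k)\qf{*}$-piece regenerates $(1-A_k)(\vf{*}-\vf{\pi_k})$, so (with $1-(1-A_k)=A_k$)
\[
  A_k(\vf{*} - \vf{\pi_k}) \le \gamma\,\cN^{\pi_k}(\pi_* - \pi_k)P\bigl(A_k\vf{*} - w_{k-1}\bigr) - \cN^{\pi_k}(\pi_* - \pi_k)E_k .
\]
Now repeat on $A_k\vf{*} - w_{k-1}$: using $w_{k-1} = \pi_{k-1}s_{k-1}$, the recursion for $s_{k-1}$, the Bellman equation for $\vf{*}$, and adding/subtracting $\gamma A_k\pi_{k-1}P\vf{*}$ gives $A_k\vf{*} - w_{k-1} = A_k(\pi_*-\pi_{k-1})\qf{*} + \gamma\pi_{k-1}P(A_k\vf{*} - w_{k-2}) + \alpha^{k-1}\pi_{k-1}r - \pi_{k-1}E_{k-1}$, and once more $(\pi_*-\pi_{k-1})\qf{*} \le (\pi_*-\pi_{k-1})(\qf{*}-s_{k-1})$. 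Iterating down to $w_0 = \bzero$ expresses the bound as a leading $\tfrac1{A_k}\cN^{\pi_k}(\pi_k-\pi_*)E_k$, a sum of terms carrying the non-stationary products $\pi_k P^{\pi_{k-1}}\cdots P^{\pi_{k-j}} = \pi_k P_{k-j}^{k-1}$ applied to past errors $E_{k-j}$, and deterministic $r$- and $\vf{*}$-residues weighted by products of $\gamma$'s and $\alpha^{\cdot}$'s.

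Two clean-ups finish it. First, the resolvent identity $\cN^{\pi_k} = \cN^{\pi_*} + \gamma\cN^{\pi_k}(\pi_k-\pi_*)P\cN^{\pi_*}$ turns every $\cN^{\pi_k}\pi_*$-factor into $\cN^{\pi_*}\pi_*$ plus a correction that folds back into the $(\pi_*-\pi_k)P$-chain, so the optimal side becomes $\cN^{\pi_*}\pi_* P_{k+1-j}^k$. Second, with $E'_m \df \varepsilon_m - (1-\alpha)E_{m-1}$, which equals $E_m - E_{m-1}$ (from $E_m = \alpha E_{m-1} + \varepsilon_m$), every error $E_m$ appearing with a ``wrong'' index is rewritten as $E_k$ minus the telescoping tail $\sum_{l>m}E'_l$; regrouping leaves exactly $\tfrac1{A_\infty}(\cN^{\pi_k}\pi_k - \cN^{\pi_*}\pi_*)E_k$ together with $\tfrac1{A_\infty}\sum_{j=1}^k\gamma^j\bigl(\cN^{\pi_*}\pi_* P_{k+1-j}^k - \cN^{\pi_k}\pi_k P_{k-j}^{k-1}\bigr)E'_{k+1-j}$ — the $\tfrac1{A_k}$-versus-$\tfrac1{A_\infty}$ mismatch (of size $\alpha^k/A_k$) and all deterministic residues then being bounded, via $\infnorm{r}\le 1$, $\infnorm{\vf{*}}\le H$, $\infnorm{\gamma^j(\pi P)^j\vf{*}}\le H\gamma^j$, $A_k = A_\infty(1-\alpha^k)$ and $A_m - A_{m-1}=\alpha^{m-1}$, by $2H\bigl(\alpha^k + A_{\gamma, k}/A_\infty\bigr)\bone$.

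The main obstacle is the descent: because $\gamma\cN^{\pi_k}(\pi_*-\pi_k)P$ is not a nonnegative operator one cannot simply plug an upper bound for $A_k\vf{*}-w_{k-1}$ into it; the recursion must be carried as a chain of equalities punctuated, at every level $l$, by the greedy inequality $(\pi_*-\pi_l)s_l \le \bzero$, so that the emerging $(\pi_*-\pi_l)\qf{*}$ terms can be replaced by $(\pi_*-\pi_l)(\qf{*}-s_l)$ and the recursion re-applied. After the resolvent substitutions and the $E'$-telescoping one must then verify that each $E'_{k+1-j}$ ends up paired with precisely $P_{k+1-j}^k$ on the optimal side and $P_{k-j}^{k-1}$ on the current-policy side, and that the accumulated $r$- and $\vf{*}$-residues really do collapse into the stated $2H(\alpha^k + A_{\gamma, k}/A_\infty)$ envelope for every $\alpha\in[0,1)$.
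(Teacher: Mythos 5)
Your overall strategy differs from the paper's, and it contains a genuine gap at exactly the point you flag as ``the main obstacle.'' After your first step you arrive at
\[
  A_k\paren*{\vf{*} - \vf{\pi_k}} \;\le\; \gamma\,\cN^{\pi_k}(\pi_* - \pi_k)\,P\paren*{A_k\vf{*} - w_{k-1}} \;-\; \cN^{\pi_k}(\pi_* - \pi_k)\,E_k\,,
\]
and to descend you must process $A_k\vf{*} - w_{k-1}$ \emph{inside} the operator $\gamma\,\cN^{\pi_k}(\pi_*-\pi_k)P$, which has entries of both signs. Your own fix --- ``carry the recursion as a chain of equalities punctuated, at every level $l$, by the greedy inequality $(\pi_*-\pi_l)s_l\le\bzero$'' --- does not resolve this: the greedy inequality at level $k-1$ (and below) only becomes available after expanding $A_k\vf{*}-w_{k-1}$, i.e.\ after that term is already sitting inside the non-monotone operator accumulated from the previous level, and an inequality does not survive application of a non-monotone operator. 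So the step ``once more $(\pi_*-\pi_{k-1})\qf{*}\le(\pi_*-\pi_{k-1})(\qf{*}-s_{k-1})$'' cannot be used where you need it, and the descent stalls after one level. The subsequent clean-ups (the resolvent identity to convert $\cN^{\pi_k}\pi_*$ into $\cN^{\pi_*}\pi_*$, and the $E'_m=E_m-E_{m-1}$ telescoping that is supposed to produce the exact signed pairing of $E'_{k+1-j}$ with $P_{k+1-j}^k$ and $P_{k-j}^{k-1}$) are asserted rather than carried out, and they reintroduce further non-monotone corrections, so the stated right-hand side of \cref{lemma:error propagation} is not actually reached.

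The paper sidesteps the issue by never forming the difference operator: it writes $A_k(\vf{*}-\vf{\pi_k}) = (A_k\vf{*}-w_k) + (w_k - A_k\vf{\pi_k})$ and bounds each piece under a single nonnegative operator, $A_k\vf{*}-w_k \le \cN^{\pi_*}\pi_*\paren*{A_kr+\gamma Pw_k - s_k}$ and $w_k-A_k\vf{\pi_k} \le \cN^{\pi_k}\pi_k\paren*{w_k - A_kr - \gamma Pw_k}$ (via $I=\cN^\pi(I-\gamma\pi P)$, $\vf{\pi}=\cN^\pi\pi r$, and the greediness $w_k=\pi_k s_k\ge\pi_* s_k$), which reduces everything to $\pm\gamma P\Delta_k$ with $\Delta_k\df w_k-w_{k-1}$. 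The two-sided bound on $\Delta_k$ (\cref{lemma:delta_k bound}) is proved by a separate induction in which every operator applied is a stochastic matrix, so monotonicity is available throughout; summing the two pieces then yields the signed structure of \cref{eq:error prop} directly, with no resolvent substitution or telescoping regrouping needed. If you want to rescue your route, you would have to split $\cN^{\pi_k}(\pi_*-\pi_k)$ into its two monotone halves before recursing --- at which point you are essentially reproducing the paper's decomposition.
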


The following lemma is an analogue of \cref{lemma:non-stationary coarse bound}.
It is proven in \cref{subsec:proof of last policy coarse bound}.

\begin{lemma}\label{lemma:coarse bound for last policy}
    Assume that $\varepsilon \in (0, 1]$.
    When \mdvi is run with the settings $\alpha$, $K$, and $M$ in \cref{theorem:pac bound},
    under the event $\Eone \cap \Etwo$,
    its output policies $(\pi_k)_{k=0}^K$ satisfy that
    $
        \infnorm{\vf{*} - \vf{\pi'_k}}
        \leq
        \square H \alpha^k + \square \varepsilon \sqrt{H / c_4}
    $
    and
    $
        \infnorm{\vf{*} - \vf{\pi_k}}
        \leq
        \square H \alpha^k + \square \varepsilon \sqrt{H / c_4}
    $
    for all $k \in [K]$.
\end{lemma}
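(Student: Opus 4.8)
The plan is to turn the two error-propagation identities, \cref{lemma:error propagation} (for $\vf{*}-\vf{\pi_k}$) and \cref{lemma:non-stationary error propagation} (for $\vf{*}-\vf{\pi'_k}$), into deterministic estimates by invoking the high-probability bounds on $E_k$ and $\varepsilon_k$ that hold under $\Eone\cap\Etwo$, and then collapse the resulting expressions using the arithmetic of the specific choice $\alpha = 1-(1-\gamma)^2$. The lower bounds $\bzero\le\vf{*}-\vf{\pi_k}$ and $\bzero\le\vf{*}-\vf{\pi'_k}$ are already part of those two lemmas, so only the upper bounds need work.

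For the last policy, I would start from \eqref{eq:error prop}. Every matrix appearing there is entrywise nonnegative; moreover $\cN^{\pi}\bone = H\bone$ for any policy $\pi$ and stochastic matrices fix $\bone$, so each of $\cN^{\pi_k}\pi_k$, $\cN^{\pi_*}\pi_*$, $\cN^{\pi_*}\pi_* P_{k+1-j}^k$, $\cN^{\pi_k}\pi_k P_{k-j}^{k-1}$ maps any vector dominated by $c\bone$ into one dominated by $cH\bone$. Under $\Eone$ we have $\norm{E_k}_\infty < 3H\sqrt{A_\infty\iota_1/M}$, and under $\Eone\cap\Etwo$ the definition $E'_{k+1-j}=\varepsilon_{k+1-j}-(1-\alpha)E_{k-j}$ with \cref{lemma:eps_k bound} gives $\norm{E'_{k+1-j}}_\infty < 3H\sqrt{\iota_1/M}\,\bigl(1+(1-\alpha)\sqrt{A_\infty}\bigr)$. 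Feeding these into \eqref{eq:error prop} and using $\sum_{j\ge 1}\gamma^j\le H$ yields a bound of the form $\infnorm{\vf{*}-\vf{\pi_k}} \le 2H\alpha^k + 2H\,A_{\gamma,k}/A_\infty + \square\,(H^2/A_\infty)\bigl(1+(1-\alpha)\sqrt{A_\infty}\bigr)\sqrt{A_\infty\iota_1/M}$.

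Now substitute $\alpha = 1-(1-\gamma)^2$: then $1-\alpha = 1/H^2$, so $A_\infty = H^2$, $\sqrt{A_\infty} = H$, and $(1-\alpha)\sqrt{A_\infty} = 1/H\le 1$; also $\alpha\ge\gamma$ and $\alpha-\gamma = (1-\gamma)\gamma = \gamma/H$. Writing $\gamma^{k-j}\alpha^j = \alpha^k(\gamma/\alpha)^{k-j}$ and summing the geometric series gives $A_{\gamma,k}\le \alpha^k\,\gamma/(\alpha-\gamma) = H\alpha^k$, hence $2H\,A_{\gamma,k}/A_\infty \le 2\alpha^k/H \le 2H\alpha^k$, so the non-error part is $\le \square H\alpha^k$. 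For the error part, $(H^2/A_\infty)\sqrt{A_\infty\iota_1/M} = H\sqrt{\iota_1/M}$; since $\iota_1\le\iota_2$ and the choice in \cref{theorem:pac bound} gives $M\ge c_4 H\iota_2/\varepsilon^2$, this is $\le \varepsilon\sqrt{H/c_4}$, so the error part is $\le\square\varepsilon\sqrt{H/c_4}$. This proves the $\vf{\pi_k}$ claim. The $\vf{\pi'_k}$ claim follows the same way but is simpler: in $\Gamma_k$ the only matrices, $\pi_k P_{k-j}^{k-1}$ and $\pi_* P_*^j$, are stochastic, so $\norm{\Gamma_k - 2H(\alpha^k + A_{\gamma,k}/A_\infty)\bone}_\infty \le (2/A_\infty)\sum_{j=0}^{k-1}\gamma^j\norm{E_{k-j}}_\infty \le 6H\sqrt{\iota_1/M}\le\square\varepsilon\sqrt{H/c_4}$, and the remaining term is $\le\square H\alpha^k$ exactly as above.

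This argument is essentially bookkeeping, so the "main obstacle" is minor: it is the one place where the specific value of $\alpha$ (rather than a generic $\gamma\le\alpha<1$) is used, namely the estimate $A_{\gamma,k}\le H\alpha^k$, whose prefactor $\gamma/(\alpha-\gamma)$ equals $H$ precisely because $\alpha-\gamma = \gamma/H$ for $\alpha = 1-(1-\gamma)^2$; the same identity forces $(1-\alpha)\sqrt{A_\infty}\le 1$, which keeps the $E'$-contributions on the same order as the $\varepsilon$-contributions. The remaining care is to note that the $\iota_1$ produced by $\Eone,\Etwo$ is dominated by the $\iota_2$ in the prescribed $M$, and to keep the finitely many absolute constants consistent so that a single pair $c_3,c_4\ge 1$ works uniformly over $k\in[K]$.
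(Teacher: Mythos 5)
Your proposal is correct and follows essentially the same route as the paper's proof: both plug the sup-norm bounds on $E_k$, $\varepsilon_k$ (hence $E_k'$) from $\Eone\cap\Etwo$ into \cref{lemma:non-stationary error propagation} and \cref{lemma:error propagation}, use $\cN^\pi\bone = H\bone$ and the monotonicity of stochastic matrices, and control the deterministic part via $A_{\gamma,k}\le H\alpha^k$ (\cref{lemma:A_gamma_k bound}) with $A_\infty=H^2$. The only blemish is the intermediate claim $2H A_{\gamma,k}/A_\infty \le 2\alpha^k/H$, which should read $2\alpha^k$ (the paper's $A_{\gamma,k}/A_\infty\le\alpha^k/H$ gives exactly this); your final bound $\le 2H\alpha^k$ is unaffected.
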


Now, we are ready to prove \cref{theorem:pac bound}.

\begin{proof}[Proof of \cref{theorem:pac bound}]
    We condition the proof by $\Eone \cap \Etwo \cap \Ethree \cap \Efour$.
    Since for any events $A$ and $B$,
    $
        \P (A \cap B)
        = \P ( (A \cup B^c) \cap B)
        \geq 1 - \P (A^c \cap B) - \P (B^c)
    $,
    and $\P (A^c \cap B) = \P (A^c | B) \P (B) \leq \P (A^c | B)$,
    \begin{align}
        \P (\cE_1 \cap \cE_2 \cap \cE_3 \cap \cE_4)
        &\geq 1 - \P ( (\cE_3 \cap \cE_4)^c | \cE_1 \cap \cE_2) - \P ((\cE_1 \cap \cE_2)^c)
        \\
        &\geq 1 - \P (\cE_3^c \cup \cE_4^c | \cE_1 \cap \cE_2) - \P (\cE_1^c) - \P (\cE_2^c)
        \\
        &\geq 1 - \P (\cE_3^c | \cE_1 \cap \cE_2) - \P (\cE_4^c | \cE_1 \cap \cE_2) - \P (\cE_1^c) - \P (\cE_2^c)\,.
    \end{align}
    Therefore, from \cref{lemma:E_k bound,lemma:eps_k bound,lemma:refined E_k bound,lemma:refined eps_k bound}, we conclude that
    $
        \P (\cE_1 \cap \cE_2 \cap \cE_3 \cap \cE_4)
        \geq 1 - \delta\,.
    $
    Accordingly, any claim proven under $\cE_1 \cap \cE_2 \cap \cE_3 \cap \cE_4$ holds with probability at least $1 - \delta$.
    
    From \cref{lemma:error propagation}, the setting that $\alpha = 1 - (1-\gamma)^2$, and the monotonicity of stochastic matrices,
    \begin{align}
        \vf{*} - \vf{\pi_K}
        &\leq
        2H \paren*{ \alpha^K + \frac{2 A_{\gamma, K}}{H} }\bone
        + \frac{1}{H^2} \underbrace{
            \paren*{\cN^{\pi_K} \pi_K + \cN^{\pi_*} \pi_*} \abs{E_K}
        }_{\df \heartsuit}
        \\
        &\hspace{1em}
        + \frac{1}{H^2} \underbrace{
            \sum_{k=1}^K \gamma^k \paren*{
                \cN^{\pi_*} \pi_* P_{K+1-k}^K + \cN^{\pi_K} \pi_K P_{K-k}^{K-1}
            } \paren*{ \abs{\varepsilon_{K+1-k}} + \frac{1}{H^2} \abs{E_{K-k}}}
        }_{\df \clubsuit}\,,
    \end{align}
    where $E_0 \df \bzero$.
    The first term can be bounded by $\square \alpha^K H \leq \square \varepsilon / c_3$ from \cref{lemma:A_gamma_k bound,lemma:k gamma to k-th inequality}.
    In the sequel, we derive upper bounds for $\heartsuit$ and $\clubsuit$.
    We note that $A_\infty = H^2$ and $A_{\gamma, k} \leq \alpha^k H$.

    Next, we derive an upper bound for $\heartsuit$.
    From the settings of $\alpha (\geq \gamma)$ and $M$,
    \begin{align}
        &2 V_k \iota_2
        \leq
        \frac{\square H \PVar (\vf{*}) \varepsilon^2}{c_4}
        + \frac{\square H \varepsilon^2}{c_4} \paren[\Bigg]{
            k \alpha^{2(k-2)}
            + \frac{H^3 \varepsilon^2}{c_4}
        } \bone\,.
    \end{align}
    From this result and \cref{lemma:sqrt inequality}, it follows that
    \begin{align}
        \frac{\heartsuit}{H^2}
        \leq
        \frac{\square \varepsilon^2}{c_4 H}
        + \frac{\square \varepsilon}{H \sqrt{c_4 H}} \paren*{\cN^{\pi_K} \pi_K + \cN^{\pi_*} \pi_*} \sigma (\vf{*})
        + \frac{\square \varepsilon}{\sqrt{c_4}} \paren[\bigg]{
            \underbrace{
                \sqrt{K/H} \alpha^{K-2}
            }_{\leq \varepsilon / c_3 \text{ from (a)}}
            + \underbrace{
                H \varepsilon / \sqrt{c_4}
            }_{\leq 1 / \sqrt{c_4} \text{ from (b)}}
        } \bone\,,
    \end{align}
    where (a) follows from \cref{lemma:k gamma to k-th inequality},
    and (b) follows by the assumption that $\varepsilon \leq 1/H$.
    By \cref{lemma:total variance},
    $\cN^{\pi_*} \pi_* \sigma (\vf{*}) \leq \square \sqrt{H^3}$.
    Furthermore, from \cref{lemma:coarse bound for last policy,lemma:popoviciu},
    \begin{align}
        \cN^{\pi_K} \pi_K \sigma (\vf{*})
        \leq
        \underbrace{\square H^2 \alpha^K}_{\leq \square H \sqrt{H} \text{ from (a)}}
        + \square \varepsilon H \sqrt{H / c_4}
        + \underbrace{
            \cN^{\pi_K} \pi_K \sigma (\vf{\pi_K})
        }_{\square H \sqrt{H} \text{ from \cref{lemma:total variance}}}
        \leq
        \square H \sqrt{H} \bone\,,
    \end{align}
    where (a) follows from \cref{lemma:k gamma to k-th inequality},
    and the last inequality follows since $\varepsilon \leq 1$.
    Consequently,
    $
        H^{-2} \heartsuit
        \leq
        \square \paren*{
            1/c_4 + 1/\sqrt{c_4}
        } \varepsilon \bone
    $.
    
    As for an upper bound for $\clubsuit$, we derive upper bounds for the following two components:
    \begin{gather}
        \diamondsuit
        \df 
        \frac{1}{H^2} \sum_{k=1}^{K-1} \gamma^k \cN^{\pi_*} \pi_* P_{K+1-k}^K \abs{E_{K-k}}
        \text{ and }
        \spadesuit
        \df
        \sum_{k=1}^K \gamma^k \cN^{\pi_*} \pi_* P_{K+1-k}^K \abs{\varepsilon_{K+1-k}}\,.
    \end{gather}
    Upper bounds for
    $
    H^{-2} \sum_{k=1}^K \gamma^k \cN^{\pi_K} \pi_K P_{K-k}^{K-1} \abs{E_{K-k}}
    $
    and
    $
    \sum_{k=1}^K \gamma^k \cN^{\pi_K} \pi_K P_{K-k}^{K-1} \abs{\varepsilon_{K+1-k}}
    $
    can be similarly derived.
    
    From \cref{lemma:E_k bound},
    $
        \diamondsuit
        \leq 
        \max_{k \in [K]} \infnorm{E_j} \bone
        \leq
        \square \varepsilon \sqrt{H^3 / c_4}
    $, and thus, $H^{-2} \diamondsuit \leq \square \varepsilon / \sqrt{c_4}$.
    On the other hand, from the assumption that $\gamma \leq \alpha$,
    \begin{align}
        2 W_k \iota_2
        &\leq
        \frac{\square \varepsilon^2}{c_4 H} \PVar(\vf{*})
        + \frac{\square H \varepsilon^2}{c_4} \paren[\Bigg]{
            \alpha^{2(k-2)}
            + \frac{\varepsilon^2 H}{c_4}
        } \bone
    \end{align}
    for $k > 1$.
    Using \cref{lemma:sqrt inequality,lemma:refined eps_k bound} as well as $\gamma \leq \alpha$,
    \begin{align}
        \spadesuit
        &\leq
        \square \varepsilon \cN^{\pi_*} \pi_* \sum_{k=1}^K \gamma^k P_{K+1-k}^K \paren*{
            \frac{\varepsilon}{c_4} \bone
            + \frac{\sigma(\vf{*})}{\sqrt{c_4 H}}
            + \sqrt{ \frac{H}{c_4} } \paren[\Bigg]{
                \alpha^{K-k-2} + \varepsilon \sqrt{ \frac{H}{c_4} }
            } \bone
        }
        \\
        &\leq
        \square \varepsilon \paren[\Bigg]{
            \frac{H^2 \varepsilon}{c_4} \bone
            + \cN^{\pi_*} \pi_* \sum_{k=1}^K \gamma^k P_{K+1-k}^K \frac{\sigma(\vf{*})}{\sqrt{c_4 H}}
            + \sqrt{ \frac{H^3}{c_4} } \paren[\Bigg]{
                \underbrace{
                    K \alpha^{K-2}
                }_{
                    \leq \square \varepsilon / c_3 \text{ from (a)}
                }
                + H \varepsilon \sqrt{ \frac{H}{c_4} }
            } \bone
        }
        \\
        &\leq
        \square \varepsilon \paren[\Bigg]{
            \underbrace{
                \frac{H^2 \varepsilon}{c_4} \bone
                + \sqrt{ \frac{H^3}{c_4} } \paren[\Bigg]{
                    \frac{\varepsilon}{c_3}
                    + H \varepsilon \sqrt{ \frac{H}{c_4} }
                } \bone
            }_{\leq H^2 / \sqrt{c_4} \text{ as } \varepsilon \leq 1/H}
        }
        + \frac{\square \varepsilon}{\sqrt{c_4 H}} \cN^{\pi_*} \pi_* \sum_{k=1}^K \gamma^k P_{K+1-k}^K \sigma(\vf{*})\,.
    \end{align}
    Now, it remains to upper-bound
    $\sum_{k=1}^K \gamma^k P_{K+1-k}^K \sigma(\vf{*})$.
    From \cref{lemma:variance decomposition},
    \begin{align}
        \sigma (\vf{*})
        \leq
        \sigma (\vf{*} - \vf{\pi_k'}) + \sigma (\vf{\pi_k'})
        \leq
        \square \alpha^k H \bone
        + \square \varepsilon \sqrt{H / c_4} \bone
        + \sigma (\vf{\pi_k'})
    \end{align}
    for any $k \in [K]$, where \cref{lemma:popoviciu,lemma:coarse bound for last policy} are used.
    Consequently,
    \begin{align}
        \sum_{k=1}^K \gamma^k P_{K+1-k}^K \sigma(\vf{*})
        &\leq
        \square \sum_{k=1}^K \gamma^k P_{K+1-k}^K \paren*{
            H \alpha^{K+1-k} \bone
            + \varepsilon \sqrt{H/c_4} \bone
            + \sigma(\vf{\pi_{K+1-k}'})
        }
        \\
        &\leq
        \square \paren[\Bigg]{
            \underbrace{H K \alpha^{K+1}}_{\leq \varepsilon / c_3} \bone
            + \varepsilon \sqrt{H^3 / c_4} \bone
            + \underbrace{
                \sum_{k=1}^K \gamma^k P_{K+1-k}^K \sigma(\vf{\pi_{K+1-k}'})
            }_{\leq \square \sqrt{H^3} \bone}
        }\,,
    \end{align}
    where the second inequality follows since $\gamma \leq \alpha$.
    Consequently,
    $
        H^{-2} \spadesuit
        \leq
        \square \varepsilon / \sqrt{c_4}
    $.    
    
    Combining these inequalities, we deduce that
    $
        \vf{*} - \vf{\pi_K} \leq \square \varepsilon \paren*{ c_3^{-1} + c_4^{-0.5}} \bone
    $.
\end{proof}

\section{Empirical illustration }
\label{sec:illustrations}

\begin{wrapfigure}{r}{0.48\textwidth}
    \vspace{-25pt}
    \centering
    \includegraphics[width=0.48\textwidth]{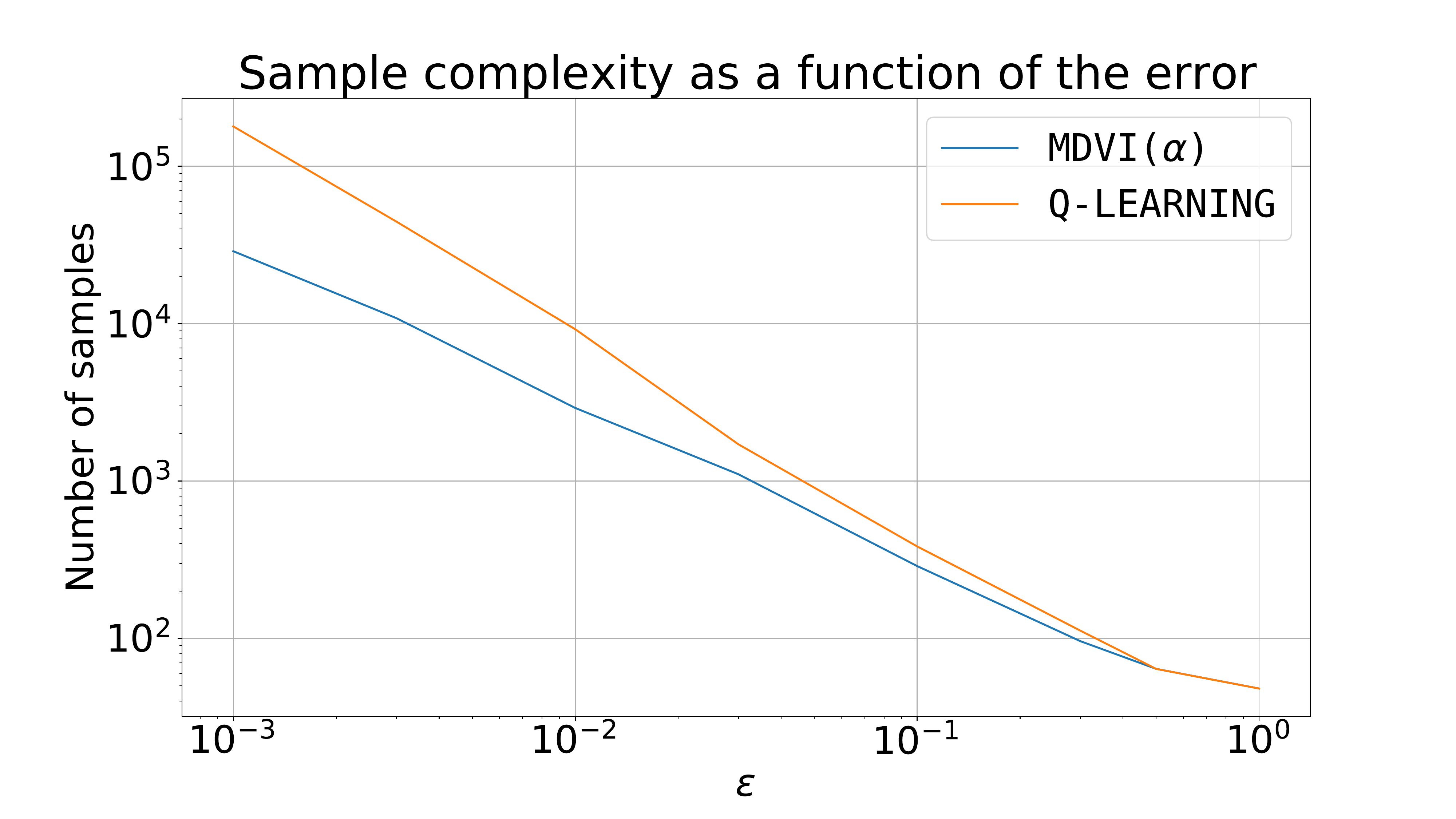}
    \caption{Sample complexities of \mdvi with $\alpha=1$ and \qlearning (synchronous version of Q-learning) on Garnets. \mdvi is run in the stationary policy setting. Both algorithms use $M=1$. As noted in \cref{sec:main result}, \mdvi with $\alpha=1$ is also nearly minimax-optimal.}
    \label{fig:sample_complexity}
\end{wrapfigure}

We compare \mdvi to a synchronous version of Q-learning (e.g., \citet{even2003learning}) in a simple setting on a class of random MDPs called Garnets~\citep{archibald1995generation}, with $\gamma=0.9$. \Cref{fig:sample_complexity} shows the sample complexity of \mdvi as a function of $\varepsilon$. We run \mdvi on 100 random MDPs, and, given $\varepsilon$, we report the number of samples $KM$ \mdvi uses to find $\varepsilon$-optimal policy. We compare this empirical sample complexity with the one of \qlearning, which has a tight quadratic dependency to the horizon \citep{li2021q} -- compared to the cubic one of \mdvi (\cref{theorem:pac bound}). \Cref{fig:sample_complexity} shows the difference in sample complexity between the two methods: especially for low $\varepsilon$, \mdvi reaches an $\varepsilon$-optimal policy with much fewer samples, up to $H=10$ times less samples for $\varepsilon=10^{-3}$. Complete details, pseudocodes, and results with other $\alpha$ are provided in \cref{app:illustrations}.

\section{Conclusion}

In this work, we considered and analyzed the sample complexity of
a model-free algorithm called MDVI \citep{geist2019theory,vieillard2020leverage}
under the generative model setting.
We showed that it is nearly minimax-optimal for finding
an $\varepsilon$-optimal policy despite its simplicity compared to
previous model-free algorithms \citep{sidford2018nearOptimal,wainwright2019variance,khamaru2021instance}.
We believe that our results are significant for the following three reasons.

First, we demonstrate the effectiveness of KL and entropy regularization.
Second, as discussed by \citet{vieillard2020leverage}, MDVI encompasses
various algorithms as special cases or equivalent forms, and
our results provide theoretical guarantees for most of them at once.
Third, MDVI uses no variance-reduction technique,
which leads to multi-epoch algorithms
and involved analyses \citep{sidford2018nearOptimal,wainwright2019variance,khamaru2021instance}.
As such, our analysis is straightforward, and it would be easy
to extend it to more complex settings.

A disadvantage of \mdvi is that its range of valid $\varepsilon$ is limited
compared to previous algorithms \citep{sidford2018nearOptimal,agarwal2020modeBased,li2020breaking}.
It is unclear if this is an artifact of our analysis or the real limitation of
MDVI-type algorithms.
We leave this topic as a future work.

\bibliography{refs}
\bibliographystyle{sty/iclr2021_conference}

\newpage

\appendix

\part{Appendix}
\parttoc
\newpage
\section{Notations}
\label{app:notations}

\begin{table}[h]
	\centering
	\caption{Table of Notations}
	\begin{tabular}{@{}l|l@{}}
		\toprule
		\thead{Notation} & \thead{Meaning} \\ \midrule
	$\A$ & action space of size $A$\\
	$H$ & effective horizon $H \df 1 / (1 - \gamma)$\\
	$P$ & transition matrix\\
	$\X$ & state space of size $X$\\
	$r$ & reward vector bounded by $1$\\
	$\gamma$ & discount factor in $[0, 1)$\\
	\midrule
	$\varepsilon$ & admissible suboptimality\\
	$\delta$ & admissible failure probability\\
	\midrule
	$E_k$ & $E_k : (x, a) \mapsto \sum_{j=1}^k \alpha^{k-j} \varepsilon_j (x, a)$\\
	$\varepsilon_k$ & $\varepsilon_k: (x, a) \mapsto \gamma \widehat{P}_{k-1} v_{k-1} (x, a) - \gamma P v_{k-1} (x, a)$\\
	\midrule
	$A_{k}, A_\infty, A_{\gamma, k}$ & $\sum_{j=0}^{k-1} \alpha^j$, $\sum_{j=0}^{\infty} \alpha^j$, $\sum_{j=0}^{k-1} \alpha^j \gamma^{k-j}$\\
	$\Eone$ & event of small $E_k$ for all $k$ (not variance-aware)\\
	$\Etwo$ & event of small $\varepsilon_k$ for all $k$ (not variance-aware)\\
	$\Ethree$ & event of small $E_k$ for all $k$ (variance-aware)\\
	$\Efour$ & event of small $\varepsilon_k$ for all $k$ (variance-aware)\\
	$\bF_{k, m}$ & $\sigma$-algebra in the filtration (cf.~\cref{sec:main result})\\
	$K$ & number of value updates \\
	$M$ & number of samples per each value update\\
	$P^\pi$ & $P^\pi \df P \pi$\\
	$P^i_j$, $P_*^i$ & $P^i_j \df P^{\pi_i} P^{\pi_{i-1}} \cdots P^{\pi_{j+1}} P^{\pi_j}$, $P_*^i \df (P^{\pi_*})^i$\\
	$T^{\pi}$, $T^i_j$ & Bellman operator for a policy $\pi$, $T^i_j \df T^{\pi_i} T^{\pi_{i-1}} \cdots T^{\pi_{j+1}} T^{\pi_j}$\\
	$V_k$ & an upper bound for $E_k$'s predictive quadratic variance (cf.~\cref{lemma:refined E_k bound}) \\
	$W_k$ & an upper bound for $\varepsilon_k$'s predictive quadratic variance (cf.~\cref{lemma:refined eps_k bound}) \\
	$s_k$ & $s_k \df q_k + \alpha s_{k-1}$ (cf.~\mdvi) \\
	$v_k$ & $v_k \df w_k - \alpha w_{k-1}$ (cf.~\mdvi) \\
    $w_k$ & $w_k (x) \df \max_{a \in \A} s_k (x, a)$ (cf.~\mdvi) \\
    $\alpha$ & $\alpha \df \tau / (\tau + \kappa)$, weight for $s_k$ updates (cf.~\mdvi and \cref{sec:equivalence proof})\\
	$\beta$ & $\beta \df 1 / (\tau + \kappa)$, inverse temperature for $\pi_k$ (cf.~\cref{sec:mdvi} and \cref{sec:equivalence proof})\\
	$\iota_1$, $\iota_2$ & $\iota_1 \df \log (8 K \aXA / \delta)$, $\iota_2 \df \log (16 K \aXA / \delta)$\\
	$\pi_k'$ & a non-stationary policy that follows $\pi_k, \pi_{k-1}, \ldots$ sequentially (cf.~\cref{sec:main result})\\
	$\square$ & an indefinite constant independent of $H$, $\aX$, $\aA$, $\varepsilon$, and $\delta$ \\
	\bottomrule
	\end{tabular}
\end{table}

\section{Equivalence of MDVI Update Rules}\label{sec:equivalence proof}

We show the equivalence of MDVI's
updates \eqref{eq:MDVI update} and \eqref{eq:policy update} to
those used in \mdvi.
We first recall MDVI's updates \eqref{eq:MDVI update} and \eqref{eq:policy update}:
\begin{gather}
    q_{k+1} = r + \gamma P^{\pi_k} \paren*{
        q_k - \tau \log \frac{\pi_k}{\pi_{k-1}} - \kappa \log \pi_k
    } + \varepsilon_k\,,
\end{gather}
\begin{align}
    \text{where }
    \pi_k \parenc*{\cdot}{x}
    =
    \argmax_{p \in \Delta(\A)}
    \sum_{a \in \A} p (a) \paren*{
        q_k (s, a) - \tau \log \frac{p (a)}{\pi_{k-1} \parenc*{a}{x}} - \kappa \log p (a)
    }
    \text{ for all } x \in \X\,,
\end{align}

The policy update \eqref{eq:policy update} can be rewritten as follows (e.g., Equation~(5) of \citet{kozuno2019theoretical}):
\begin{gather}
    \pi_k (a|x)
    =
    \frac{
        \pi_{k-1} (a|x)^\alpha \exp \paren*{ \beta q_k (x, a) }
    }{
        \sum_{b \in \A} \pi_{k-1} (b|x)^\alpha \exp \paren*{ \beta q_k (x, b) }
    }\,,
\end{gather}
where $\alpha \df \tau / (\tau + \kappa)$, and $\beta \df 1 / (\tau + \kappa)$.
It can be further rewritten as, defining $s_k = q_k + \alpha s_{k-1}$
\begin{gather}
    \pi_k (a|x)
    =
    \frac{
        \exp \paren*{ \beta s_k (x, a) }
    }{
        \sum_{b \in \A} \exp \paren*{ \beta s_k (x, b) }
    }\,.
\end{gather}
Plugging in this policy expression to $v_k$, we deduce that
\begin{align}
    v_k (x)
    &=
    \frac{1}{\beta} \log \sum_{a \in \A} \exp \paren*{ \beta q_k (x, a) + \alpha\log \pi_{k-1} (a|x) }
    \\
    &=
    \frac{1}{\beta} \log \sum_{a \in \A} \exp \paren*{ \beta s_k (x, a) }
    -
    \frac{\alpha}{\beta} \log \sum_{a \in \A} \exp \paren*{ \beta s_{k-1} (x, a) }\,.
\end{align}
\citet[Appendix~B]{kozuno2019theoretical} show that when $\beta \to \infty$,
$
    v_k (x)
    =
    w_k (x) - \alpha w_{k-1} (x)\,.
$
Furthermore, the Boltzmann policy becomes a greedy policy.
Accordingly, the update rules used in \mdvi is a limit case of the original MDVI updates.
\section{Auxiliary Lemmas}

In this appendix, we prove some auxiliary lemmas used in the proof.

\begin{lemma}\label{lemma:sqrt inequality}
    For any positive real values $a$ and $b$, $\sqrt{a + b} \leq \sqrt{a} + \sqrt{b}$.
\end{lemma}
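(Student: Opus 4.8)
The statement is the elementary subadditivity of the square root, so the proof is a one-line squaring argument; there is no real obstacle, only a need to be careful that all quantities are non-negative so that squaring is monotone.

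The plan is to start from the right-hand side and square it: since $a, b > 0$, both $\sqrt{a}$ and $\sqrt{b}$ are well-defined non-negative reals, and
\begin{align}
    \paren*{\sqrt{a} + \sqrt{b}}^2
    =
    a + 2 \sqrt{a} \sqrt{b} + b
    \geq
    a + b\,,
\end{align}
where the inequality uses $2\sqrt{a}\sqrt{b} \geq 0$. Next I would observe that both $\sqrt{a+b}$ and $\sqrt{a} + \sqrt{b}$ are non-negative, so that the map $t \mapsto t^2$ is monotone on the relevant range; taking square roots of the displayed inequality $a + b \leq (\sqrt{a} + \sqrt{b})^2$ therefore preserves the direction and yields $\sqrt{a+b} \leq \sqrt{a} + \sqrt{b}$, which is the claim.

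The only thing to watch is that the argument as written also goes through verbatim for $a = 0$ or $b = 0$ (and even for $a, b \geq 0$ rather than strictly positive), so no case analysis is needed; the hypothesis of positivity is only there to make $\sqrt{a}, \sqrt{b}$ unambiguous. I do not anticipate any difficulty here — this lemma is stated purely so it can be cited when splitting $\sqrt{2 V_k \iota_2}$ and similar expressions into a sum of simpler square-root terms in the main proofs.
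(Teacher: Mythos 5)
Your proof is correct and follows essentially the same route as the paper's: compare $a+b$ with $(\sqrt{a}+\sqrt{b})^2 = a + 2\sqrt{ab} + b$ and use the non-negativity of the cross term. The extra remark about monotonicity of squaring on non-negatives is just making explicit what the paper leaves implicit.
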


\begin{proof}
    Indeed, $a + b \leq a + 2 \sqrt{ab} + b = (\sqrt{a} + \sqrt{b})^2$.
\end{proof}

\begin{lemma}\label{lemma:square inequality}
    For any real values $(a_n)_{n=1}^N$, $(\sum_{n=1}^N a_n)^2 \leq N \sum_{n=1}^N a_n^2$.
\end{lemma}

\begin{proof}
    Indeed, from the Cauchy–Schwarz inequality,
    \begin{align}
        \paren*{ \sum_{n=1}^N a_n \cdot 1 }^2
        \leq
        \paren*{ \sum_{n=1}^N 1 } \paren*{ \sum_{n=1}^N a_n^2 }
        =
        N \sum_{n=1}^N a_n^2\,,
    \end{align}
    which is the desired result.
\end{proof}

\begin{lemma}\label{lemma:A_gamma_k bound}
    For any $k \in [K]$,
    \begin{align}
        A_{\gamma, k}
        =
        \begin{cases}
            \gamma \dfrac{\alpha^k - \gamma^k}{\alpha - \gamma} & \text{if } \alpha \neq \gamma
            \\
            k \gamma^k & \text{otherwise}
        \end{cases}\,.
    \end{align}
\end{lemma}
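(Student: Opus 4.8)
The plan is to prove the identity by a short induction on $k \in [K]$; it also follows from the finite geometric series formula, but I would favor induction because it handles the edge cases $\alpha = 0$ or $\gamma = 0$ without any special casing. The starting point is the one-step recursion obtained by peeling off the $j=0$ summand in the definition $A_{\gamma,k} = \sum_{j=0}^{k-1}\alpha^j\gamma^{k-j}$ and re-indexing: for every $k \geq 1$,
\begin{align}
    A_{\gamma,k+1}
    = \gamma^{k+1} + \sum_{j=1}^{k}\alpha^j\gamma^{k+1-j}
    = \gamma^{k+1} + \alpha\sum_{i=0}^{k-1}\alpha^i\gamma^{k-i}
    = \gamma^{k+1} + \alpha A_{\gamma,k}\,.
\end{align}

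For the base case $k=1$ one has $A_{\gamma,1} = \alpha^0\gamma^1 = \gamma$, which matches both branches of the claimed formula ($\gamma\frac{\alpha-\gamma}{\alpha-\gamma}=\gamma$ when $\alpha\neq\gamma$, and $1\cdot\gamma^1$ when $\alpha=\gamma$). For the inductive step, assume the formula holds for some $k\geq 1$. If $\alpha=\gamma$, then the recursion gives $A_{\gamma,k+1} = \gamma^{k+1} + \gamma\cdot k\gamma^{k} = (k+1)\gamma^{k+1}$. If $\alpha\neq\gamma$, then by the recursion and the inductive hypothesis,
\begin{align}
    A_{\gamma,k+1}
    = \gamma^{k+1} + \alpha\gamma\,\frac{\alpha^{k}-\gamma^{k}}{\alpha-\gamma}
    = \frac{\gamma^{k+1}(\alpha-\gamma) + \gamma\alpha^{k+1} - \alpha\gamma^{k+1}}{\alpha-\gamma}
    = \frac{\gamma\alpha^{k+1} - \gamma^{k+2}}{\alpha-\gamma}
    = \gamma\,\frac{\alpha^{k+1}-\gamma^{k+1}}{\alpha-\gamma}\,,
\end{align}
which closes the induction.

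There is essentially no obstacle here: the whole argument is a routine computation. The only point that needs a sliver of care is to resist "simplifying" the $\alpha\neq\gamma$ expression by factoring out $\gamma^k$ and invoking $\sum_{j=0}^{k-1}(\alpha/\gamma)^j$, as one would in the most direct geometric-series proof, since $\gamma$ is only assumed to lie in $[0,1)$ and may equal $0$; the inductive proof above sidesteps this entirely. The resulting closed form is what lets the later arguments convert the bias-type term $2H\bigl(\alpha^k + A_{\gamma,k}/A_\infty\bigr)$ from the error-propagation lemmas into an explicit geometric decay, e.g.\ reading off $A_{\gamma,k}=k\gamma^k$ when $\alpha=\gamma$ and $A_{\gamma,k}\le \alpha^k H$ when $\gamma\le\alpha$.
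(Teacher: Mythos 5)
Your proof is correct. You prove the identity by induction on $k$, using the one-step recursion $A_{\gamma,k+1} = \gamma^{k+1} + \alpha A_{\gamma,k}$ and verifying the base case $A_{\gamma,1}=\gamma$; the algebra in both branches of the inductive step checks out. The paper instead evaluates the sum directly as a finite geometric series, writing $A_{\gamma,k} = \gamma^k \frac{(\alpha/\gamma)^k - 1}{(\alpha/\gamma) - 1} = \gamma\frac{\alpha^k-\gamma^k}{\alpha-\gamma}$ for $\alpha\neq\gamma$ and reading off $k\gamma^k$ for $\alpha=\gamma$ by definition. The only substantive difference is the one you yourself flag: the paper's intermediate expression divides by $\gamma$ (via the ratio $\alpha/\gamma$), which is ill-formed in the degenerate case $\gamma=0$ even though the final closed form still holds there, whereas your induction never performs that division. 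Since in this paper $\gamma$ is a discount factor taken close to $1$ in all the theorems, that edge case is immaterial, so the two proofs are interchangeable; yours is a hair more robust, the paper's is a hair shorter.
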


\begin{proof}
    Indeed, if $\alpha \neq \gamma$
    \begin{align}
        A_{\gamma, k}
        =
        \sum_{j=0}^{k-1} \alpha^j \gamma^{k-j}
        =
        \gamma^k \frac{(\alpha / \gamma)^k - 1}{(\alpha / \gamma) - 1}
        =
        \gamma \frac{\alpha^k - \gamma^k}{\alpha - \gamma}\,.
    \end{align}
    If $\alpha = \gamma$, $A_{\gamma, k} = k \gamma^k$ by definition.
\end{proof}

\begin{lemma}\label{lemma:log reciprocal inequality}
    For any real value $x \in (0, 1]$, $1 - x \leq \log (1/x)$.
\end{lemma}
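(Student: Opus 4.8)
The plan is to reduce the statement to the standard concavity bound $\log z \le z - 1$, which holds for all $z > 0$. First I would recall (or reprove in one line) this fact: setting $g(z) \df z - 1 - \log z$ on $(0,\infty)$, we have $g(1) = 0$ and $g'(z) = 1 - 1/z$, so $g$ is decreasing on $(0,1]$ and increasing on $[1,\infty)$; hence $g(z) \ge g(1) = 0$ everywhere, i.e. $\log z \le z - 1$. (Equivalently, $\log$ is concave and lies below its tangent line at $z=1$.)

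Then I would instantiate this at $z = x$ for $x \in (0,1]$: from $\log x \le x - 1$ we obtain $1 - x \le -\log x = \log(1/x)$, which is exactly the claim. Note the restriction $x \in (0,1]$ is not needed for the algebra; it is simply all that is used later in the paper.

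Alternatively, and perhaps more self-contained, I would argue directly from the integral representation $\log(1/x) = \int_x^1 t^{-1}\,dt$: since $t^{-1} \ge 1$ for every $t \in [x,1] \subseteq (0,1]$, the integrand dominates $1$, so $\log(1/x) \ge \int_x^1 1\,dt = 1 - x$. Either route works; I would pick whichever keeps the write-up shortest.

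I expect no genuine obstacle here: the statement is an elementary one-line calculus inequality, included only because it is invoked repeatedly when converting logarithmic terms (e.g.\ replacing $\log(1/\gamma)$ by $1-\gamma = 1/H$) inside the sample-complexity computations. The only real choice is presentational---monotonicity of the auxiliary function $g$ versus the integral comparison---and the proof in either form is essentially immediate.
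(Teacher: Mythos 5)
Your proposal is correct and rests on the same idea as the paper's proof: the paper invokes convexity of $\log(1/x)$ and its supporting line at $y=1$, which is exactly the first-order bound $\log z \le z-1$ that you prove via the auxiliary function $g$ (your integral comparison is just a minor variant of the same elementary fact). Nothing is missing; either write-up suffices.
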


\begin{proof}
    Since $\log (1/x)$ is convex and differentiable, $\log (1/x) \geq \log (1/y) - (x - y) / y$. Choosing $y=1$, we concludes the proof.
\end{proof}

\begin{lemma}\label{lemma:k gamma to k-th inequality}
    Suppose $\alpha, \gamma \in [0, 1)$, $\varepsilon \in (0, 1]$, $c \in [1, \infty)$, $m \in \N$, and $n \in [0, \infty)$.
    Let $K \df \dfrac{m}{1-\alpha} \log \dfrac{c H}{\varepsilon}$.
    Then,
    \begin{align}
        K^n \alpha^K
        \leq
        \paren*{\frac{mn}{(1 - \alpha)e}}^n
        \paren*{\dfrac{\varepsilon}{c H}}^{m-1}\,.
    \end{align}
\end{lemma}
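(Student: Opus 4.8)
The plan is to isolate $\alpha^K$ and convert the exponential decay into the bound $(\varepsilon/(cH))^{m-1}$, using the remaining factor of the decay to kill the polynomial term $K^n$. First I would write $\alpha = 1 - (1-\alpha)$ and use the elementary inequality $\log(1/\alpha) = \log\frac{1}{1-(1-\alpha)} \geq 1-\alpha$ (which is Lemma~\ref{lemma:log reciprocal inequality} applied to $x = \alpha$, noting $\alpha \in [0,1)$, so actually $1-\alpha \le \log(1/\alpha)$ directly). Hence $\alpha^K = \exp(-K \log(1/\alpha)) \leq \exp(-K(1-\alpha))$. Substituting $K = \frac{m}{1-\alpha}\log\frac{cH}{\varepsilon}$ gives $\alpha^K \leq \exp(-m \log\frac{cH}{\varepsilon}) = (\varepsilon/(cH))^m$.

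Next I would split this as $(\varepsilon/(cH))^m = (\varepsilon/(cH))^{m-1} \cdot (\varepsilon/(cH))$ — keeping the factor $(\varepsilon/(cH))^{m-1}$ for the final bound and using the extra factor $(\varepsilon/(cH))$, rewritten via $\varepsilon/(cH) = \exp(-\log\frac{cH}{\varepsilon})$, to absorb $K^n$. Concretely, $K^n (\varepsilon/(cH)) = K^n \exp(-(1-\alpha)K/m)$ after using $\log\frac{cH}{\varepsilon} = (1-\alpha)K/m$. Now the key step is the calculus fact that $t^n e^{-\lambda t} \leq (n/(\lambda e))^n$ for all $t \geq 0$, $\lambda > 0$ (the maximum of $t^n e^{-\lambda t}$ is attained at $t = n/\lambda$). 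Applying this with $t = K$ and $\lambda = (1-\alpha)/m$ yields $K^n \exp(-(1-\alpha)K/m) \leq \left(\frac{mn}{(1-\alpha)e}\right)^n$. Multiplying back by the retained factor $(\varepsilon/(cH))^{m-1}$ gives the claimed bound.

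The only mildly delicate points are: handling $n = 0$ (the statement is trivial then, since the claimed constant $(0/((1-\alpha)e))^0 = 1$ and $\alpha^K \le (\varepsilon/(cH))^{m-1}$ follows from $\alpha^K \le (\varepsilon/(cH))^m \le (\varepsilon/(cH))^{m-1}$ because $\varepsilon/(cH) \le 1$); and handling $\alpha = 0$, where $\log(1/\alpha)$ is infinite but $\alpha^K = 0$ for $K \geq 1$ makes everything vacuous. I expect the main (though still routine) obstacle to be the $t^n e^{-\lambda t}$ maximization — it requires either a one-line derivative computation or an appeal to the standard inequality $e^x \geq (x/n)^n \cdot (\text{something})$; I would just differentiate $t \mapsto n \log t - \lambda t$, find the critical point $t^\star = n/\lambda$, and plug in to get the maximum value $n^n \lambda^{-n} e^{-n} = (n/(\lambda e))^n$. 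Everything else is algebraic bookkeeping with the substitution $K(1-\alpha)/m = \log(cH/\varepsilon)$.
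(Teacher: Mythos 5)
Your proposal is correct and takes essentially the same approach as the paper: both use the inequality $1-\alpha \le \log(1/\alpha)$ to get $\alpha^K \le (\varepsilon/(cH))^m$, then absorb $K^n$ with one reserved factor of $\varepsilon/(cH)$ via a one-variable calculus maximization giving the constant $\bigl(\tfrac{mn}{(1-\alpha)e}\bigr)^n$. Your maximization of $t^n e^{-\lambda t}$ over $t=K$ is merely a change of variables away from the paper's maximization of $x\bigl(\log(1/x)\bigr)^n$ over $x=\varepsilon/(cH)$, so the arguments coincide.
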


\begin{proof}
    Using \cref{lemma:log reciprocal inequality},
    \begin{align}
        K
        = \frac{m}{1-\alpha} \log \dfrac{c H}{\varepsilon}
        \geq \log_\alpha \paren*{\dfrac{\varepsilon}{c H}}^m.
    \end{align}
    Therefore,
    \begin{align}
        K^n \alpha^K
        \leq
        \paren*{\frac{m}{1 - \alpha} \log \dfrac{c H}{\varepsilon}}^n \paren*{\dfrac{\varepsilon}{c H}}^m
        =
        \frac{m^n}{(1 - \alpha)^n}
        \paren*{\dfrac{\varepsilon}{c H}}^m
        \paren*{\log \dfrac{c H}{\varepsilon}}^n\,.
    \end{align}
    Since $x \paren*{\log \dfrac{1}{x}}^n \leq \paren*{\dfrac{n}{e}}^n$ for any $x \in (0, 1]$ as shown later,
    \begin{align}
        K^n \alpha^K
        \leq
        \paren*{\frac{mn}{(1 - \alpha)e}}^n
        \paren*{\dfrac{\varepsilon}{c H}}^{m-1}\,.
    \end{align}
    
    Now it remains to show $f (x) \df x \paren*{\log \dfrac{1}{x}}^n \leq \paren*{\dfrac{n}{e}}^n$ for $x < 1$.
    We have that
    \begin{align}
        f'(x) = (- \log x)^n - n (- \log x)^{n-1}
        \implies
        f'(x) = 0
        \text{ at }
        x = e^{-n}.
    \end{align}
    Therefore, $f$ takes its maximum $\paren*{\dfrac{n}{e}}^n$ at $e^{-n}$ when $x \in (0, 1)$.
\end{proof}

The following lemma is a special case of a well-known inequality that
for any increasing function $f$
\begin{align}
    \sum_{k=1}^K f (k) \leq \int_1^{K+1} f(x) dx\,.
\end{align}

\begin{lemma}\label{lemma:sum of k power}
    For any $K \in \N$ and $n \in [0, \infty)$,
    $
        \displaystyle \sum_{k=1}^K k^n \leq \frac{1}{n+1} (K+1)^{n+1}
    $.
\end{lemma}

\section{Tools from Probability Theory}

We extensively use the following two concentration inequalities. The first one is Azuma-Hoeffding inequality \citep{azuma1967weighted,hoeffding1963probability,boucheron2013concentration}, and the second one is Bernstein's inequality \citep{bernstein1946theory,boucheron2013concentration} for a martingale \citep[Excercises 5.14 (f)]{lattimore2020bandit}. For a real-valued stochastic process $(X_n)_{n=1}^N$ adapted to a filtration $(\cF_n)_{n=1}^N$, we let $\E_n [X_n] \df \E \brackc{X_n}{\cF_{n-1}}$ for $n \geq 1$, and $\E_1 [X_1] \df \E \brack{X_1}$.

\begin{lemma}[Azuma-Hoeffding Inequality]\label{lemma:hoeffding}
    Consider a real-valued stochastic process $(X_n)_{n=1}^N$ adapted to a filtration $(\cF_n)_{n=1}^N$.
    Assume that $X_n \in [l_n, u_n]$ and $\E_n [X_n] = 0$ almost surely, for all $n$. Then,
	\begin{align}
	\P \paren*{
      \sum_{n=1}^N X_n
      \geq
      \sqrt{
        \sum_{n=1}^N \frac{(u_n - l_n)^2}{2} \log \frac{1}{\delta}
      }
    }
    \leq \delta
	\end{align}
	for any $\delta \in (0, 1)$.
\end{lemma}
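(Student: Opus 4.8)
The plan is to run the standard Chernoff/exponential-moment argument adapted to the martingale-difference setting. First I would fix $\lambda > 0$ and consider the exponential supermartingale $Z_n \df \exp\paren*{\lambda \sum_{i=1}^n X_i}$. The goal is to show $\E[Z_N] \leq \exp\paren*{\lambda^2 \sum_{n=1}^N (u_n - l_n)^2 / 8}$, after which Markov's inequality applied to $Z_N$ and an optimization over $\lambda$ will give the stated bound.

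The key step is the conditional one-dimensional estimate: for each $n$, since $\E_n[X_n] = 0$ and $X_n \in [l_n, u_n]$ almost surely, Hoeffding's lemma (the scalar version, proven by convexity of $x \mapsto e^{\lambda x}$ on $[l_n, u_n]$ together with a second-order Taylor bound on the log-moment-generating function of a mean-zero $[l_n,u_n]$-valued variable) yields
\begin{align}
  \E_n\brack*{e^{\lambda X_n}}
  \leq
  \exp\paren*{\frac{\lambda^2 (u_n - l_n)^2}{8}}
  \quad\text{almost surely.}
\end{align}
Then I would peel off the terms of $Z_N$ one at a time using the tower rule: conditioning on $\cF_{N-1}$ and pulling out the $\cF_{N-1}$-measurable factor $\exp\paren*{\lambda \sum_{i=1}^{N-1} X_i}$, the remaining conditional expectation is controlled by the display above, and iterating down to $n=1$ gives $\E[Z_N] \leq \exp\paren*{\lambda^2 \sum_{n=1}^N (u_n-l_n)^2 / 8}$.

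To finish, Markov's inequality gives, for any $t > 0$,
\begin{align}
  \P\paren*{\sum_{n=1}^N X_n \geq t}
  \leq
  e^{-\lambda t}\,\E[Z_N]
  \leq
  \exp\paren*{-\lambda t + \frac{\lambda^2}{8}\sum_{n=1}^N (u_n - l_n)^2}.
\end{align}
Optimizing over $\lambda$ (taking $\lambda = 4t / \sum_n (u_n-l_n)^2$) yields the sub-Gaussian tail $\P(\sum_n X_n \geq t) \leq \exp\paren*{-2t^2 / \sum_n (u_n-l_n)^2}$, and setting this equal to $\delta$ and solving for $t$ recovers $t = \sqrt{\tfrac12 \sum_n (u_n - l_n)^2 \log(1/\delta)}$, which is exactly the claimed threshold.

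I do not anticipate a genuine obstacle here, since this is a textbook result; the only point requiring a little care is the scalar Hoeffding lemma (the $(u_n-l_n)^2/8$ bound on the conditional MGF), which must be invoked in its conditional form — one needs that it holds almost surely given $\cF_{n-1}$, not merely in expectation. Alternatively, one could simply cite \citet{boucheron2013concentration} for both the scalar lemma and the martingale extension and skip the peeling argument entirely.
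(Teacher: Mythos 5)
Your argument is correct: it is the standard Chernoff--Hoeffding proof (conditional Hoeffding lemma, tower-property peeling, Markov's inequality, optimization over $\lambda$), and the constant works out to exactly the stated threshold. The paper itself gives no proof of this lemma, citing \citet{azuma1967weighted}, \citet{hoeffding1963probability}, and \citet{boucheron2013concentration} instead, and your sketch is precisely the argument contained in those references, so there is nothing to reconcile.
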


\begin{lemma}[Bernstein's Inequality]\label{lemma:bernstein}
	Consider a real-valued stochastic process $(X_n)_{n=1}^N$ adapted to a filtration $(\cF_n)_{n=1}^N$.
    Suppose that $X_n \leq U$ and $\E_n [X_n] = 0$ almost surely, for all $n$.
    Then, letting $V' \df \sum_{n=1}^N \E_n[X_n^2]$,
	\begin{align}
        \P \paren*{
            \sum_{n=1}^N X_n
            \geq
            \frac{2 U}{3} \log \frac{1}{\delta}
            + \sqrt{ 2 V \log \frac{1}{\delta} }
            \text{ and }
            V' \leq V
        } \leq \delta
	\end{align}
	for any $V \in [0, \infty)$ and $\delta \in (0, 1)$.
\end{lemma}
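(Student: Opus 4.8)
The plan is to prove this (one-sided) Bernstein inequality by the standard exponential-supermartingale (Freedman-type) argument, with the one extra twist being the conjunction with $\{V' \le V\}$; since $V$ is a \emph{fixed deterministic} constant and not a random quantity, this twist is handled with essentially no extra work (no stopping time, no localization over a grid of $V$'s), which is the only feature distinguishing the statement from textbook Bernstein.

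The first step is the one-step moment-generating-function bound: for every $\lambda \in [0, 3/U)$ and every $n$,
\[
    \E_n\!\left[ e^{\lambda X_n} \right]
    \le
    \exp\!\left( \frac{(\lambda^2/2)\,\E_n[X_n^2]}{1 - \lambda U/3} \right)\,.
\]
This rests on two elementary facts. Writing $\phi(x) \df e^x - 1 - x$, the map $x \mapsto \phi(x)/x^2$ is nondecreasing on $\R$, so from $X_n \le U$ and $\lambda \ge 0$ we get $\phi(\lambda X_n) \le \phi(\lambda U)\, X_n^2/U^2$ almost surely; taking $\E_n[\cdot]$ and using $\E_n[X_n] = 0$ gives $\E_n[e^{\lambda X_n}] \le 1 + \phi(\lambda U)\,\E_n[X_n^2]/U^2 \le \exp\!\big(\phi(\lambda U)\,\E_n[X_n^2]/U^2\big)$. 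Combining this with the bound $\phi(u) \le \tfrac{u^2/2}{1-u/3}$ for $u \in [0,3)$ — which follows from $k! \ge 2\cdot 3^{k-2}$ applied term by term to $\phi(u) = \sum_{k\ge 2} u^k/k!$ — yields the display.

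Next, fix $\lambda \in [0, 3/U)$, set $c_\lambda \df \tfrac{\lambda^2/2}{1-\lambda U/3} \ge 0$, and define $S_n \df \sum_{i=1}^n X_i$, $V'_n \df \sum_{i=1}^n \E_i[X_i^2]$, and $M_n \df \exp(\lambda S_n - c_\lambda V'_n)$ with $M_0 \df 1$. Since $\E_n[X_n^2]$ is $\cF_{n-1}$-measurable, the MGF bound gives $\E_n[M_n / M_{n-1}] = e^{-c_\lambda \E_n[X_n^2]}\,\E_n[e^{\lambda X_n}] \le 1$, so $(M_n)$ is a nonnegative supermartingale and $\E[M_N] \le 1$. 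On the event $\{S_N \ge t\} \cap \{V'_N \le V\}$ we have, because $c_\lambda \ge 0$, that $M_N = e^{\lambda S_N - c_\lambda V'_N} \ge e^{\lambda t - c_\lambda V}$, whence by Markov's inequality $\P(S_N \ge t,\; V'_N \le V) \le e^{-(\lambda t - c_\lambda V)} \E[M_N] \le e^{-(\lambda t - c_\lambda V)}$.

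It remains only to choose $\lambda \in [0,3/U)$ so that $\lambda t - c_\lambda V \ge \log(1/\delta)$ for $t = \tfrac{2U}{3}\log\tfrac1\delta + \sqrt{2V\log\tfrac1\delta}$. This is the routine sub-gamma Legendre-transform computation: with $u \df \log(1/\delta)$, the optimizer of $\lambda \mapsto \lambda t - c_\lambda V$ over $[0,3/U)$ makes $\lambda t - c_\lambda V \ge u$, using in addition that $\tfrac{U}{3}u \le \tfrac{2U}{3}u$ (so any slack in the standard $\sqrt{2Vu} + \tfrac{U}{3}u$ threshold is absorbed). I expect this constant-chasing optimization to be the only mildly delicate part; the two preceding steps are standard bookkeeping. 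Alternatively, the entire statement is Exercise~5.14(f) of \citet{lattimore2020bandit} and may simply be cited.
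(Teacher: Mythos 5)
Your proposal is correct. The paper itself gives no proof of this lemma: it is stated as an imported tool, with the result attributed to Bernstein's inequality for martingales as in \citet{boucheron2013concentration} and Exercise~5.14(f) of \citet{lattimore2020bandit} --- exactly the citation you offer as a fallback. Your self-contained argument is the standard Freedman-style proof of that cited result, and all three steps are sound: the one-step MGF bound via the monotonicity of $x \mapsto (e^x - 1 - x)/x^2$ and the bound $e^u - 1 - u \le \frac{u^2/2}{1 - u/3}$; the supermartingale $M_n = \exp(\lambda S_n - c_\lambda V'_n)$ (valid because $\E_n[X_n^2]$ is $\cF_{n-1}$-measurable) with Markov's inequality applied on the intersected event, which is precisely how the conjunction with $\{V' \le V\}$ for a \emph{deterministic} $V$ is handled without any peeling or stopping argument; and the final calibration of $\lambda$. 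The one step you leave as ``routine'' does check out: taking $\lambda = \frac{\sqrt{2u/V}}{1 + (U/3)\sqrt{2u/V}} \in [0, 3/U)$ with $u = \log(1/\delta)$ gives $\lambda t' - c_\lambda V = u$ exactly at the sharper threshold $t' = \sqrt{2Vu} + \tfrac{U}{3}u$, and the stated threshold with $\tfrac{2U}{3}u$ only adds slack (the degenerate case $V = 0$ is also covered by letting $\lambda \uparrow 3/U$). So your proof is a valid, slightly stronger replacement for the paper's citation; the only cosmetic caveat is that the restriction $\lambda \in [0, 3/U)$ implicitly assumes $U > 0$, the case $U \le 0$ being trivial since then $X_n = 0$ almost surely.
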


In our analysis, we use the following corollary of this Bernstein's inequality.

\begin{lemma}[Conditional Bernstein's Inequality]\label{lemma:conditional bernstein}
    Consider the same notations and assumptions in \cref{lemma:bernstein}.
    Furthermore, let $\cE$ be an event that implies $V' \leq V$ for some $V \in [0, \infty)$ with $\P (\cE) \geq 1 - \delta'$ for some $\delta' \in (0, 1)$.
    Then,
	\begin{align}
        \P \parenc*{
            \sum_{n=1}^N X_n
            \geq
            \frac{2 U}{3} \log \frac{1}{\delta (1-\delta')}
            + \sqrt{ 2 V \log \frac{1}{\delta (1-\delta')} }
        }{
            \cE
        } \leq \delta
	\end{align}
	for any $\delta \in (0, 1)$.
\end{lemma}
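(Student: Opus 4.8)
The plan is to reduce the statement directly to the unconditioned martingale Bernstein inequality (\cref{lemma:bernstein}) by a change of the target failure probability and a conditioning argument. Set $\widetilde{\delta} \df \delta (1-\delta')$; since $\delta \in (0,1)$ and $\delta' \in (0,1)$, we have $\widetilde{\delta} \in (0,1)$, so \cref{lemma:bernstein} applies with this choice. It yields
\begin{align}
    \P \paren*{
        \sum_{n=1}^N X_n
        \geq
        \frac{2 U}{3} \log \frac{1}{\widetilde{\delta}}
        + \sqrt{ 2 V \log \frac{1}{\widetilde{\delta}} }
        \text{ and }
        V' \leq V
    } \leq \widetilde{\delta} = \delta (1-\delta')\,.
\end{align}

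Next I would name the ``large deviation'' event
$
    \cA \df \brace*{ \sum_{n=1}^N X_n \geq \frac{2 U}{3} \log \frac{1}{\widetilde{\delta}} + \sqrt{ 2 V \log \frac{1}{\widetilde{\delta}} } }
$,
which is exactly the event whose conditional probability we must bound. The key observation is that $\cE$ implies $V' \le V$ by hypothesis, so $\cA \cap \cE \subseteq \cA \cap \brace*{V' \le V}$, and therefore $\P (\cA \cap \cE) \le \P \paren*{\cA \cap \brace*{V' \le V}} \le \delta (1-\delta')$ by the displayed bound. Finally, using $\P (\cE) \ge 1 - \delta' > 0$,
\begin{align}
    \P \parenc*{ \cA }{ \cE }
    = \frac{\P (\cA \cap \cE)}{\P (\cE)}
    \leq \frac{\delta (1-\delta')}{1 - \delta'}
    = \delta\,,
\end{align}
which is the claim.

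I do not anticipate a genuine obstacle here: the argument is a routine ``inflate the confidence level, then condition'' trick. The only point requiring a little care is keeping the roles of $\delta$, $\delta'$, and $\widetilde\delta$ straight — in particular, checking $\widetilde\delta \in (0,1)$ so that \cref{lemma:bernstein} is legitimately invoked, and noting that the monotonicity step $\cA \cap \cE \subseteq \cA \cap \{V' \le V\}$ is precisely where the hypothesis ``$\cE$ implies $V' \le V$'' (rather than merely ``$V' \le V$ holds with high probability'') is used.
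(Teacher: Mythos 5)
Your proposal is correct and follows essentially the same argument as the paper's proof: apply \cref{lemma:bernstein} at the inflated level $\delta(1-\delta')$, use the inclusion $\cA \cap \cE \subseteq \cA \cap \{V' \leq V\}$ coming from the hypothesis that $\cE$ implies $V' \leq V$, and divide by $\P(\cE) \geq 1-\delta'$. No gaps; the bookkeeping of $\delta$, $\delta'$, and $\widetilde\delta$ is handled exactly as in the paper.
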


\begin{proof}
    Let $A$ and $B$ denote the events of
    \begin{align}
        \sum_{n=1}^N X_n
        \geq
        \frac{2 U}{3} \log \frac{1}{\delta (1-\delta')}
        + \sqrt{ 2 V \log \frac{1}{\delta (1-\delta')} }
    \end{align}
    and $V' \leq V$, respectively.
    Since $\cE \subset B$, it follows that $A \cap \cE \subset A \cap B$, and
    $
        \P (A \cap \cE) \leq \P (A \cap B)
    $.
    Accordingly,
    \begin{align}
        \P (A | \cE)
        = \frac{\P (A \cap \cE)}{\P (\cE)}
        \leq \frac{\P (A \cap B)}{\P (\cE)}
        \numeq{\leq}{a} \frac{\delta (1-\delta')}{\P (\cE)}
        \numeq{\leq}{b} \delta\,,
    \end{align}
    where (a) follows from \cref{lemma:bernstein}, and (b) follows from $\P(\cE) \geq 1-\delta'$.
\end{proof}

\begin{lemma}[Popoviciu's Inequality for Variances]\label{lemma:popoviciu}
    The variance of any random variable bounded by $x$ is bounded by $x^2$.
\end{lemma}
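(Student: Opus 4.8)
The plan is to read the phrase ``bounded by $x$'' in the intended way, namely that the random variable $X$ satisfies $\abs{X} \leq x$ almost surely, and then to bound the variance by the (raw) second moment. Concretely, I would start from the identity $\Var[X] = \E[X^2] - (\E[X])^2$, which holds for any square-integrable $X$.

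Next I would simply discard the subtracted term: since $(\E[X])^2 \geq 0$, we have $\Var[X] \leq \E[X^2]$. Finally, the assumption $\abs{X} \leq x$ gives $X^2 \leq x^2$ pointwise, hence $\E[X^2] \leq x^2$ by monotonicity of expectation. Chaining the two inequalities yields $\Var[X] \leq x^2$, as claimed. (In the applications in \cref{sec:main result} this is invoked with $X$ a value function, which is bounded in absolute value by $H$, giving $\Var \leq H^2$; the crude second-moment bound is exactly what is needed there, so no sharpening to the interval-length form $(M-m)^2/4$ is required.)

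There is essentially no obstacle here: the only ``choice'' is the interpretation of ``bounded by $x$'', and the crude bound $\Var[X] \le \E[X^2]$ is wasteful only by the $(\E[X])^2$ term, which is immaterial for the downstream use. If a tighter constant were ever wanted, one could instead translate so that $X$ lies in a fixed interval $[m,M]$, apply $\Var[X] \le \E[(X-c)^2]$ with $c=(m+M)/2$, and bound $(X-c)^2 \le ((M-m)/2)^2$; but that refinement is not needed for the statement as phrased.
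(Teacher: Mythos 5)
Your proof is correct. The paper itself states this lemma without proof (it is invoked as a standard fact), so there is nothing to compare against: your interpretation of ``bounded by $x$'' as $\abs{X}\leq x$ almost surely matches every downstream use (bounds of the form $\sigma(f)\leq\infnorm{f}$ applied to $v_k-\vf{*}$ and $\vf{*}-\vf{\pi_k'}$), and the elementary chain $\Var[X]=\E[X^2]-(\E[X])^2\leq\E[X^2]\leq x^2$ fully establishes the claim. As you note, the sharper Popoviciu form $((M-m)/2)^2$ applied to $[-x,x]$ gives the same constant $x^2$ here, so the crude second-moment bound loses nothing for the statement as phrased.
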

\section{Total Variance Technique}

The following lemma is due to \citet{azar2013minimax}.

\begin{lemma}\label{lemma:variance decomposition}
    Suppose two real-valued random variables $X, Y$ whose variances, $\Var X$ and $\Var Y$, exist and are finite. Then, $\sqrt{\Var X} \leq \sqrt{\Var \brack*{X - Y}} + \sqrt{\Var Y}$.
\end{lemma}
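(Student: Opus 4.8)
The plan is to recognize that $Z \mapsto \sqrt{\Var Z}$ is a seminorm on the space of square-integrable random variables — it is precisely the $L^2$ norm of the centered variable $Z - \E Z$ — so the statement is nothing but its triangle (Minkowski) inequality applied to the decomposition $X = (X - Y) + Y$. I would carry this out explicitly through the covariance, since that keeps everything elementary and makes clear where the hypotheses enter.

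Concretely, I would first center the variables: set $\widetilde{X} \df X - \E X$, $\widetilde{Y} \df Y - \E Y$, and $\widetilde{Z} \df (X-Y) - \E[X-Y]$, so that $\Var X = \E[\widetilde{X}^2]$, $\Var Y = \E[\widetilde{Y}^2]$, $\Var[X-Y] = \E[\widetilde{Z}^2]$, and, by linearity of expectation, $\widetilde{X} = \widetilde{Z} + \widetilde{Y}$. Expanding, $\Var X = \E[\widetilde{Z}^2] + 2\E[\widetilde{Z}\widetilde{Y}] + \E[\widetilde{Y}^2]$. The cross term $\E[\widetilde{Z}\widetilde{Y}] = \mathrm{Cov}(X-Y,\,Y)$ is bounded via the Cauchy–Schwarz inequality in $L^2$ by $\sqrt{\E[\widetilde{Z}^2]}\,\sqrt{\E[\widetilde{Y}^2]} = \sqrt{\Var[X-Y]}\,\sqrt{\Var Y}$; this is exactly the point where the hypothesis that both $\Var X$ and $\Var Y$ are finite is used, ensuring all three variables lie in $L^2$ so the covariance is well defined and Cauchy–Schwarz applies. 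Substituting yields $\Var X \le \Var[X-Y] + 2\sqrt{\Var[X-Y]}\sqrt{\Var Y} + \Var Y = \bigl(\sqrt{\Var[X-Y]} + \sqrt{\Var Y}\bigr)^2$, and taking square roots of both (nonnegative) sides gives the claim.

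There is essentially no obstacle: the only care needed is to invoke Cauchy–Schwarz legitimately — which the finiteness assumptions guarantee — and to note that every quantity under a square root is nonnegative, so the final square-root step is monotone and valid. (Equivalently, one could skip the covariance expansion and cite Minkowski's inequality for $\|\cdot\|_{L^2}$ directly on $\widetilde{X} = \widetilde{Z} + \widetilde{Y}$.)
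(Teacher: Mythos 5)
Your proof is correct and follows essentially the same route as the paper's: decompose $X = (X-Y) + Y$, expand the variance, bound the covariance cross term by Cauchy--Schwarz, and take square roots of the resulting perfect square. The seminorm/Minkowski framing is a nice way to package it, but the underlying argument is identical.
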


For completeness, we prove \cref{lemma:variance decomposition}.

\begin{proof}
  Indeed, from Cauchy-Schwartz inequality,
  \begin{align}
    \Var X
    &=
    \Var \brack{X - Y + Y}
    \\
    &=
    \Var \brack{X - Y}
    + \Var Y
    + 2 \E \brack*{ (X - Y - \E \brack{X-Y} ) (Y - \E Y) }
    \\
    &\leq
    \Var \brack{X - Y}
    + \Var Y
    + 2 \sqrt{
      \Var \brack{X - Y} \Var Y
    }
    =
    \paren*{
      \sqrt{\Var \brack*{X - Y}} + \sqrt{\Var Y}
    }^2\,.
  \end{align}
  This is the desired result.
\end{proof}

The following lemma is an extension of Lemma~7 by \citet{azar2013minimax} and its refined version by \citet{agarwal2020modeBased}.

\begin{lemma}\label{lemma:total variance}
    Suppose a sequence of deterministic policies $( \pi_k )_{k=0}^K$
    and let
    \begin{align}
        \qf{\pi'_k}
        \df
        \begin{cases}
            r + \gamma P \vf{\pi'_{k-1}} & \text{for } k \in [K]
            \\
            \qf{\pi_0} & \text{for } k = 0
        \end{cases}\,.
    \end{align}
    Furthermore, let $\sigma_k^2$ and $\Sigma_k^2$ be non-negative functions over $\XA$ defined by
    \begin{align}
        \sigma_k^2 (x, a)
        \df
        \begin{cases}
            P \paren{ \vf{\pi'_{k-1}} }^2 (x, a) - \paren{P \vf{\pi'_{k-1}} }^2 (x, a) & \text{for } k \in [K]
            \\
            P \paren{ \vf{\pi_0} }^2 (x, a) - \paren{P \vf{\pi_0} }^2 (x, a) & \text{for } k = 0
        \end{cases}
    \end{align}
    and
    \begin{gather}
        \Sigma_k^2 (x, a)
        \df
        \E_k \brackc*{
            \paren*{ \sum_{t=0}^\infty \gamma^t r (X_t, A_t) - \qf{\pi'_k} (X_0, A_0) }^2
        }{X_0=x, A_0=a}
    \end{gather}
    for $k \in \{0\} \cup [K]$, where $\E_k$ is the expectation over $(X_t, A_t)_{t=0}^\infty$ wherein $A_t \sim \pi_{k-t} (\cdot | X_t)$
    until $t = k$, and $A_t \sim \pi_0 (\cdot | X_t)$ thereafter.
    Then,
    \begin{align}
        \sum_{j=0}^{k-1} \gamma^{j + 1} P_{k-j}^{k-1} \sigma_{k-j} \leq \sqrt{2 H^3}
    \end{align}
    for any $k \in [K]$.
\end{lemma}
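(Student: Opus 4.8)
The plan is to establish a one-step Bellman-type recursion for the functions $\Sigma_k^2$ and then telescope it along the non-stationary policy. First I would show that, pointwise on $\XA$,
\begin{align}
    \Sigma_k^2 (x, a)
    =
    \gamma^2 \sigma_k^2 (x, a)
    + \gamma^2 \paren*{ P^{\pi_{k-1}} \Sigma_{k-1}^2 } (x, a)
\end{align}
for $k \in [K]$, which is the standard law-of-total-variance decomposition: conditioning on $(X_0,A_0)=(x,a)$, the return $\sum_{t\ge 0}\gamma^t r(X_t,A_t)$ equals $r(x,a) + \gamma \sum_{t\ge 0}\gamma^t r(X_{t+1},A_{t+1})$, the first summand is constant, and the shifted return given $X_1$ has conditional mean $\vf{\pi'_{k-1}}(X_1)$ (so its between-$X_1$ variance is $\gamma^2\sigma_k^2$) and conditional second-moment-about-mean exactly $\Sigma_{k-1}^2(X_1,A_1)$ when $A_1\sim\pi_{k-1}(\cdot|X_1)$ (giving the $\gamma^2 P^{\pi_{k-1}}\Sigma_{k-1}^2$ term). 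The cross term vanishes because $\qf{\pi'_{k-1}}$ is the true value/Q function of the tail policy, so the tail return is an unbiased estimate of it.

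Next I would unfold this recursion from $k$ down to $0$, using $P_{k-j}^{k-1}$ to denote the product of transition matrices $P^{\pi_{k-1}}P^{\pi_{k-2}}\cdots P^{\pi_{k-j}}$ as defined in the preliminaries. This yields
\begin{align}
    \Sigma_k^2
    =
    \sum_{j=0}^{k-1} \gamma^{2(j+1)} P_{k-j}^{k-1} \sigma_{k-j}^2
    + \gamma^{2k} P_1^{k-1} \Sigma_0^2\,,
\end{align}
and since $\Sigma_0^2\ge\bzero$ and all the matrices $P_{k-j}^{k-1}$ are stochastic hence monotone, dropping the last term and using $\gamma\le 1$ on the $\gamma^{2j}$ factors gives
\begin{align}
    \sum_{j=0}^{k-1} \gamma^{2(j+1)} P_{k-j}^{k-1} \sigma_{k-j}^2
    \le
    \Sigma_k^2\,.
\end{align}
Now I would bound $\Sigma_k^2$ uniformly: the return $\sum_{t\ge 0}\gamma^t r(X_t,A_t)$ lies in $[-H,H]$ (rewards in $[-1,1]$, geometric sum), so its variance about any point, in particular $\Sigma_k^2$, is at most $H^2$; hence $\|\Sigma_k^2\|_\infty\le H^2$. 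Actually for the desired $\sqrt{2H^3}$ constant I would instead sum the recursion more carefully — replacing the crude bound by telescoping the $\ell_\infty$ norms — but the cleanest route matching \citet{azar2013minimax} is: apply Cauchy–Schwarz (\cref{lemma:square inequality}) to pass from the sum of $\sigma$'s to the sum of $\sigma^2$'s. Writing $N\df k$ and using $\sum_{j=0}^{k-1}\gamma^{j+1}P_{k-j}^{k-1}\sigma_{k-j}\le \sqrt{\paren*{\sum_{j=0}^{k-1}\gamma^{j+1}} \paren*{\sum_{j=0}^{k-1}\gamma^{j+1} P_{k-j}^{k-1}\sigma_{k-j}^2}}$ (Cauchy–Schwarz with weights $\gamma^{j+1}$, valid entrywise since the $P$'s are positive operators), the first factor is at most $H$, and the second is at most $\Sigma_k^2 \le H^2 \bone$ by the display above and $\gamma\le 1$ pulled through; multiplying gives $\sqrt{2H^3}$ after accounting for the precise constants in the return bound.

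The main obstacle I anticipate is making the cross-term-vanishing and the conditional-second-moment identity fully rigorous for the \emph{non-stationary} tail policy: one must be careful that $\qf{\pi'_{k-1}}$ as \emph{defined} in the lemma (via the backward recursion $r+\gamma P\vf{\pi'_{k-1}}$) genuinely coincides with the conditional expectation of the tail return under the time-indexed policy sequence $\pi_{k-1},\pi_{k-2},\dots$, so that the martingale-difference structure that kills the cross term is valid at every level of the unfolding. Once that bookkeeping is in place — essentially an induction on $k$ mirroring the definition of $\Sigma_k^2$ — the rest is the standard total-variance telescoping plus Cauchy–Schwarz, and the monotonicity of stochastic matrices handles dropping the nonnegative remainder term.
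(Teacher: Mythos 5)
Your plan coincides with the paper's: the one-step identity $\Sigma_k^2=\gamma^2\sigma_k^2+\gamma^2 P^{\pi_{k-1}}\Sigma_{k-1}^2$ is precisely \cref{lemma:variance Bellman eq for non-stationary policy}, the uniform bound $\Sigma_j^2\le H^2\bone$ is used there as well, and the Jensen/Cauchy--Schwarz split of $\sum_j\gamma^{j+1}P_{k-j}^{k-1}\sigma_{k-j}$ is the paper's first step. However, your final step has a genuine gap. The Cauchy--Schwarz leaves you with the second factor $\sum_{j=0}^{k-1}\gamma^{j+1}P_{k-j}^{k-1}\sigma_{k-j}^2$, carrying \emph{single} powers $\gamma^{j+1}$, whereas your unfolded recursion only controls the \emph{doubled}-power sum, $\sum_{j=0}^{k-1}\gamma^{2(j+1)}P_{k-j}^{k-1}\sigma_{k-j}^2\le\Sigma_k^2$. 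Since $\gamma^{j+1}\ge\gamma^{2(j+1)}$, ``pulling $\gamma\le1$ through'' moves the inequality in the wrong direction: you cannot conclude that the single-power sum is at most $\Sigma_k^2\le H^2\bone$. Without a further argument, its only obvious bound is of order $H^3$ (each $\sigma_{k-j}^2\le H^2$ summed against an effective horizon $H$), which after Cauchy--Schwarz gives only $O(H^2)$ for the original sum rather than $\sqrt{2H^3}$. This weight mismatch is the crux of the total-variance technique and is not a matter of tracking constants; note also that the difficulty you anticipated (the martingale/cross-term bookkeeping for the non-stationary tail policy) is fine and is not where the trouble lies.

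The repair is exactly the paper's manipulation: keep the weights at $\gamma^{j}$ and force the telescoping at that power by writing $\gamma^2P^{\pi}\Sigma^2=\gamma P^{\pi}\Sigma^2-\gamma(1-\gamma)P^{\pi}\Sigma^2$. Then, using $P_{k-j}^{k-1}P^{\pi_{k-1-j}}=P_{k-1-j}^{k-1}$,
\begin{align}
\sum_{j=0}^{k-1}\gamma^{j+2}P_{k-j}^{k-1}\sigma_{k-j}^2
=\underbrace{\sum_{j=0}^{k-1}\gamma^{j}P_{k-j}^{k-1}\bigl(\Sigma_{k-j}^2-\gamma P^{\pi_{k-1-j}}\Sigma_{k-1-j}^2\bigr)}_{=\;\Sigma_k^2-\gamma^{k}P_0^{k-1}\Sigma_0^2\;\le\;H^2\bone}
\;+\;\underbrace{\gamma(1-\gamma)\sum_{j=0}^{k-1}\gamma^{j}P_{k-1-j}^{k-1}\Sigma_{k-1-j}^2}_{\le\;\gamma(1-\gamma)H\cdot H^2\bone\;\le\;H^2\bone}\,,
\end{align}
so the $\gamma^{j+2}$-weighted (equivalently, up to a single factor of $\gamma$, your $\gamma^{j+1}$-weighted) sum is at most $2H^2\bone$; this residual term is precisely where the constant $2$ in $\sqrt{2H^3}$ comes from. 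Plugging this into your Cauchy--Schwarz, whose first factor is at most $H$, yields the claim. The rest of your write-up --- the law-of-total-variance derivation of the recursion for the non-stationary tail policy and the bound $\Sigma_j^2\le H^2\bone$ from boundedness of the return --- is sound and matches the paper's argument.
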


For its proof, we need the following lemma.

\begin{lemma}\label{lemma:variance Bellman eq for non-stationary policy}
    Suppose a sequence of deterministic policies $( \pi_k )_{k = 0}^K$ and notations in \cref{lemma:total variance}.
    Then, for any $k \in [K]$, we have that
    \begin{align}
        \Sigma_k^2
        =
        \gamma^2 \sigma_k^2 + \gamma^2 P^{\pi_{k-1}} \Sigma_{k-1}^2\,.
    \end{align}
\end{lemma}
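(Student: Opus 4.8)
The plan is to read off $\Sigma_k^2$ as a genuine conditional variance of a discounted return and then run a one-step variance recursion via the law of total variance. First I would record two structural facts. \emph{(i)} The ``center'' $\qf{\pi'_k}$ in the definition of $\Sigma_k^2$ is exactly the conditional mean of the return: unrolling one step and using that $\vf{\pi'_{k-1}}$ is the expected discounted return from a state under the schedule $\pi_{k-1}, \pi_{k-2}, \dots$ (equivalently $\vf{\pi'_{k-1}} = \pi_{k-1} T^{\pi_{k-2}} \cdots T^{\pi_1} \qf{\pi_0}$, with $\vf{\pi'_0} \df \vf{\pi_0}$) gives $\qf{\pi'_k}(x,a) = \E_k \brackc*{\sum_{t=0}^\infty \gamma^t r(X_t, A_t)}{X_0=x, A_0=a}$, so that $\Sigma_k^2(x,a)$ is the variance of the return under $\E_k$ given $(X_0,A_0)=(x,a)$. \emph{(ii)} Because the policies are deterministic and the non-stationary schedule $\pi_k, \pi_{k-1}, \dots$ defining $\E_k$ is a one-step shift of the schedule $\pi_{k-1}, \pi_{k-2}, \dots$ defining $\E_{k-1}$ (both eventually stabilizing at $\pi_0$), and $P$ is time-homogeneous, the conditional law of $(X_{1+s}, A_{1+s})_{s \ge 0}$ under $\E_k$ given $(X_1, A_1) = (y, b)$ equals the law of $(X_s, A_s)_{s \ge 0}$ under $\E_{k-1}$ given $(X_0, A_0) = (y, b)$. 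Writing $Z' \df \sum_{t=0}^\infty \gamma^t r(X_{t+1}, A_{t+1})$, this yields both $\E_k \brackc*{Z'}{X_1 = y} = \vf{\pi'_{k-1}}(y)$ and $\Var \brackc*{Z'}{X_1=y, A_1=b} = \Sigma_{k-1}^2(y,b)$ (variances under $\E_k$).

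With these in hand, write the return as $Z = r(X_0, A_0) + \gamma Z'$, so that $\Sigma_k^2(x,a) = \gamma^2 \Var \brackc*{Z'}{X_0=x, A_0=a}$, and apply the law of total variance with conditioning variable $(X_1, A_1)$:
\begin{align*}
    \Var \brackc*{Z'}{X_0=x,A_0=a}
    &= \E_k \brackc*{\Var \brackc*{Z'}{X_1,A_1}}{X_0=x,A_0=a} \\
    &\quad + \Var \brackc*{\E_k \brackc*{Z'}{X_1,A_1}}{X_0=x,A_0=a}\,.
\end{align*}
By fact \emph{(ii)} the first inner term equals $\Sigma_{k-1}^2(X_1, A_1)$, and since $A_1 \sim \pi_{k-1}(\cdot\,|\,X_1)$ under $\E_k$ and $P^{\pi_{k-1}} = P\pi_{k-1}$, its outer expectation is $(P^{\pi_{k-1}} \Sigma_{k-1}^2)(x,a)$. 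For the second term, determinism of $\pi_{k-1}$ makes $\E_k\brackc*{Z'}{X_1,A_1} = \qf{\pi'_{k-1}}(X_1, A_1) = \vf{\pi'_{k-1}}(X_1)$ a function of $X_1$ alone, so its conditional variance is $(P(\vf{\pi'_{k-1}})^2)(x,a) - (P\vf{\pi'_{k-1}})^2(x,a) = \PVar(\vf{\pi'_{k-1}})(x,a) = \sigma_k^2(x,a)$. Multiplying through by $\gamma^2$ gives $\Sigma_k^2 = \gamma^2 \sigma_k^2 + \gamma^2 P^{\pi_{k-1}} \Sigma_{k-1}^2$. The case $k=1$ is the same argument with $\vf{\pi'_0} = \vf{\pi_0}$, noting that from time $1$ onward $\E_1$ runs the all-$\pi_0$ schedule, i.e.\ the $\E_0$ dynamics, so that $\Var\brackc*{Z'}{X_1=y,A_1=b} = \Sigma_0^2(y,b)$.

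The routine-but-delicate step is fact \emph{(ii)}: one must check carefully that the time shift identifies the $\E_k$ action schedule with the $\E_{k-1}$ schedule (index $t$ under $\E_k$ corresponding to index $t-1$ under $\E_{k-1}$), and that this identification, together with time-homogeneity of $P$, transfers both the conditional mean to $\vf{\pi'_{k-1}}$ and the conditional second moment of the tail return to $\Sigma_{k-1}^2$. This is also the only place the deterministic-policy hypothesis is used: it is what makes $\E_k\brackc*{Z'}{X_1,A_1}$ depend on $X_1$ only, so that the ``explained variance'' term collapses exactly to $\PVar(\vf{\pi'_{k-1}}) = \sigma_k^2$ rather than picking up an extra action-induced variance term. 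Everything else is the standard one-step variance decomposition.
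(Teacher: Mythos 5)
Your proof is correct and takes essentially the same route as the paper: the paper writes the deviation as $I_1 + \gamma I_2$ with $I_1 = r(X_0,A_0) + \gamma \qf{\pi'_{k-1}}(X_1,A_1) - \qf{\pi'_k}(X_0,A_0)$ and kills the cross term via the tower/Markov property, which is exactly your law-of-total-variance decomposition conditioned on $(X_1,A_1)$ in different packaging. Both arguments rest on the same three ingredients you isolate: the one-step time-shift identification of the $\E_k$ tail with the $\E_{k-1}$ process, the determinism of $\pi_{k-1}$ collapsing $\qf{\pi'_{k-1}}(X_1,A_1)$ to $\vf{\pi'_{k-1}}(X_1)$ so the explained-variance term equals $\sigma_k^2$, and the recursion $\qf{\pi'_k} = r + \gamma P \vf{\pi'_{k-1}}$.
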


\begin{proof}
    Let $R_s^u \df \sum_{t=s}^u \gamma^{t-s} r (X_t, A_t)$
    and $\E_k \brackc*{\cdot}{x, a} \df \E_k \brackc*{\cdot}{X_0=x, A_0=a}$.
    We have that
    \begin{align}
        \Sigma_k^2 (x, a)
        =
        \E_k \brackc*{
          \paren*{ R_0^\infty - \qf{\pi'_k} (X_0, A_0) }^2
        }{x, a}
        \df
        \E_k \brackc*{
          \paren*{ I_1 + \gamma I_2 }^2
        }{x, a}\,,
    \end{align}
    where $I_1 \df r (X_0, A_0) + \gamma \qf{\pi'_{k-1}} (X_1, A_1) - \qf{\pi'_k} (X_0, A_0)$, and $I_2 \df R_1^\infty - \qf{\pi'_{k-1}} (X_1, A_1)$.
    With these notations, we see that
    \begin{align}
    \Sigma_k^2 (x, a)
    &=
    \E_k \brackc[\big]{
      I_1^2 + \gamma^2 I_2^2 + 2 \gamma I_1 I_2
    }{x, a}
    \\
    &=
    \E_k \brackc[\big]{
      I_1^2
      + \gamma^2 I_2^2
      + 2 \gamma I_1 \E_{k-1} \brackc*{I_2}{X_1, A_1}
    }{x, a}
    \\
    &=
    \E_k \brackc[\big]{I_1^2}{x, a} + \gamma^2 \E_k \brackc[\big]{I_2^2}{x, a}
    \\
    &=
    \E_k \brackc[\big]{I_1^2}{x, a}
    + \gamma^2 P^{\pi_{k-1}} \Sigma_{k-1}^2 (x, a)\,,
    \end{align}
    where the second line follows from the law of total expectation,
    and the third line follows since
    $\E_{k-1} \brackc*{I_2}{X_1, A_1} = 0$ due to the Markov property.
    The first term in the last line is $\gamma^2 \sigma_k^2 (x, a)$
    because
    \begin{align}
        \E_k \brackc[\big]{I_1^2}{x, a}
        &\numeq{=}{a}
        \gamma^2 \E_k \brackc[\Bigg]{
            \paren[\Big]{
                \underbrace{\qf{\pi_{k-1}'} (X_1, A_1)}_{
                    \vf{\pi_{k-1}'} (X_1) \text{ from (b)}
                }
                - (P \vf{\pi_{k-1}'}) (X_0, A_0)
            }^2
        }{x, a}
        \\
        &=
        \gamma^2 \paren*{ P \paren*{\vf{\pi_{k-1}'}}^2} (x, a)
        + \gamma^2 (P \vf{\pi_{k-1}'})^2 (x, a)
        - 2 (P \vf{\pi_{k-1}'})^2 (x, a)
        \\
        &=
        \gamma^2 \paren*{ P \paren*{\vf{\pi_{k-1}'}}^2} (x, a)
        - \gamma^2 (P \vf{\pi_{k-1}'})^2 (x, a)\,,
    \end{align}
    where (a) follows from the definition that $\qf{\pi_k'} = r + \gamma P \vf{\pi'_{k-1}}$,
    and (b) follows since the policies are deterministic.
    From this argument, it is clear that
    $
        \Sigma_k^2
        =
        \gamma^2 \sigma_k^2 + \gamma^2 P^{\pi_{k-1}} \Sigma_{k-1}^2\,,
    $
    which is the desired result.
\end{proof}

Now, we are ready to prove \cref{lemma:total variance}.

\begin{proof}[Proof of \cref{lemma:total variance}]
  Let $H_k \df \sum_{j=0}^{k-1} \gamma^j$. Using Jensen's inequality twice,
  \begin{align}
    \sum_{j=0}^{k-1} \gamma^{j + 1} P_{k-j}^{k-1} \sigma_{k-j}
    &\leq
    \sum_{j=0}^{k-1} {
      \gamma^{j + 1}
      \sqrt{
        P_{k-j}^{k-1} \sigma_{k-j}^2
      }
    }
    \\
    &\leq
    \gamma H_k \sum_{j=0}^{k-1} {
      \frac{\gamma^{j + 1}}{H_k}
      \sqrt{
        P_{k-j}^{k-1} \sigma_{k-j}^2
      }
    }
    \\
    &\leq
    \sqrt{
      H_k
      \sum_{j=0}^{k-1} {
        \gamma^{j+2} P_{k-j}^{k-1} \sigma_{k-j}^2
      }
    }
    \leq
    \sqrt{
      H
      \sum_{j=0}^{k-1} {
        \gamma^{j+2} P_{k-j}^{k-1} \sigma_{k-j}^2
      }
    }\,.
  \end{align}
  From \cref{lemma:variance Bellman eq for non-stationary policy}, we have that
  \begin{align}
    &\hspace{-1em}\sum_{j=0}^{k-1} {
      \gamma^{j+2} P_{k-j}^{k-1} \sigma_{k-j}^2
    }
    \\
    &=
    \sum_{j=0}^{k-1} {
      \gamma^j P_{k-j}^{k-1} \paren*{
        \Sigma_{k-j}^2 - \gamma^2 P^{\pi_{k-1-j}} \Sigma_{k-1-j}^2
      }
    }
    \\
    &=
    \sum_{j=0}^{k-1} {
      \gamma^j P_{k-j}^{k-1} \paren*{
        \Sigma_{k-j}^2
        - \gamma P^{\pi_{k-1-j}} \Sigma_{k-1-j}^2
        + \gamma (1-\gamma) P^{\pi_{k-1-j}} \Sigma_{k-1-j}^2
      }
    }
    \\
    &=
    \sum_{j=0}^{k-1} \gamma^j P_{k-j}^{k-1} \Sigma_{k-j}^2
    - \sum_{j=1}^k \gamma^j P_{k-j}^{k-1} \Sigma_{k-j}^2
    + \gamma (1-\gamma) \sum_{j=0}^{k-1} \gamma^j P_{k-1-j}^{k-1} \Sigma_{k-1-j}^2\,.
  \end{align}
  The final line is equal to
  $
    \Sigma_k^2
    - \gamma^k P_0^{k-1} \Sigma_0^2
    + \gamma (1-\gamma) \sum_{j=0}^{k-1} \gamma^j P_{k-1-j}^{k-1} \Sigma_{k-1-j}^2
  $.
  Finally, from the monotonicity of stochastic matrices
  and that $\bzero \leq \Sigma_j^2 \leq H^2 \bone$ for any $j$,
  \begin{align}
    \sum_{j=0}^{k-1} \gamma^{j + 1} P_{k-j}^{k-1} \sigma_{k-j} \leq \sqrt{2 H^3}\,.
  \end{align}
  This concludes the proof.
\end{proof}
\section{Proof of Lemmas for \texorpdfstring{
        \cref{theorem:non-stationary pac bound}
    }{
        Theorem~\ref{theorem:non-stationary pac bound}
    } (Bound for a Non-Stationary Policy)
}\label{sec:proof of lemmas for non-stationary policy}

Before starting the proof, we introduce some notations and facts frequently used in the proof.

\paragraph{Frequently Used Facts.}
We frequently use the following fact, which follows from definitions:
\begin{gather}
    s_k
    = A_k r + \gamma P w_{k-1} + E_k
    \quad
    \text{for any } k \in [K]\,.\label{eq:rewrite s_k}
\end{gather}
Indeed,
$
    s_k
    = \sum_{j=1}^k \alpha^{k-j} (r + \gamma P (w_{j-1} - \alpha w_{j-2}) + \varepsilon_j)
    = A_k r + \gamma P w_{k-1} + E_k
$.
In addition, we often mention the ``monotonicity'' of stochastic matrices:
any stochastic matrix $\rho$ satisfies that
$\rho v \geq \rho u$ for any vectors $v, u$ such that $v \geq u$.
Examples of stochastic matrices in the proof are $P$, $\pi$, $P^{\pi}$, and $\pi P$.
The monotonicity property is so frequently used that we do not always mention it.

\subsection{Proof of \texorpdfstring{
        \cref{lemma:non-stationary error propagation}
    }{
        Lemma~\ref{lemma:non-stationary error propagation}
    } (Error Propagation Analysis)
}\label{subsec:proof of non-stationary error propagation}

\begin{proof}
    Note that
    \begin{align}
        \bzero
        \leq
        \vf{*} - \vf{\pi'_k}
        =
        \frac{A_k}{A_\infty} \paren*{ \vf{*} - \vf{\pi'_k} }
        + \alpha^k \paren*{ \vf{*} - \vf{\pi'_k} }
        \leq
        \frac{A_k}{A_\infty} \paren*{ \vf{*} - \vf{\pi'_k} }
        + 2 H \alpha^k \bone
    \end{align}
    since $\vf{*} - \vf{\pi'_k} \leq 2 H \bone$.
    Therefore, we need an upper bound for $A_k (\vf{*} - \vf{\pi'_k})$.
    We decompose $A_k (\vf{*} - \vf{\pi'_k})$ to $A_k \vf{*} - w_k$ and $w_k - A_k\vf{\pi'_k}$.
    Then, we derive upper bounds for each of them
    (inequalities \eqref{eq:vstar - m s_k bound upper bound} and \eqref{eq:m s_k - vpi prime bound upper bound}, respectively).
    The desired result is obtained by summing up those bounds.
    
    \paragraph{Upper bound for $A_k \vf{*} - w_k$.}
    We prove by induction that for any $k \in [K]$,
    \begin{align}\label{eq:vstar - m s_k bound upper bound}
        A_k \vf{*} - w_k
        \leq
        H A_{\gamma, k} \bone - \sum_{j=0}^{k-1} \gamma^j \pi_* P_*^j E_{k-j}\,.
    \end{align}
    We have that
    \begin{align}
        A_k \vf{*} - w_k
        &\numeq{\leq}{a}
        \pi_* (A_k \qf{*} - s_k)
        \\
        &\numeq{=}{b}
        \pi_* \paren*{
            A_k \qf{*} - A_k r - \gamma P w_{k-1} - E_k
        }
        \\
        &\numeq{=}{c}
        \pi_* \paren*{
            \gamma P (A_k \vf{*} - w_{k-1}) - E_k
        }
        \\
        &\numeq{\leq}{d}
        \pi_* \paren*{
            \gamma P (A_{k-1} \vf{*} - w_{k-1})
            + \alpha^{k-1} \gamma H \bone
            - E_k
        }\,,
    \end{align}
    where (a) is due to the greediness of $\pi_k$,
    (b) is due to the equation \eqref{eq:rewrite s_k},
    (c) is due to the Bellman equation for $\qf{*}$,
    and (d) is due to the fact that $(A_k - A_{k-1}) \vf{*} = \alpha^{k-1} \vf{*} \leq \alpha^{k-1} H \bone$.
    From this result and the fact that $w_0 = \bzero$,
    $
        A_1 \vf{*} - w_1 \leq \gamma H \bone - \pi_* E_1\,.
    $
    Therefore, the inequality \eqref{eq:vstar - m s_k bound upper bound} holds for $k=1$.
    From the step (d) above and induction, it is straightforward to verify that the inequality \eqref{eq:vstar - m s_k bound upper bound} holds for other $k$.

    \paragraph{Upper bound for $w_k - A_k \vf{\pi'_k}$.}
    We prove by induction that for any $k \in [K]$,
    \begin{align}\label{eq:m s_k - vpi prime bound upper bound}
        w_k - A_k \vf{\pi'_k}
        \leq
        H A_{\gamma, k} \bone + \sum_{j=0}^{k-1} \gamma^j \pi_k P_{k-j}^{k-1} E_{k-j}\,.
    \end{align}
    Recalling that $\vf{\pi'_k} = \pi_k T_0^{k-1} \qf{\pi_0}$,
    we deduce that
    \begin{align}
        w_k - A_k \vf{\pi'_k}
        &\numeq{=}{a}
        \pi_k \paren*{ s_k - A_k T_0^{k-1} \qf{\pi_0} }
        \\
        &\numeq{=}{b}
        \pi_k \paren*{
            A_k r + \gamma P w_{k-1}
            - A_k T_1^{k-1} \qf{\pi_0}
            + E_k
        }
        \\
        &\numeq{=}{c}
        \pi_k \paren*{
            \gamma P \paren*{w_{k-1} - A_k \vf{\pi'_{k-1}}} + E_k
        }
        \\
        &\numeq{\leq}{d}
        \pi_k \paren*{
            \gamma P (w_{k-1} - A_{k-1} \vf{\pi'_{k-1}})
            + \alpha^{k-1} \gamma H \bone
            + E_k
        }\,,
    \end{align}
    where (a) follows from the definition of $w_k$,
    (b) is due to the equation \eqref{eq:rewrite s_k},
    (c) follows from the definition of the Bellman operator,
    and (d) is due to the fact that $(A_k - A_{k-1}) \vf{\pi'_{k-1}} = \alpha^{k-1} \vf{\pi'_{k-1}} \geq - \alpha^{k-1} H \bone$.
    From this result and the fact that $w_0 = \bzero$,
    \begin{align}
        w_1 - A_1 \vf{\pi'_1}
        \leq
        \pi_1 \paren*{
            \gamma P w_0
            + \gamma H \bone
            + E_1
        }
        \leq
        \gamma H \bone + \pi_1 E_1\,.
    \end{align}
    Therefore, the inequality \eqref{eq:m s_k - vpi prime bound upper bound} holds for $k=1$.
    From the step (d) above and induction, it is straightforward to verify that the inequality \eqref{eq:m s_k - vpi prime bound upper bound} holds for other $k$.
\end{proof}

\subsection{Proof of \texorpdfstring{
        \cref{lemma:E_k bound,lemma:non-stationary coarse bound}
    }{
        Lemmas~\ref{lemma:E_k bound} and \ref{lemma:non-stationary coarse bound}
    } (Coarse State-Value Bound)
}\label{subsec:proof of non-stationary coarse bound}

The next lemma is necessary to bound $E_k$ by using the Azuma-Hoeffding inequality (\cref{lemma:hoeffding}).

\begin{lemma}\label{lemma:v is bounded}
    For any $k \in [K]$, $v_{k-1}$ is bounded by $H$.
\end{lemma}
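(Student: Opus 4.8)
The plan is to show by induction on $k$ that $\lVert v_{k-1} \rVert_\infty \le H$ for all $k \in [K]$, i.e., equivalently, that $\lVert v_j \rVert_\infty \le H$ for all $j \in \{0\} \cup [K-1]$. The base case $j = 0$ is immediate since $v_0 = w_0 - \alpha w_{-1} = \bzero$ by the initialization in \mdvi. For the inductive step, I would recall the equivalent form $v_j = w_j - \alpha w_{j-1}$ with $w_j(x) = \max_{a} s_j(x,a)$ and $s_j = \sum_{i=0}^{j-1}\alpha^i q_{j-i}$, and exploit the recursions $s_{j} = q_{j} + \alpha s_{j-1}$ and $q_{j+1} = r + \gamma \widehat P_j v_j$.

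First I would observe that $w_j - \alpha w_{j-1}$ can be expanded: writing $w_j(x) = \max_a s_j(x,a) = \max_a\bigl(q_j(x,a) + \alpha s_{j-1}(x,a)\bigr)$ and using $\max_a (f(a) + g(a)) \le \max_a f(a) + \max_a g(a)$ together with $\alpha w_{j-1}(x) = \max_a \alpha s_{j-1}(x,a)$, one gets $v_j(x) = w_j(x) - \alpha w_{j-1}(x) \le \max_a q_j(x,a)$; similarly the reverse bound $\max_a (f(a)+g(a)) \ge \max_a f(a) + \min_a g(a)$ gives $v_j(x) \ge \max_a q_j(x,a) - \bigl(\max_a \alpha s_{j-1}(x,a) - \min_a \alpha s_{j-1}(x,a)\bigr)$. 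Rather than chase the oscillation term directly, the cleaner route is to bound $|v_j|$ through the Boltzmann/log-sum-exp representation (as in \cref{sec:equivalence proof}): for any finite $\beta$, $v_j(x) = \beta^{-1}\log\sum_a \exp(\beta s_j(x,a)) - \alpha\beta^{-1}\log\sum_a\exp(\beta s_{j-1}(x,a))$, which using the inequality in the Remark in \cref{sec:mdvi} lies between $\max_a s_j(x,a) - \alpha\max_a s_{j-1}(x,a) - \alpha\beta^{-1}\log A$ and $\max_a s_j(x,a) - \alpha\max_a s_{j-1}(x,a) + \beta^{-1}\log A$, and in the $\beta\to\infty$ limit used by \mdvi this is exactly $w_j - \alpha w_{j-1}$. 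The key algebraic identity is then that $v_j$ is the expected one-step reward plus a discounted value, so $v_j = \pi_j(q_j - \tau\log(\pi_j/\pi_{j-1}) - \kappa\log\pi_j)$ and, in the limit, $|v_j| \le \lVert q_j\rVert_\infty$ follows because the regularizer contributions vanish.

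So the real content is to bound $\lVert q_j \rVert_\infty$. Here I would use $q_{j+1} = r + \gamma \widehat P_j v_j$ with $\lVert r\rVert_\infty \le 1$ and $\widehat P_j$ an averaging operator (a stochastic matrix, since it averages point evaluations of $v_j$), so $\lVert q_{j+1}\rVert_\infty \le 1 + \gamma \lVert v_j\rVert_\infty \le 1 + \gamma\lVert q_j\rVert_\infty$ by the inductive hypothesis that $\lVert v_j\rVert_\infty \le \lVert q_j\rVert_\infty$. Unrolling from $q_1 = r$ (since $v_0 = \bzero$) gives $\lVert q_j\rVert_\infty \le \sum_{i=0}^{j-1}\gamma^i \le 1/(1-\gamma) = H$, and hence $\lVert v_{j}\rVert_\infty \le \lVert q_j \rVert_\infty \le H$, completing the induction.

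I expect the main obstacle to be the bookkeeping around the claim $\lVert v_j \rVert_\infty \le \lVert q_j\rVert_\infty$: the quantity $v_j = w_j - \alpha w_{j-1}$ is a difference of two max-of-partial-sums and is not manifestly bounded by $\lVert q_j\rVert_\infty$ term-by-term. The way around it is to go through the policy-weighted expression $v_j = \pi_j q_j - \tau\,\pi_j\log(\pi_j/\pi_{j-1}) - \kappa\,\pi_j\log\pi_j$ and note that in the limit $\tau,\kappa\to 0$ (with $\alpha$ fixed) the two regularization terms vanish uniformly — $\pi_j\log\pi_j$ is bounded in $[-\log A, 0]$ times $\kappa$, and the KL term is nonnegative and bounded by a constant times $\tau$ after using that $\pi_j,\pi_{j-1}$ are Boltzmann policies of $s_j,s_{j-1}$ with bounded arguments — so that $v_j(x) \in [\min_a q_j(x,a), \max_a q_j(x,a)]$ up to vanishing terms, giving $\lVert v_j\rVert_\infty \le \lVert q_j\rVert_\infty$. (If one prefers to stay entirely in the $\beta = \infty$ formulation, the same conclusion follows from the two-sided $\max$ inequalities above once one checks that the oscillation of $\alpha s_{j-1}$ is controlled, but the policy-averaging viewpoint is the most transparent.) Everything else is a one-line geometric-series unrolling.
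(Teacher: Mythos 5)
Your overall skeleton (induction, the bound $\lVert q_{j+1}\rVert_\infty \le 1+\gamma\lVert v_j\rVert_\infty$ via $\lVert r\rVert_\infty\le 1$ and the fact that $\widehat P_j$ is an averaging operator, and reducing everything to $v_j(x)\in[\min_a q_j(x,a),\max_a q_j(x,a)]$) matches the paper's proof, and your upper-bound half is correct: $w_j(x)=\max_a(q_j(x,a)+\alpha s_{j-1}(x,a))\le \max_a q_j(x,a)+\alpha w_{j-1}(x)$ gives $v_j\le \max_a q_j(x,a)$. The gap is in how you supply the matching lower bound. Your ``cleaner route'' asserts that in the limit $\tau,\kappa\to 0$ the regularizer contributions in $v_j=\pi_j\paren*{q_j-\tau\log\frac{\pi_j}{\pi_{j-1}}-\kappa\log\pi_j}$ vanish, so that $|v_j|\le\lVert q_j\rVert_\infty$. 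The entropy part does vanish ($\kappa$ times a quantity in $[0,\log A]$), but the KL part does not: the limit is taken with $\alpha=\tau\beta$ held fixed, so $\tau\log\frac{\pi_j}{\pi_{j-1}}=\alpha(s_j-s_{j-1})-\tau\log\frac{Z_j}{Z_{j-1}}$, and $\tau\log Z_{j-1}\to\alpha w_{j-1}$ as $\beta\to\infty$. Concretely, $-\tau\,\pi_j\log\frac{\pi_j}{\pi_{j-1}}\to-\alpha\paren*{w_{j-1}-\pi_j s_{j-1}}$, which is generically a strictly negative $O(\alpha)$ quantity (whenever the greedy actions of $s_j$ and $s_{j-1}$ differ), not $O(\tau)$. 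So ``KL bounded by a constant times $\tau$'' is false, and your lower bound on $v_j$ is not established; your alternative max-inequality route also leaves the oscillation term $\alpha(\max_a s_{j-1}-\min_a s_{j-1})$ uncontrolled, and that term is \emph{not} small.

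The fix is the one the paper uses, and it is one line: write $v_k=w_k-\alpha w_{k-1}=\pi_k s_k-\alpha\,\pi_{k-1}s_{k-1}$ and use greediness twice, once for each policy. Greediness of $\pi_{k-1}$ with respect to $s_{k-1}$ gives $\pi_{k-1}s_{k-1}\ge\pi_k s_{k-1}$, hence $v_k\le\pi_k(s_k-\alpha s_{k-1})=\pi_k q_k$; greediness of $\pi_k$ with respect to $s_k$ gives $\pi_k s_k\ge\pi_{k-1}s_k$, hence $v_k\ge\pi_{k-1}(s_k-\alpha s_{k-1})=\pi_{k-1}q_k$. (Equivalently, in your notation: evaluate the max defining $w_j$ at the argmax of $s_{j-1}$ to get $w_j(x)\ge\min_a q_j(x,a)+\alpha w_{j-1}(x)$, which kills the oscillation term.) Both sides of the sandwich are bounded by $\lVert q_k\rVert_\infty\le 1+\gamma H=H$ under the induction hypothesis, which closes the induction directly without the geometric-series unrolling.
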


\begin{proof}
    We prove the claim by induction.
    The claim holds for $k=1$ since $v_0 = \bzero$ by definition.
    Assume that $v_{k-1}$ is bounded by $H$ for some $k \geq 1$.
    Then, from the greediness of the policies $\pi_k$ and $\pi_{k-1}$,
    \begin{align}
        \pi_{k-1} q_k
        = \pi_{k-1} (s_k - \alpha s_{k-1})
        \leq v_k
        \leq \pi_k (s_k - \alpha s_{k-1})
        = \pi_k q_k
    \end{align}
    Since $q_k = r + \gamma \widehat{P}_{k-1} v_{k-1}$ is bounded by $H$
    due to the induction hypothesis, the claim holds.
\end{proof}

\begin{proof}[Proof of \cref{lemma:E_k bound}]
    Consider a fixed $k \in [K]$ and $(x, a) \in \XA$.
    Since
    \begin{align}
        E_k (x, a)
        =
        \frac{\gamma}{M} \sum_{j=1}^k \alpha^{k-j} \sum_{m=1}^{M} \underbrace{
            \paren*{ v_{j-1} (y_{j-1, m, x, a}) - P v_{j-1} (x, a) }
        }_{\text{bounded by } 2 H \text{ from \cref{lemma:v is bounded}}}\,,
    \end{align}
    $E_k (x, a)$ is a sum of bounded martingale differences
    with respect to the filtraion $(\bF_{j, m})_{j = 1, m = 1}^{k, M}$.
    Therefore, using the Azuma-Hoeffding inequality (\cref{lemma:hoeffding}),
	\begin{align}
        \P \paren*{
          \abs{E_k} (x, a)
          \geq
          3 H \sqrt{ \frac{A_\infty \iota_1}{M}}
        }
        \leq \frac{\delta}{4K \aXA}\,,
    \end{align}
    where the bound in $\P (\cdot)$ is simplified by $2 \sqrt{2} \gamma \leq 3$ and $\sum_{j=1}^k \alpha^{2 (k-j)} \leq \sum_{j=1}^k \alpha^{k-j} = A_\infty$.
    Taking the union bound over $(x, a, k) \in \XA \times [K]$,
    \begin{align}
        \P \paren*{\cE_1}
        \geq 1 - \sum_{(x, a) \in \XA} \sum_{k=1}^K \P \paren*{
          \abs{E_k} (x, a)
          \geq
          3 H \sqrt{ \frac{A_\infty \iota_1}{M}}
        }
        \geq
        1 - \frac{\delta}{4}\,,
    \end{align}
    and thus $\P \paren*{\cE_1^c} \leq \delta / 4$, which is the desired result.
\end{proof}

\begin{proof}[Proof of \cref{lemma:non-stationary coarse bound}]
    We condition the proof by the event $\Eone$.
    This event occurs with probability at least $1 - \delta / 4$.
    Note that under the current setting of $\alpha$, $A_\infty = H$.
    From \cref{lemma:E_k bound} and the settings of $\alpha$ and $M$,
    \begin{align}
        \sum_{j=0}^{k-1} \gamma^j \paren*{
            \pi_k P_{k-j}^{k-1} - \pi_* P_*^j
        } E_{k-j}
        \leq
        2 \sum_{j=0}^{k-1} \gamma^j \infnorm{E_{k-j}}
        \leq
        \frac{\square H \sqrt{H} \varepsilon}{\sqrt{c_2}}\,.
    \end{align}
    Thus, from \cref{lemma:non-stationary error propagation},
    $
        \vf{*} - \vf{\pi'_k}
        \leq
        \square \sqrt{H / c_2} \varepsilon + 2 (H + k) \gamma^k \bone
    $.
    Finally, using \cref{lemma:k gamma to k-th inequality},
    \begin{align}
        2 (H + K) \gamma^K
        \leq \frac{\square \varepsilon}{c_1},
    \end{align}
    and thus,
    \begin{align}
        \infnorm{\vf{*} - \vf{\pi'_K}}
        \leq
        \square \varepsilon \sqrt{\frac{H}{c_2}}
        + \frac{\square \varepsilon}{c_1}
        \leq
        \square \paren*{ \frac{1}{c_1} + \frac{1}{\sqrt{c_2}} } \sqrt{H} \varepsilon\,.
    \end{align}
    Therefore, for some $c_1$ and $c_2$, the claim holds.
\end{proof}

\subsection{Proof of \texorpdfstring{
        \cref{lemma:v error prop}
    }{
        Lemma~\ref{lemma:v error prop}
    } (Value Estimation Error Bound)
}\label{subsec:proof of v error prop}

We first prove an intermediate result.

\begin{lemma}\label{lemma:pre v error prop}
    For any $k \in [K]$,
    \begin{align}
        \vf{\pi'_{k-1}}
        + \sum_{j=0}^{k-1} \gamma^j \pi_{k-1} P_{k-1-j}^{k-2} \varepsilon_{k-j}
        - \gamma^k H \bone
        \leq
        v_k
        \leq
        \vf{\pi'_k}
        + \sum_{j=0}^{k-1} \gamma^j \pi_k P_{k-j}^{k-1} \varepsilon_{k-j}
        + \gamma^k H \bone\,.
    \end{align}
\end{lemma}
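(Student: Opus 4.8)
The plan is to prove both inequalities at once by induction on $k$. Two elementary facts do all the work. First, the sandwich $\pi_{k-1} q_k \le v_k \le \pi_k q_k$, which is exactly the chain of inequalities obtained from the greediness of $\pi_{k-1}$ and $\pi_k$ in the proof of \cref{lemma:v is bounded} (recall $q_k = s_k - \alpha s_{k-1}$, $w_k = \pi_k s_k$, $w_{k-1} = \pi_{k-1}s_{k-1}$). Second, the identity $q_k = r + \gamma P v_{k-1} + \varepsilon_k$, immediate from $q_k = r + \gamma \widehat{P}_{k-1} v_{k-1}$ and the definition of $\varepsilon_k$. The base case $k=1$ is a one-line check: $\varepsilon_1 = \bzero$, $v_1 = \pi_1 q_1 = \pi_1 r$, and $\vf{\pi'_1} = \pi_1\paren*{r + \gamma P \vf{\pi_0}}$ with $-H\bone \le \vf{\pi_0} \le H\bone$ gives both bounds; the step below covers $k \ge 2$ (and the upper-bound step in fact also goes through at $k=1$ starting from the trivial bound $v_0 \le \vf{\pi_0} + H\bone$).

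For the induction step, consider first the upper bound. Using the right half of the sandwich, $q_k = r + \gamma P v_{k-1} + \varepsilon_k$, the induction hypothesis on $v_{k-1}$, monotonicity of the stochastic matrix $\pi_k P$, and $\pi_k P\bone = \bone$ (so the slack $\gamma^{k-1}H\bone$ turns into $\gamma^k H\bone$), I get
\begin{align}
v_k \le \pi_k q_k &= \pi_k r + \gamma \pi_k P v_{k-1} + \pi_k \varepsilon_k \\
&\le \pi_k\paren*{r + \gamma P \vf{\pi'_{k-1}}} + \gamma \sum_{j=0}^{k-2} \gamma^j\, \pi_k P \pi_{k-1} P_{k-1-j}^{k-2} \varepsilon_{k-1-j} + \gamma^k H \bone + \pi_k \varepsilon_k\,.
\end{align}
Now $\pi_k\paren*{r + \gamma P \vf{\pi'_{k-1}}} = \vf{\pi'_k}$ by the Bellman recursion for the non-stationary policy $\pi'_k$, while $\pi_k P \pi_{k-1} = \pi_k P^{\pi_{k-1}}$ and $P^{\pi_{k-1}} P_{k-1-j}^{k-2} = P_{k-1-j}^{k-1}$ rewrite the middle sum as $\sum_{j=0}^{k-2} \gamma^{j+1} \pi_k P_{k-1-j}^{k-1} \varepsilon_{k-1-j}$. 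Re-indexing by $i = j+1$ and noting that the leftover $\pi_k \varepsilon_k = \gamma^0 \pi_k P_k^{k-1}\varepsilon_k$ is precisely the missing $i=0$ term (since $P_k^{k-1} = I$) merges the two error contributions into $\sum_{i=0}^{k-1}\gamma^i \pi_k P_{k-i}^{k-1}\varepsilon_{k-i}$, which is the asserted upper bound.

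The lower bound mirrors this: start from $v_k \ge \pi_{k-1} q_k = \pi_{k-1} r + \gamma \pi_{k-1} P v_{k-1} + \pi_{k-1}\varepsilon_k$, insert the induction hypothesis' lower bound on $v_{k-1}$ (whose front factors carry $\pi_{k-2}$), use monotonicity of $\pi_{k-1}P$, the identities $\pi_{k-1} P^{\pi_{k-2}} P_{k-2-j}^{k-3} = \pi_{k-1} P_{k-2-j}^{k-2}$ and $\pi_{k-1}\paren*{r + \gamma P\vf{\pi'_{k-2}}} = \vf{\pi'_{k-1}}$, and re-index the same way, with $\pi_{k-1}\varepsilon_k$ supplying the $i=0$ term.

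I expect the only genuine difficulty to be the bookkeeping: tracking which policy prefaces each $\varepsilon_{k-i}$, propagating the matrix products $P_j^i$ correctly through the shift $i = j+1$, and confirming that the freshly created $\pi_k\varepsilon_k$ (resp.\ $\pi_{k-1}\varepsilon_k$) term lands exactly in the $i=0$ slot. The propagation of the $\gamma^k H\bone$ slack and the collapses $\pi_k(r+\gamma P\vf{\pi'_{k-1}}) = \vf{\pi'_k}$, $\pi_{k-1}(r+\gamma P\vf{\pi'_{k-2}}) = \vf{\pi'_{k-1}}$ are routine once the indexing is pinned down.
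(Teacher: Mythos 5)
Your proposal is correct and follows essentially the same route as the paper: both rest on the greediness sandwich $\pi_{k-1} q_k \leq v_k \leq \pi_k q_k$ together with $q_k = r + \gamma P v_{k-1} + \varepsilon_k$, run as an induction in $k$. The only (cosmetic) difference is that the paper unrolls the recursion to the explicit sum $\sum_{j=0}^{k-1}\gamma^j \pi_k P_{k-j}^{k-1}(r+\varepsilon_{k-j})$ and then compares the reward part with $T_0^{k-1}\qf{\pi_0}$ to absorb the tail into the $\gamma^k H$ slack, whereas you keep $\vf{\pi'_{k-1}}$ inside the induction hypothesis and use the recursion $\vf{\pi'_k} = \pi_k(r+\gamma P \vf{\pi'_{k-1}})$ directly; your index bookkeeping and the propagation of the $\gamma^k H\bone$ slack check out.
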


\begin{proof}
    From the greediness of $\pi_{k-1}$,
    $
        v_k
        =
        w_k - \alpha w_{k-1}
        \leq
        \pi_k (s_k - \alpha s_{k-1})
        =
        \pi_k (
            r + \gamma P v_{k-1} + \varepsilon_k
        )
    $.
    By induction on $k$, therefore,
    \begin{align}
        v_k
        \leq
        \sum_{j=0}^{k-1} \gamma^j \pi_k P_{k-j}^{k-1} \paren*{
            r + \varepsilon_{k-j}
        }
        + \underbrace{\gamma^k \pi_k P_0^{k-1} v_0}_{= \bzero}
        =
        \sum_{j=0}^{k-1} \gamma^j \pi_k P_{k-j}^{k-1} \paren*{
            r + \varepsilon_{k-j}
        }\,,
    \end{align}
    Note that
    \begin{align}
        T_0^{k-1} \qf{\pi_0}
        =
        \sum_{j=0}^{k-1} \gamma^j P_{k-j}^{k-1} r + \gamma^k \underbrace{P_0^{k-1} \qf{\pi_0}}_{\geq - H\bone}
        \implies
        \sum_{j=0}^{k-1} \gamma^j P_{k-j}^{k-1} r
        \leq
        T_0^{k-1} \qf{\pi_0} + \gamma^k H\,.
    \end{align}
    Accordingly, 
    $
        v_k
        \leq
        \pi_k T_0^{k-1} \qf{\pi_0}
        + \sum_{j=0}^{k-1} \gamma^j \pi_k P_{k-j}^{k-1} \varepsilon_{k-j}
        + \gamma^k H \bone\,.
    $
    
    Similarly, from the greediness of $\pi_k$,
    $
        v_k
        =
        w_k - \alpha w_{k-1}
        \geq
        \pi_{k-1} (s_k - \alpha s_{k-1})
        \geq
        \pi_{k-1} (
            r + \gamma P v_{k-1} + \varepsilon_k
        )
    $.
    By induction on $k$, therefore,
    \begin{align}
        v_k
        \geq
        \sum_{j=0}^{k-1} \gamma^j \pi_{k-1} P_{k-1-j}^{k-2} \paren*{
            r + \varepsilon_{k-j}
        } 
        + \underbrace{\gamma^{k-1} \pi_{k-1} P_0^{k-2} P v_0}_{=\bzero}\,.
    \end{align}
    Note that $T_0^{k-2} \qf{\pi_0} = T_0^{k-2} (r + \gamma P \vf{\pi_0})$, and
    \begin{align}
        T_{0}^{k-2} \qf{\pi_0}
        =
        \sum_{j=0}^{k-1} \gamma^j P_{k-1-j}^{k-2} r + \gamma^k \underbrace{P_0^{k-2} P \vf{\pi_0}}_{\leq H \bone}
        \implies
        \sum_{j=0}^{k-1} \gamma^j P_{k-1-j}^{k-2} r
        \geq
        T_0^{k-2} \qf{\pi_0} - \gamma^k H\,.
    \end{align}
    Accordingly, 
    $
        v_k
        \geq
        \pi_{k-1} T_0^{k-2} \qf{\pi_0}
        + \sum_{j=0}^{k-1} \gamma^j \pi_{k-1} P_{k-1-j}^{k-2} \varepsilon_{k-j}
        - \gamma^k H \bone\,.
    $
\end{proof}

\begin{proof}[Proof of \cref{lemma:v error prop}]
    From \cref{lemma:pre v error prop} and $\pi_k T^{\pi_{k-1:1}} \qf{\pi_0} = \vf{\pi'_k} \leq \vf{*}$, we have that
    \begin{align}
        \vf{\pi'_{k-1}}
        + \sum_{j=0}^{k-1} \gamma^j \pi_{k-1} P_{k-1-j}^{k-2} \varepsilon_{k-j}
        - 2 \gamma^k H \bone
        \leq
        v_k
        \leq
        \vf{*}
        + \sum_{j=0}^{k-1} \gamma^j \pi_k P_{k-j}^{k-1} \varepsilon_{k-j}
        + 2 \gamma^k H \bone\,,
    \end{align}
    where we loosened the bound by multiplying $\gamma^k H$ by $2$.
    By simple algebra, the lower bound for $\vf{*} - v_k$ is obtained.
    On the other hand,
    from \cref{lemma:non-stationary error propagation},
    \begin{align}
        \vf{\pi'_{k-1}}
        \geq
        \vf{*}
        - \frac{1}{A_\infty} \sum_{j=0}^{k-2} \gamma^j \paren*{
            \pi_{k-1} P_{k-1-j}^{k-2} - \pi_* P_*^j
        } E_{k-1-j}
        - 2 H \paren*{ \alpha^{k-1} + \frac{A_{\gamma, k-1}}{A_\infty} } \bone
    \end{align}
    for any $k \in \{2, \ldots, K\}$.
    Therefore, we have that
    \begin{align}
        \vf{*} - v_k
        &\leq
        2 H \paren*{ \alpha^{k-1} + \gamma^k + \frac{A_{\gamma, k-1}}{A_\infty} } \bone
        \\
        &\hspace{2em}
        + \frac{1}{A_\infty} \sum_{j=0}^{k-2} \gamma^j \paren*{
            \pi_{k-1} P_{k-1-j}^{k-2} - \pi_* P_*^j
        } E_{k-1-j}
        - \sum_{j=0}^{k-1} \gamma^j \pi_{k-1} P_{k-1-j}^{k-2} \varepsilon_{k-j}
    \end{align}
    for any $k \in \{2, \ldots, K\}$.
    
    Finally, for $k=1$, since $v_1 = \pi_1 q_1 = \pi_1 r$,
    \begin{align}
        - \gamma H \bone
        \leq
        \pi_* \paren*{\qf{*} - r}
        \leq
        \vf{*} - v_1
        \leq
        \gamma \pi_* P \vf{*}
        \leq \gamma H \bone\,.
    \end{align}
    As $\Gamma_1 \geq \bzero$, the claim holds for $k=1$ too.
\end{proof}

\subsection{Proof of \texorpdfstring{
        \cref{lemma:variance upper bounds,lemma:eps_k bound}
    }{
        Lemmas~\ref{lemma:variance upper bounds} and \ref{lemma:eps_k bound}
    } (Value Estimation Variance Bound)
}\label{subsec:proof of coarse v bound}

\begin{proof}[Proof of \cref{lemma:eps_k bound}]
    Consider a fixed $k \in [K]$ and $(x, a) \in \XA$.
    Since
    \begin{align}
        \varepsilon_k (x, a) = \frac{\gamma}{M} \sum_{m=1}^{M} \underbrace{
            \paren*{ v_{k-1} (y_{k-1, m, x, a}) - P v_{k-1} (x, a) }
        }_{\text{bounded by } 2H \text{ from \cref{lemma:v is bounded}}}\,,
    \end{align}
    $\varepsilon_k (x, a)$ is a sum of martingale differences with respect to the filtraion $(\bF_{k, m})_{m = 1}^{M}$ and bounded by $2 \gamma H / M$.
    Therefore, using the Azuma-Hoeffding inequality (\cref{lemma:hoeffding}),
	\begin{align}
        \P \paren*{
          \abs{\varepsilon_k} (x, a)
          \geq
          3 H \sqrt{\frac{\iota_1}{M}}
        }
        \leq \frac{\delta}{4 K \aXA}\,,
    \end{align}
    where the bound in $\P (\cdot)$ is simplified by $2 \sqrt{2} \leq 3$.
    Taking the union bound over $(x, a, k) \in \XA \times [K]$,
    \begin{align}
        \P \paren*{\cE_2}
        \geq 1 - \sum_{(x, a) \in \XA} \sum_{k=1}^K \P \paren*{
          \abs{\varepsilon_k} (x, a)
          \geq
          3 H \sqrt{\frac{\iota_1}{M}}
        }
        \geq
        1 - \frac{\delta}{4}\,,
    \end{align}
    and thus $\P \paren*{\cE_2^c} \leq \delta / 4$, which is the desired result.
\end{proof}

Next, we prove a uniform bound on $\vf{*} - v_k$.

\begin{lemma}\label{lemma:coarse v bound}
    Conditioned on $\Eone \cap \Etwo$,
    \begin{align}
        \norm{\vf{*} - v_k}_\infty
        <
        2 H \min \brace*{
            1,
            \gamma^k
            + \alpha^{k-1}
            + \frac{A_{\gamma, k-1}}{A_\infty}
            + 6 H \sqrt{\frac{\iota_1}{M}}
        }
    \end{align}
    for all $k \in [K]$, where $1/0 \df \infty$.
\end{lemma}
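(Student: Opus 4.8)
The plan is to sandwich $\vf{*} - v_k$ between the two sides of \cref{lemma:v error prop} and then bound, in $\norm{\cdot}_\infty$, every error vector appearing there by the (strict) concentration estimates of \cref{lemma:E_k bound,lemma:eps_k bound}, which hold on $\cE_1 \cap \cE_2$. The ``$\min$ with $1$'' part is the trivial estimate: $\vf{*}$ satisfies $\norm{\vf{*}}_\infty \le H$ and, by carrying the induction of \cref{lemma:v is bounded} one step further, $\norm{v_k}_\infty \le H$, so $\norm{\vf{*} - v_k}_\infty \le 2H$; whether this branch supplies the strict inequality is immaterial downstream, since the data-dependent branch below --- the one that actually feeds \cref{lemma:variance upper bounds} --- is genuinely strict. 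So the real work is to show $\norm{\vf{*} - v_k}_\infty < 2H\paren*{\gamma^k + \alpha^{k-1} + A_{\gamma,k-1}/A_\infty + 6H\sqrt{\iota_1/M}}$, which I would do in the two directions separately.

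For the upper bound on $\vf{*} - v_k$: \cref{lemma:v error prop} gives $\vf{*} - v_k \le \Gamma_{k-1} + 2H\gamma^k\bone - \sum_{j=0}^{k-1}\gamma^j\pi_{k-1}P_{k-1-j}^{k-2}\varepsilon_{k-j}$. The deterministic part of $\Gamma_{k-1}$ is exactly $2H\paren*{\alpha^{k-1} + A_{\gamma,k-1}/A_\infty}\bone$. In the martingale part of $\Gamma_{k-1}$, bound each stochastic matrix by $\bone$, insert $\norm{E_j}_\infty < 3H\sqrt{A_\infty\iota_1/M}$ from \cref{lemma:E_k bound}, and use $\sum_{j=0}^{k-2}\gamma^j < H \le A_\infty$ --- the last inequality because $\alpha \ge \gamma$ under both theorem settings, so $A_\infty = 1/(1-\alpha) \ge 1/(1-\gamma) = H$; this contributes less than $\frac{2}{A_\infty}\cdot H\cdot 3H\sqrt{A_\infty\iota_1/M} \le 6H^2\sqrt{\iota_1/M}$. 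The same crude estimate on the $\varepsilon$-sum contributes less than $3H^2\sqrt{\iota_1/M}$. Since $6 + 3 \le 12$, adding everything gives $\vf{*} - v_k < 2H\paren*{\gamma^k + \alpha^{k-1} + A_{\gamma,k-1}/A_\infty + 6H\sqrt{\iota_1/M}}\bone$ componentwise.

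For the lower bound: \cref{lemma:v error prop} gives $v_k - \vf{*} \le 2\gamma^k H\bone + \sum_{j=0}^{k-1}\gamma^j\pi_{k-1}P_{k-j}^{k-1}\varepsilon_{k-j}$; bounding the stochastic matrices by $\bone$ and the $\varepsilon$-sum as before yields $v_k - \vf{*} < \paren*{2\gamma^k H + 3H^2\sqrt{\iota_1/M}}\bone$, which is dominated by the same right-hand side (as $\tfrac32 \le 6$ and the remaining summands are nonnegative). Combining the two directions gives the strict $\norm{\cdot}_\infty$ bound, and intersecting with the trivial $2H$ estimate yields the $\min$.

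There is no conceptual obstacle here --- the argument is a substitution followed by the triangle inequality. The points to watch are: (i) the constant bookkeeping, i.e.\ not spending more than the slack $9H^2\sqrt{\iota_1/M} \le 12H^2\sqrt{\iota_1/M}$ affords, which is exactly why the clean constant $6$ survives; (ii) the edge case $k = 1$, where the $\varepsilon$- and $E$-sums are empty and $\Gamma_0 = 2H\bone$, consistent with $\varepsilon_1 = E_1 = \bzero$ and with the $k=1$ case already handled inside \cref{lemma:v error prop}; and (iii) confirming $\alpha \ge \gamma$, which holds since the theorems set $\alpha \in \{\gamma,\ 1 - (1-\gamma)^2\}$ and $1 - (1-\gamma)^2 = \gamma(2 - \gamma) \ge \gamma$.
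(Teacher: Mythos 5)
Your proof is correct and takes essentially the same route as the paper's: apply \cref{lemma:v error prop} in both directions, control the $E$- and $\varepsilon$-sums on $\cE_1 \cap \cE_2$ via \cref{lemma:E_k bound,lemma:eps_k bound}, and intersect with the trivial $2H$ bound from \cref{lemma:v is bounded}, with the same final constant (the paper collects the noise as $3H\sqrt{\iota_1/M}\,(1+1/\sqrt{A_\infty}) \le 6H\sqrt{\iota_1/M}$). The only cosmetic remark is that your appeal to $\alpha \ge \gamma$ (hence $H \le A_\infty$) is unnecessary: $A_\infty \ge 1$ alone gives $\frac{2H}{A_\infty}\cdot 3H\sqrt{A_\infty \iota_1/M} \le 6H^2\sqrt{\iota_1/M}$, which is exactly what the paper uses and keeps the lemma independent of the theorem-specific parameter settings.
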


\begin{proof}
    Let
    $
        e_k \df \displaystyle \gamma^k H + H \max_{j \in [k]} \infnorm{\varepsilon_j}
    $.
    From \cref{lemma:v error prop},
    $
        \vf{*} - v_k
        \geq
        - 2 e_k \bone
    $
    for any $k \in [K]$, and
    \begin{align}
        \vf{*} - v_k
        &\leq
        2 H \paren*{
            \alpha^{k-1}
            + \frac{A_{\gamma, k-1}}{A_\infty}
            + \frac{1}{A_\infty} \max_{j \in [k-1]} \infnorm{E_j}
        } \bone + 2 e_k \bone
    \end{align}
    for any $k \in \{ 2, \ldots, K \}$.
    Note that $\infnorm{\vf{*} - v_k} \leq 2H$ from \cref{lemma:v is bounded} for any $k$.
    Combining these results with \cref{lemma:E_k bound,lemma:eps_k bound},
    \begin{align}
        \norm{\vf{*} - v_k}_\infty
        &<
        2 H \min \brace*{
            1,
            \gamma^k
            + \alpha^{k-1}
            + \frac{A_{\gamma, k-1}}{A_\infty}
            + 3 H \sqrt{\frac{\iota_1}{M}} \paren*{1 + \sqrt{\frac{1}{A_\infty}}}
        }
        \\
        &\leq
        2 H \min \brace*{
            1,
            \gamma^k
            + \alpha^{k-1}
            + \frac{A_{\gamma, k-1}}{A_\infty}
            + 6 H \sqrt{\frac{\iota_1}{M}}
        }
    \end{align}
    for all $k \in [K]$, where we used the fact that $1 \leq A_\infty$.
    This concludes the proof.
\end{proof}

Now, we are ready to prove \cref{lemma:variance upper bounds}.

\begin{proof}[Proof of \cref{lemma:variance upper bounds}]
    Clearly $\sigma (v_0) = \bzero$ since $v_0 = \bzero$.
    From \cref{lemma:variance decomposition},
    $
        \sigma(v_k)
        \leq
        \sigma \paren*{v_k - \vf{*}} + \sigma(\vf{*})\,.
    $
    Using Popoviciu's inequality on variances (\cref{lemma:popoviciu}) together with \cref{lemma:coarse v bound},
    \begin{align}
        \sigma \paren*{v_k - \vf{*}}
        &\leq
        2 H \min \brace*{
            1,
            \gamma^k
            + \alpha^{k-1}
            + \frac{A_{\gamma, k-1}}{A_\infty}
            + 6 H \sqrt{\frac{\iota_1}{M}}
        }\,,
    \end{align}
    where we used a simple formula, $\min \{ a, b \}^2 = \min \{ a^2, b^2 \}$ for any scalars $a, b \geq 0$.
    Finally, loosening the bound by replacing $\gamma^k + \alpha^{k-1}$ by $2 \max\{\alpha, \gamma\}^{k-1}$,
    the claim holds.
\end{proof}

\subsection{Proof of \texorpdfstring{
        \cref{lemma:refined eps_k bound,lemma:refined E_k bound}
    }{
        Lemmas~\ref{lemma:refined E_k bound} and \ref{lemma:refined eps_k bound}
    } (Error Bounds with Bernstein's Inequality)
}\label{subsec:proof of refined bound}

\begin{proof}[Proof of \cref{lemma:refined E_k bound}]
    Consider a fixed $k \in [K]$ and $(x, a) \in \XA$.
    Since
    \begin{align}
        E_k (x, a)
        =
        \frac{\gamma}{M} \sum_{j=1}^k \alpha^{k-j} \sum_{m=1}^{M} \underbrace{
            \paren*{ v_{j-1} (y_{j-1, m, x, a}) - P v_{j-1} (x, a) }
        }_{\text{bounded by } 2 H \text{ from \cref{lemma:v is bounded}}}\,,
    \end{align}
    $E_k (x, a)$ is a sum of bounded martingale differences
    with respect to the filtraion $(\bF_{j, m})_{j = 1, m = 1}^{k, M}$.
    From the facts that $v_0=\bzero$, and $\gamma \leq 1$,
    \begin{align}
        V'
        = \frac{\gamma^2}{M} \sum_{j=1}^k \alpha^{2 (k-j)} \PVar \paren*{v_{j-1}} (x, a)
        \leq
        \frac{1}{M} \underbrace{
            \sum_{j=2}^k \alpha^{2 (k-j)} \PVar \paren*{v_{j-1}} (x, a)
        }_{\df \heartsuit}\,,
    \end{align}
    Since we are conditioned with the event $\Eone \cap \Etwo$, the inequality \eqref{eq:variance upper bounds} in \cref{lemma:variance upper bounds} holds and implies that the predictable quadratic variation $V'$ satisfies the following inequality:
    \begin{align}
        \heartsuit
        &\leq
        \sum_{j=2}^k \alpha^{2 (k-j)} \paren*{
            \sigma (\vf{*}) (x, a)
            + 2H
            \min \brace*{
                1,
                2 \max \brace{\alpha, \gamma}^{j-2}
                + \frac{A_{\gamma, j-2}}{A_\infty}
                + 6 H \sqrt{\frac{\iota_1}{M}}
            }
        }^2
        \\
        &\leq
        \sum_{j=2}^k \alpha^{2 (k-j)} \paren*{
            \sigma (\vf{*}) (x, a)
            + 2H
            \paren*{
                2 \max \brace{\alpha, \gamma}^{j-2}
                + \frac{A_{\gamma, j-2}}{A_\infty}
                + 6 H \sqrt{\frac{\iota_1}{M}}
            }
        }^2
        \\
        &\leq
        4 \sum_{j=2}^k \alpha^{2 (k-j)} \paren*{
            \PVar (\vf{*}) (x, a)
            + 4 H^2
            \paren*{
                4 \max \brace{\alpha, \gamma}^{2(j-2)}
                + \frac{A_{\gamma, j-2}^2}{A_\infty^2}
                + \frac{36 H^2 \iota_1}{M}
            }
        }\,,
    \end{align}
    where the last line follows from \cref{lemma:square inequality}.
    Consequently, $V'$ is bounded by
    \begin{align}
        V'
        &\leq
        \frac{4}{M} \sum_{j=2}^k \alpha^{2 (k-j)} \paren*{
            \PVar (\vf{*}) (x, a)
            + 4 H^2
            \paren*{
                4 \max \brace{\alpha, \gamma}^{2(j-2)}
                + \frac{A_{\gamma, j-2}^2}{A_\infty^2}
                + \frac{36 H^2 \iota_1}{M}
            }
        }\,,
    \end{align}
    which is equal to $V_k (x, a)$.
    Using \cref{lemma:conditional bernstein} and taking the union bound over $(x, a, k) \in \XA \times [K]$,
	\begin{align}
        \P \parenc*{
          \exists (x, a, k) \in \XA \times [K]
          \text{ s.t. }
          \abs{E_K} (x, a)
          \geq
          \frac{4 H \iota_2}{3 M} + \sqrt{2 V_k (x, a) \iota_2}
        }{
            \cE_1 \cap \cE_2
        }
        \leq \frac{\delta}{4}\,.
    \end{align}
    (Recall that $\P (\Eone \cap \Etwo) \geq 1 - \dfrac{\delta}{2} \geq \dfrac{1}{2}$,
    and hence, we need to use $\iota_2$.)
    Thus, $\P \parenc*{\cE_3^c}{\cE_1 \cap \cE_2} \leq \dfrac{\delta}{4}$.
\end{proof}

\begin{proof}[Proof of \cref{lemma:refined eps_k bound}]
    Consider a fixed $k \in [K]$ and $(x, a) \in \XA$.
    Since
    \begin{align}
        \varepsilon_k (x, a)
        =
        \frac{\gamma}{M} \sum_{m=1}^{M} \underbrace{
            \paren*{ v_{k-1} (y_{k-1, m, x, a}) - P v_{k-1} (x, a) }
        }_{\text{bounded by } 2 H \text{ from \cref{lemma:v is bounded}}}\,,
    \end{align}
    $\varepsilon_k (x, a)$ is a sum of bounded martingale differences
    with respect to $\bF_{k, m}$.
    Since we are conditioned with the event $\Eone \cap \Etwo$, the inequality \eqref{eq:variance upper bounds} in \cref{lemma:variance upper bounds} holds and implies that the predictable quadratic variation $V'$ can be shown to 
    satisfy the following inequality as in the proof of \cref{lemma:refined E_k bound}:
    \begin{align}
        V'
        = \frac{\gamma^2}{M} \PVar \paren*{v_{k-1}} (x, a)
        \leq
        \frac{4}{M}
        \overline{\PVar}_k
        \,,
    \end{align}
    where the last line is equal to $W_k (x, a)$. (Note that $v_0 = \bzero$.)
    
    Using \cref{lemma:conditional bernstein} and taking the union bound over $(x, a, k) \in \XA \times [K]$,
	\begin{align}
        \P \parenc*{
          \exists (x, a, k) \in \XA \times [K]
          \text{ s.t. }
          \abs{\varepsilon_k} (x, a)
          \geq
          \frac{4 H \iota_2}{3 M} + \sqrt{2 W_k (x, a) \iota_2}
        }{
            \cE_1 \cap \cE_2
        }
        \leq \frac{\delta}{4}\,.
    \end{align}
    (Recall that $\P (\Eone \cap \Etwo) \geq 1 - \dfrac{\delta}{2} \geq \dfrac{1}{2}$,
    and hence, we need to use $\iota_2$.)
    Thus, $\P \parenc*{\cE_4^c}{\cE_1 \cap \cE_2} \leq \dfrac{\delta}{4}$.
\end{proof}

\section{Proof of Lemmas for \texorpdfstring{
        \cref{theorem:pac bound}
    }{
        Theorem~\ref{theorem:pac bound}
    } (Bound for a Stationary Policy)
}

We use the same notations as those used in \cref{sec:proof of lemmas for non-stationary policy}.

\subsection{Proof of \texorpdfstring{
        \cref{lemma:error propagation}
    }{
        Lemma~\ref{lemma:error propagation}
    } (Error Propagation Analysis)
}\label{subsec:proof of error propagation}

To prove \cref{lemma:error propagation}, we need the following lemma.

\begin{lemma}\label{lemma:delta_k bound}
    For any $k \in [K]$, let $\Delta_k \df w_k - w_{k-1}$.
    Then, for any $k \in [K]$,
    \begin{align}
        \pi_{k-1} \sum_{j=0}^{k-1} \gamma^j P_{k-1-j}^{k-2} E_{k-j}'
        - A_{\gamma, k} \bone
        \leq
        \Delta_k
        \leq
        \pi_k \sum_{j=0}^{k-1} \gamma^j P_{k-j}^{k-1} E_{k-j}'
        + A_{\gamma, k} \bone\,.
    \end{align}
\end{lemma}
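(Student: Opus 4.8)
The plan is to prove both inequalities simultaneously by induction on $k$, using the identity \eqref{eq:rewrite s_k}, $s_k = A_k r + \gamma P w_{k-1} + E_k$, as the engine. Subtracting the same identity at index $k-1$ and using $A_k - A_{k-1} = \alpha^{k-1}$ together with $E_k = \varepsilon_k + \alpha E_{k-1}$ yields the one-step recursion
\begin{align}
    s_k - s_{k-1} = \alpha^{k-1} r + \gamma P \Delta_{k-1} + E_k'\,, \qquad E_k' \df \varepsilon_k - (1-\alpha) E_{k-1}\,,
\end{align}
where $\Delta_0 = w_0 - w_{-1} = \bzero$ and $E_1' = \bzero$ (since $\varepsilon_1 = \bzero$ and $E_0 = \bzero$). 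Note that $E_k'$ is exactly the quantity $E_{k+1-j}'$ appearing in \cref{lemma:error propagation} for the corresponding index.

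First I would convert this into a recursion for $\Delta_k$ via the greediness of the policies. Because $\pi_k$ is greedy for $s_k$ we have $w_k = \pi_k s_k$, whereas $w_{k-1} = \max_a s_{k-1}(\cdot,a) \geq \pi_k s_{k-1}$; hence $\Delta_k = w_k - w_{k-1} \leq \pi_k(s_k - s_{k-1})$. Symmetrically, $w_{k-1} = \pi_{k-1} s_{k-1}$ and $w_k \geq \pi_{k-1} s_k$ give $\Delta_k \geq \pi_{k-1}(s_k - s_{k-1})$. Plugging in the recursion, the upper bound reads $\Delta_k \leq \alpha^{k-1}\pi_k r + \gamma \pi_k P \Delta_{k-1} + \pi_k E_k'$, and the lower bound the same with $\pi_{k-1}$ in front of every term.

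Then I would run the induction. For the upper bound, substitute the inductive hypothesis for $\Delta_{k-1}$ into $\gamma\pi_k P\Delta_{k-1}$ and simplify using $P\pi_{k-1} = P^{\pi_{k-1}}$, the composition rule $P^{\pi_{k-1}} P_{k-1-j}^{k-2} = P_{k-1-j}^{k-1}$, and $\pi_k P\bone = \bone$: after reindexing, its error part becomes $\sum_{j=1}^{k-1}\gamma^j \pi_k P_{k-j}^{k-1}E_{k-j}'$ and its constant part $\gamma A_{\gamma,k-1}\bone$. Combining with the $j=0$ term $\pi_k E_k' = \pi_k P_k^{k-1}E_k'$ and bounding $\alpha^{k-1}\pi_k r \leq \alpha^{k-1}\bone$ (using $r\in[-1,1]^{\aXA}$ and that each $\pi_k P_{k-j}^{k-1}$ is row-stochastic) reconstructs exactly $\pi_k\sum_{j=0}^{k-1}\gamma^j P_{k-j}^{k-1} E_{k-j}'$ together with a reward term; the reward contributions form a geometric sum, which one bounds by $A_{\gamma,k}\bone$. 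The lower bound is obtained by the identical manipulation with signs reversed and $\pi_k, P_{k-j}^{k-1}$ replaced by $\pi_{k-1}, P_{k-1-j}^{k-2}$. The base case $k=1$ is immediate since $\Delta_1 = w_1 = \pi_1 r$, which is bounded componentwise by $1$, and the error sums vanish.

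The step I expect to be the main obstacle is the bookkeeping of the non-stationary matrix products $P_j^i$: one must keep the sub- and superscripts aligned as the recursion unrolls (so that prepending $P^{\pi_{k-1}}$ turns $P_{k-1-j}^{k-2}$ into $P_{k-1-j}^{k-1}$), and check that the asymmetric placement of $\pi_k$ versus $\pi_{k-1}$ — and of $P_{k-j}^{k-1}$ versus $P_{k-1-j}^{k-2}$ — in the upper and lower bounds survives the inductive step. Tracking the $\gamma$-powers in the reward contributions carefully enough that the accumulated $\bone$-coefficient collapses to $A_{\gamma,k}$ is a secondary point that also needs attention.
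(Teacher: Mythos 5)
Your proposal is correct and takes essentially the same route as the paper: the same greediness sandwich $\pi_{k-1}(s_k - s_{k-1}) \leq \Delta_k \leq \pi_k (s_k - s_{k-1})$, the same one-step recursion $s_k - s_{k-1} = \alpha^{k-1} r + \gamma P \Delta_{k-1} + E_k'$ obtained from \eqref{eq:rewrite s_k}, and the same induction (the paper writes out only the upper bound and notes the lower bound is symmetric). One small remark: the accumulated reward constant in your unrolling is $\sum_{j=0}^{k-1}\gamma^j \alpha^{k-1-j} = A_{\gamma,k}/\gamma$ rather than $A_{\gamma,k}$, but the paper's own proof makes the identical simplification (it writes $\alpha^{k-1}+\gamma A_{\gamma,k-1} = A_{\gamma,k}$), so this harmless constant slack is inherited from the statement and not a flaw introduced by your argument.
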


\begin{proof}
  We prove only the upper bound by induction as the proof for a lower bound is similar.
  We have that
  $
        \Delta_k
        =
        \pi_k s_k - \pi_{k-1} s_{k-1}
        \leq
        \pi_k \paren{ s_k - s_{k-1} }
  $,
  where the inequality follows from the greediness of $\pi_{k-1}$.
  Let $\heartsuit_k \df s_k - s_{k-1}$.
  Since $s_0 = \bzero$,
  $
      \heartsuit_1
      = r + E_1'
      \leq \bone + E_1'
  $.
  From the monotonicity of $\pi_1$, the claim holds for $k=1$.
  Assume that for some $k-1 \geq 1$, the claim holds.
  Then, from the equation \eqref{eq:rewrite s_k}, the induction hypothesis,
  and the monotonicity of $P$,
  \begin{align}
    \heartsuit_k
    &=
    \paren*{A_k - A_{k-1}} r
    + \gamma P \Delta_{k-1}
    + E_k'
    \\
    &\leq
    \sum_{j=0}^{k-1} \gamma^j P_{k-j}^{k-1} E_{k-j}'
    + (\alpha^{k-1} + \gamma A_{\gamma, k-1}) \bone
    =
    \sum_{j=0}^{k-1} \gamma^j P_{k-j}^{k-1} E_{k-j}'
    + A_{\gamma, k} \bone\,.
  \end{align}
  The claimed upper bound follows from the monotonicity of $\pi_k$.
\end{proof}

Now, we are ready to prove \cref{lemma:error propagation}.

\begin{proof}[Proof of \cref{lemma:error propagation}]
    Note that
    \begin{align}
        \bzero
        \leq
        \vf{*} - \vf{\pi_k}
        =
        \frac{A_k}{A_\infty} \paren*{ \vf{*} - \vf{\pi_k} }
        + \alpha^k \paren*{ \vf{*} - \vf{\pi_k} }
        \leq
        \frac{A_k}{A_\infty} \paren*{ \vf{*} - \vf{\pi_k} }
        + 2 H \alpha^k \bone
    \end{align}
    since $\vf{*} - \vf{\pi_k} \leq 2 H \bone$.
    Therefore, we need an upper bound for $A_k (\vf{*} - \vf{\pi_k})$.
    We decompose $A_k (\vf{*} - \vf{\pi_k})$ to $A_k \vf{*} - w_k$ and $w_k - A_k\vf{\pi_k}$.
    Then, we derive upper bounds for each of them.
    The desired result is obtained by summing up those bounds.
    
    \paragraph{Upper bound for $A_k \vf{*} - w_k$.}
    Note that
    \begin{align}
        A_k \vf{*} - w_k
        &\numeq{=}{a}
        \cN^{\pi_*} \paren*{
            \pi_* \paren*{ A_k r + \gamma P w_k} - w_k
        }
        \\
        &\numeq{\leq}{b}
        \cN^{\pi_*} \pi_* \paren*{
            A_k r + \gamma P w_k - s_k
        }
        \\
        &\numeq{=}{c}
        \cN^{\pi_*} \pi_* \paren*{
            \gamma P \paren*{w_k - w_{k-1}} - E_k
        }
        \\
        &\numeq{\leq}{d}
        \cN^{\pi_*} \pi_* \paren*{
            \sum_{j=1}^k \gamma^j P_{k+1-j}^{k} E_{k+1-j}' - E_k
        }
        + H A_{\gamma, k} \bone\,,
    \end{align}
    where (a) is due to the fact that
    $I = \cN^\pi (I - \gamma \pi P)$
    and
    $\vf{\pi} = \cN^{\pi} \pi r$
    for any policy $\pi$,
    (b) is due to the greediness of $\pi_k$,
    (c) follows from the equation \eqref{eq:rewrite s_k},
    and (d) follows from \cref{lemma:delta_k bound}.

    \paragraph{Upper bound for $w_k - A_k \vf{\pi_k}$.}
    We have that
    \begin{align}
        w_k - A_k \vf{\pi_k}
        &\numeq{=}{a}
        \cN^{\pi_k} \paren*{
            w_k - \pi_k \paren*{ A_k r + \gamma P w_k}
        }
        \\
        &\numeq{=}{b}
        \cN^{\pi_k} \pi_k \paren*{
            w_k - A_k r - \gamma P w_k
        }
        \\
        &\numeq{=}{c}
        \cN^{\pi_k} \pi_k \paren*{
            - \gamma P \paren*{w_k - w_{k-1}} + E_k
        }
        \\
        &\numeq{\leq}{d}
        \cN^{\pi_k} \pi_k \paren*{
            E_k - \sum_{j=1}^k \gamma^j P_{k-j}^{k-1} E_{k+1-j}'
        }
        + H A_{\gamma, k} \bone\,,
    \end{align}
    where (a) is due to the fact that
    $I = \cN^\pi (I - \gamma \pi P)$
    and
    $\vf{\pi} = \cN^{\pi} \pi r$
    for any policy $\pi$,
    (b) is due to the definition of $w_k$,
    (c) follows from the equation \eqref{eq:rewrite s_k},
    and (d) follows from \cref{lemma:delta_k bound}.
\end{proof}

\subsection{Proof of \texorpdfstring{
        \cref{lemma:coarse bound for last policy}
    }{
        Lemma~\ref{lemma:coarse bound for last policy}
    } (Coarse State-Value Bounds)
}\label{subsec:proof of last policy coarse bound}
Before starting the proof, we note that $A_\infty = H^2$ under the current setting.

\begin{proof}[Proof of \cref{lemma:coarse bound for last policy}]
    From \cref{lemma:E_k bound},
    $
        \infnorm{E_k}
        \leq
        3 H \sqrt{A_\infty \iota_1 / M}
        \leq
        3 \varepsilon \sqrt{H^3 / c_4}
    $ for any $k \in [K]$.
    On the other hand, from \cref{lemma:eps_k bound},
    $
        \infnorm{\varepsilon_k}
        \leq
        3 H \sqrt{\iota_1 / M}
        \leq
        3 \varepsilon \sqrt{H / c_4}
    $ for any $k \in [K]$.
    Combining these bounds
    with \cref{lemma:non-stationary error propagation},
    \begin{align}
        \vf{*} - \vf{\pi'_k}
        \leq
        \frac{\square}{H} \max_{j \in [k]} \infnorm{E_j} \bone
        + \square H \alpha^k \bone
        \leq
        \square \paren*{
            \varepsilon \sqrt{\frac{H}{c_4}} + H \alpha^k
        } \bone
    \end{align}
    for any $k \in [K]$,
    where we used the fact that $A_{\gamma, k} / A_{\infty} \leq \alpha^k / H \leq \alpha^k$, which follows from \cref{lemma:A_gamma_k bound}, is used.
    Furthermore, combining previous upper bounds for errors with \cref{lemma:error propagation},
    \begin{align}
        \vf{*} - \vf{\pi_k}
        &\leq
        2 H \underbrace{
            \paren*{\alpha^k + \frac{A_{\gamma, k}}{A_\infty}}
        }_{\leq 2 \alpha^k \text{ from (a)}} \bone
        + \frac{1}{A_\infty} \underbrace{
            \paren*{\cN^{\pi_k} \pi_k - \cN^{\pi_*} \pi_*} E_k
        }_{
            \leq 2H \infnorm{E_k} \bone \text{ from (b)}
        }
        \\
        &\hspace{6em}+ \frac{1}{A_\infty} \sum_{j=1}^k \gamma^j \underbrace{
            \paren*{
                \cN^{\pi_*} \pi_* P_{k+1-j}^k - \cN^{\pi_k} \pi_k P_{k-j}^{k-1}
            } E_{k+1-j}'
        }_{\leq 2 H ( \infnorm{\varepsilon_{k+1-j}} + (1-\alpha) \infnorm{E_{k-j}} ) \bone \text{ from (c)} }
        \\
        &\numeq{\leq}{d}
        4 H \alpha^k \bone
        + \frac{2}{H} \infnorm{E_k}
        + 2 \max_{j \in [k]} \paren*{
            \infnorm{\varepsilon_j} + \frac{1}{H^2} \infnorm{E_j}
        }
        \\
        &\leq
        4 H \alpha^k \bone
        + 6 \varepsilon \sqrt{\frac{H}{c_4}}
        + \frac{6 \varepsilon}{\sqrt{c_4}} \paren*{
            \sqrt{H} + \frac{1}{\sqrt{H}}
        } \bone
        = \square \paren*{
            \varepsilon \sqrt{\frac{H}{c_4}} + H \alpha^k
        } \bone
    \end{align}
    for any $k \in [K]$,
    where (a) follows as $A_{\gamma, k} / A_{\infty} \leq \alpha^k / H \leq \alpha^k$ from \cref{lemma:A_gamma_k bound},
    (b) is due to the monotonicity of stochastic matrices,
    and $- \infnorm{E_k} \bone \leq E_k \leq \infnorm{E_k} \bone$ for any $k \in [K]$,
    (c) is due to the monotonicity of stochastic matrices,
    and
    $
    - (\infnorm{\varepsilon_k} + (1-\alpha) \infnorm{E_{k-1}} ) \bone
    \leq
    E_k'
    \leq
    (\infnorm{\varepsilon_k} + (1-\alpha) \infnorm{E_{k-1}} ) \bone
    $ for any $k \in [K]$,
    and (d) follows by taking the maximum over $j$.
\end{proof}

\section{Details on empirical illustrations}
\label{app:illustrations}

This appendix details the settings used for the illustrations of \cref{sec:illustrations}. It provides

\begin{itemize}
    \item a precise definition of the Garnet setting and pseudo-code for \qlearning in \cref{subappx:detailed};
    \item additional numerical experiments illustrating the effects of $\alpha$ and $M$ on the algorithm in \cref{subappx:additional}.
\end{itemize}

\subsection{Detailed setting}
\label{subappx:detailed}

\paragraph{Garnets.} We use the Garnets~\citep{archibald1995generation} class of random MDPs. A Garnet is characterized by three integer parameters, $X$, $A$, and $B$, that are respectively the number of states, the number of actions, and the branching factor -- the maximum number of accessible new states in each state. For each $(x,a)\in \XA$, we draw $B$ states ($y_1, \hdots, y_B$) from $\X$ uniformly without replacement. Then, we draw $B-1$ numbers uniformly in $(0,1)$, denoting them sorted as $(p_1, \hdots, p_{B-1})$. We set the transition probability $P^{y_k}_{x,a} = p_k - p_{k-1}$ for each $1 \leq k \leq B$, with $p_0=0$ and $p_B = 1$. Finally, the reward function, depending only on the states, is drawn uniformly in $(-1, 1)$ for each state. In our examples, we used $X=8$, $A=2$, and $B=2$. We compute our experiments with $\gamma=0.9$.

\paragraph{Q-learning.} For illustrative purposes, we compare the performance of \mdvi to the one of a sampled version of \qlearning, that we know is not minimax-optimal. For completeness, the pseudo-code for this method is given in \cref{algo:qlearning}. It shares the time complexity of \mdvi, but has a lower memory complexity, since it does not need to store an additional $XA$ table. 

\begin{algorithm}[]
    \KwIn{ number of iterations $K$, number of samples per iteration $M$, $w\in[0.5, 1]$ a learning rate parameter.}
    Let $q_0 = \bzero \in \R^{\aXA}$;\\
    \For{$k$ \textbf{\emph{from}} $0$ \textbf{\emph{to}} $K-1$}{
        \For{\textbf{\emph{each state-action pair}} $\paren*{x, a} \in \XA$}{
            Sample $(y_{k, m, x, a})_{m=1}^{M}$ from the generative model $P(\cdot|x, a)$\;
            Let $m_{k+1} (x, a) = r (x, a) + \gamma M^{-1} \sum_{m=1}^M \max_{a'} q_k (y_{k, m, x, a}, a')$\;
        }
        Let $\eta_k = (k + 1)^{-w}$; \\
        Let $q_{k+1} = (1 - \eta_k)q_{k} + \eta_k m_{k+1}$;
    }
    \Return{$\pi_K$ , a greedy policy with respect to $q_K$\;}
    \caption{$\qlearning (K, M, w)$}\label{algo:qlearning}
\end{algorithm}

\subsection{Additional numerical illustrations}
\label{subappx:additional}

\looseness=-1
\paragraph{Additional experiment for sample complexity.} In \cref{fig:sample_complexity}, we plot the sample complexity of a standard version of \qlearning using $w=1$ (\textit{i.e} performing an exact average of $q$-values). However, we know~\citep{even2003learning} that
we can reach a better sample complexity by choosing a more appropriate $w$ in $(0.5, 1)$. In \cref{fig:sample_complexity_appx}, we provide the sample complexity for \mdvi, and \qlearning with $w=1$ and $w=0.7$. The version with $w=0.7$ catches up with \mdvi at high errors, but the difference is still quite large at higher precision. Note that we add additional data points for $\varepsilon < 10^{-3}$. Both versions of \qlearning do not have sample complexity plotted for these errors, because they did not reach these $\varepsilon$ in the number of iterations we ran them (up to $10^{5}$ iterations).

\begin{figure}
    \centering
    \includegraphics[width=0.8\linewidth]{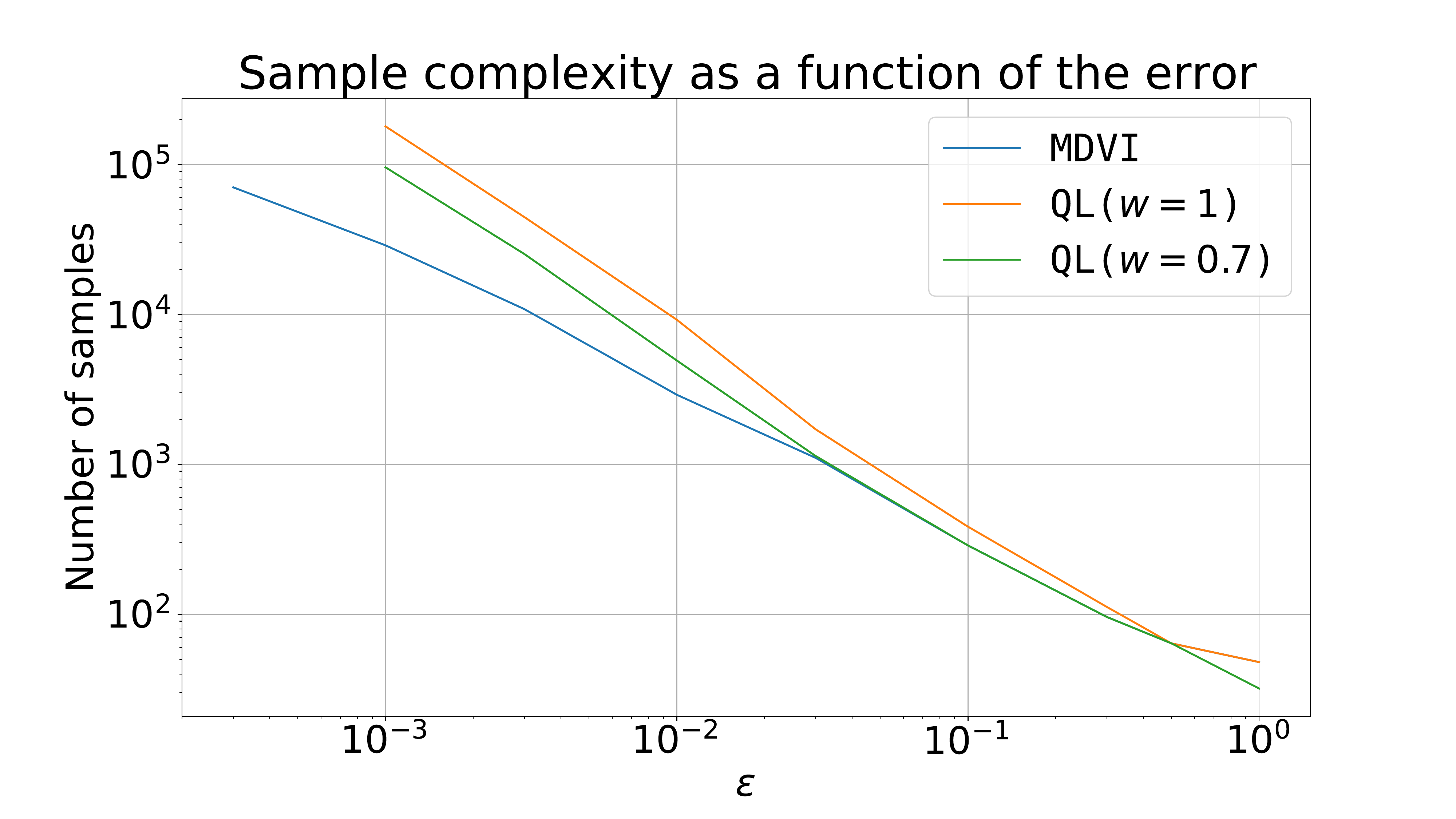}
    \caption{Number of samples needed to reach a certain error.}
    \label{fig:sample_complexity_appx}
\end{figure}

\looseness=-1
\paragraph{Influence of $\alpha$.} We showcase the impact of $\alpha$ when $M=1$ in \cref{fig:convergence_alpha}. With $\alpha=1$, \mdvi will asymptotically converge to $\pi_*$. With a $\alpha < 1$, \mdvi will reach an $\varepsilon$-optimal policy, but will not actually converge to the optimal policy of the MDP (although this $\varepsilon$ can be controlled by choosing a large enough value for $\alpha$, or a larger value of $M$). Indeed, in the latter case, the distance to the optimal policy depends on a moving average of the errors (by a factor $\alpha$). The moving average reduces the variance, but does not bring it zero, contrarily to the exact average implicitly performed when $\alpha=1$. This behaviour is illustrated in \cref{fig:convergence_alpha}. We observe there that, with $M=1$, one has to choose a large enough value of $\alpha$ to reach a policy close enough to the optimal one.

\begin{figure}
    \centering
    \includegraphics[width=0.49\linewidth]{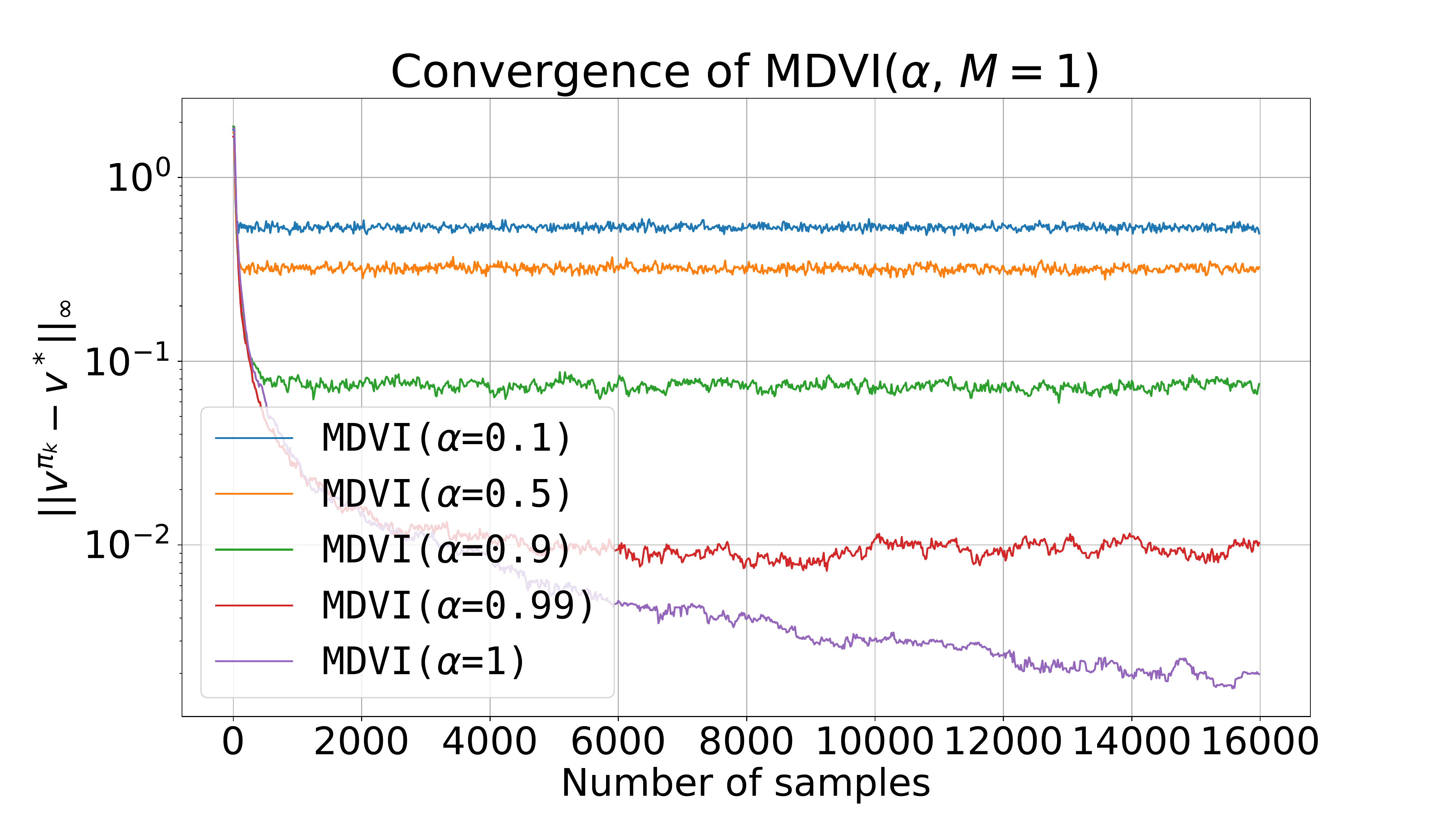}
    \includegraphics[width=0.49\linewidth]{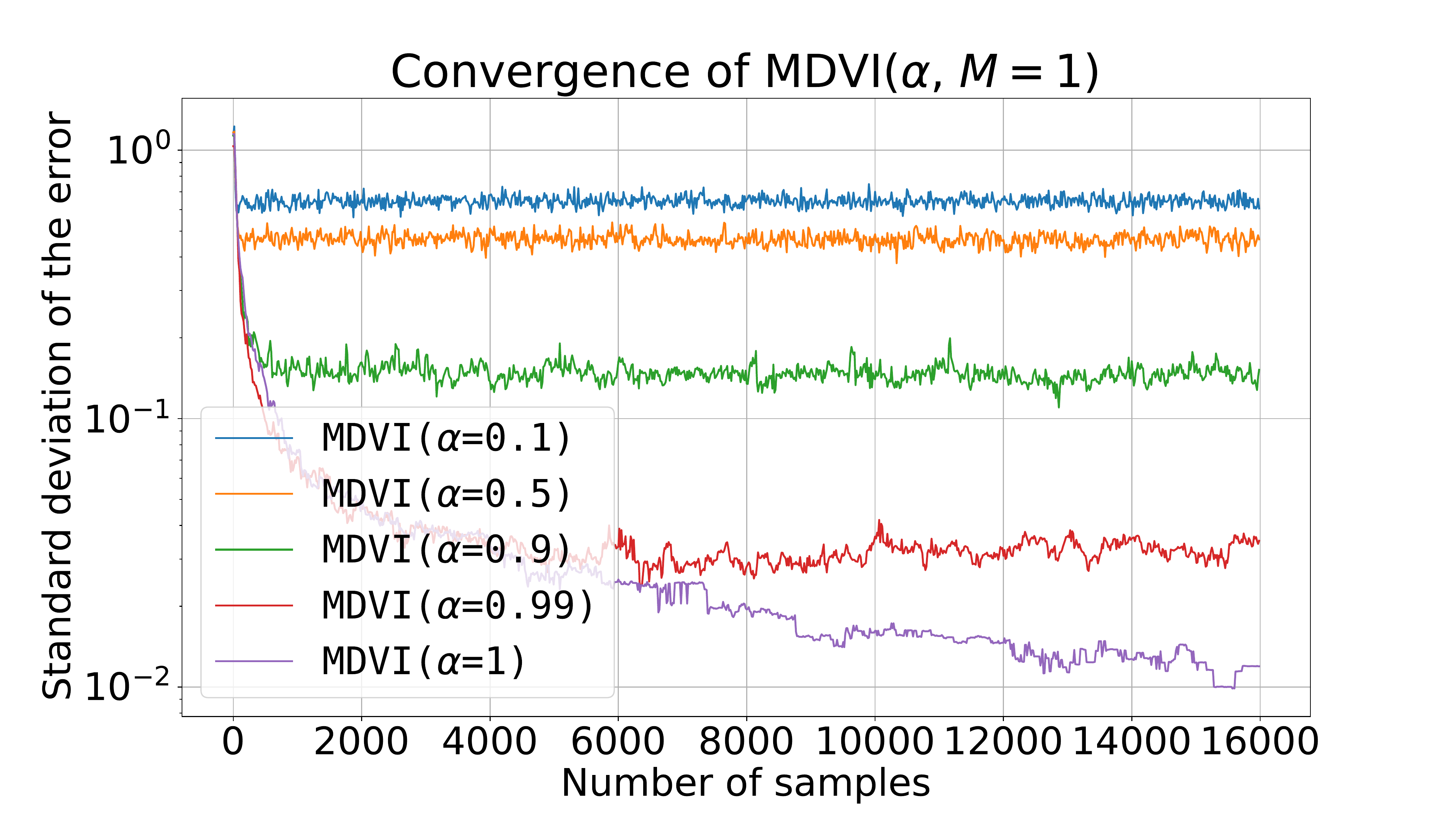}
    \caption{Error of the policy computed by \mdvi in function of the number of samples used. \textbf{Left:} mean, \textbf{Right:} standard deviation; estimated over $1000$ MDPs.}
    \label{fig:convergence_alpha}
\end{figure}

\looseness=-1
\paragraph{Influence of $M$.} Choosing the right $M$ is not that obvious from the theory (it notably depends on an unknown constant $c_2$). We illustrate in \cref{fig:convergence_m} the influence it has on the speed of convergence of \mdvi. We run \mdvi with $\alpha=0.99$ (for a setting where $\gamma=0.9$), and for different values of $M$. With a fixed $\alpha$, a larger $M$ allows \mdvi to reach a lower asymptotic error, but slows down the learning in early iterations. $M$ cannot however be chosen as large as possible: at one point it start to be useless to increase its value. For instance, moving from $M=5$ to $M=10$ does not allow for a noticeable lower error, but slows the learning. We compare this to the setting where $\alpha=1$ for
completeness.

\begin{figure}[t]
    \centering
    \includegraphics[width=0.49\linewidth]{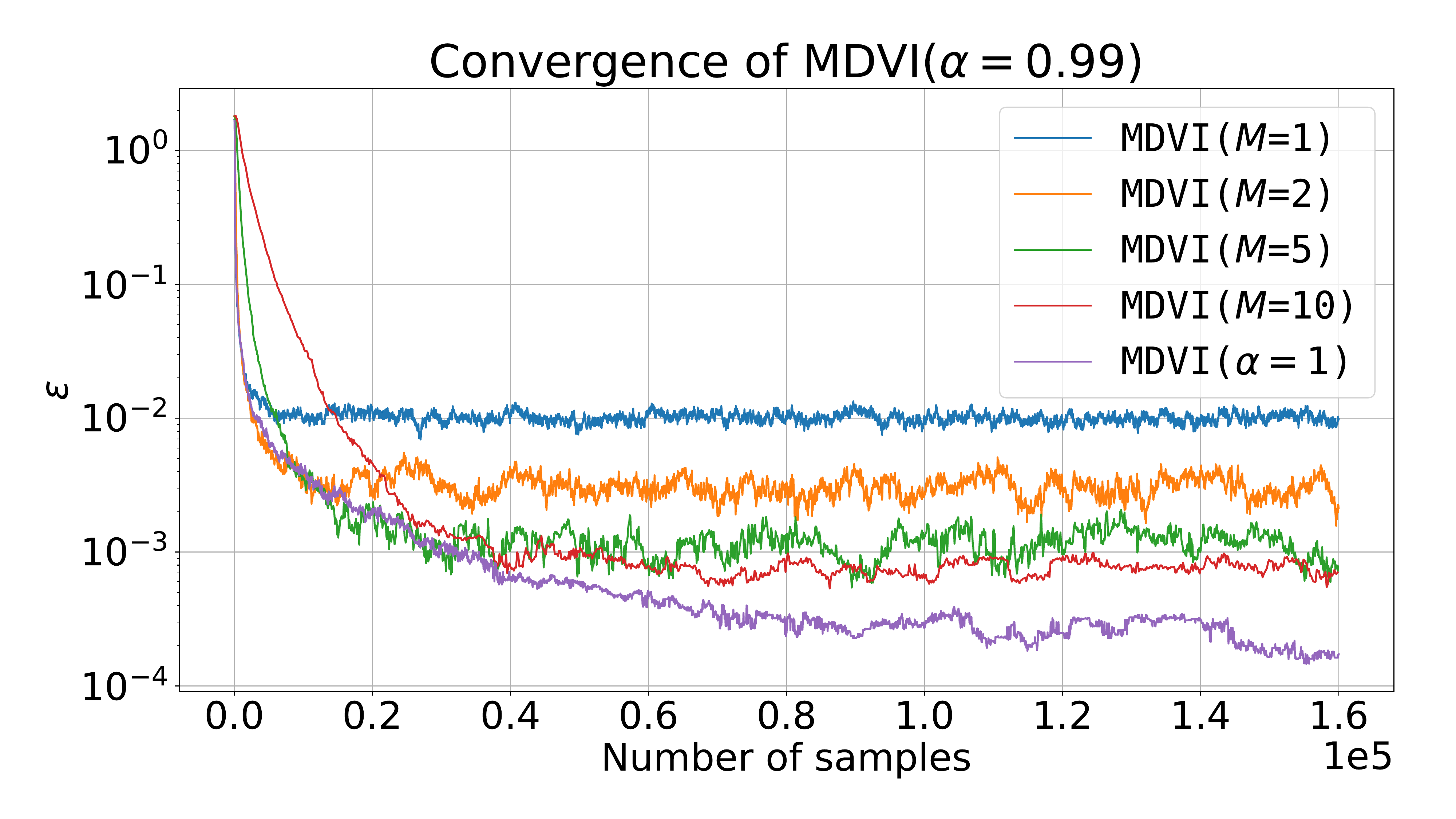}
    \includegraphics[width=0.49\linewidth]{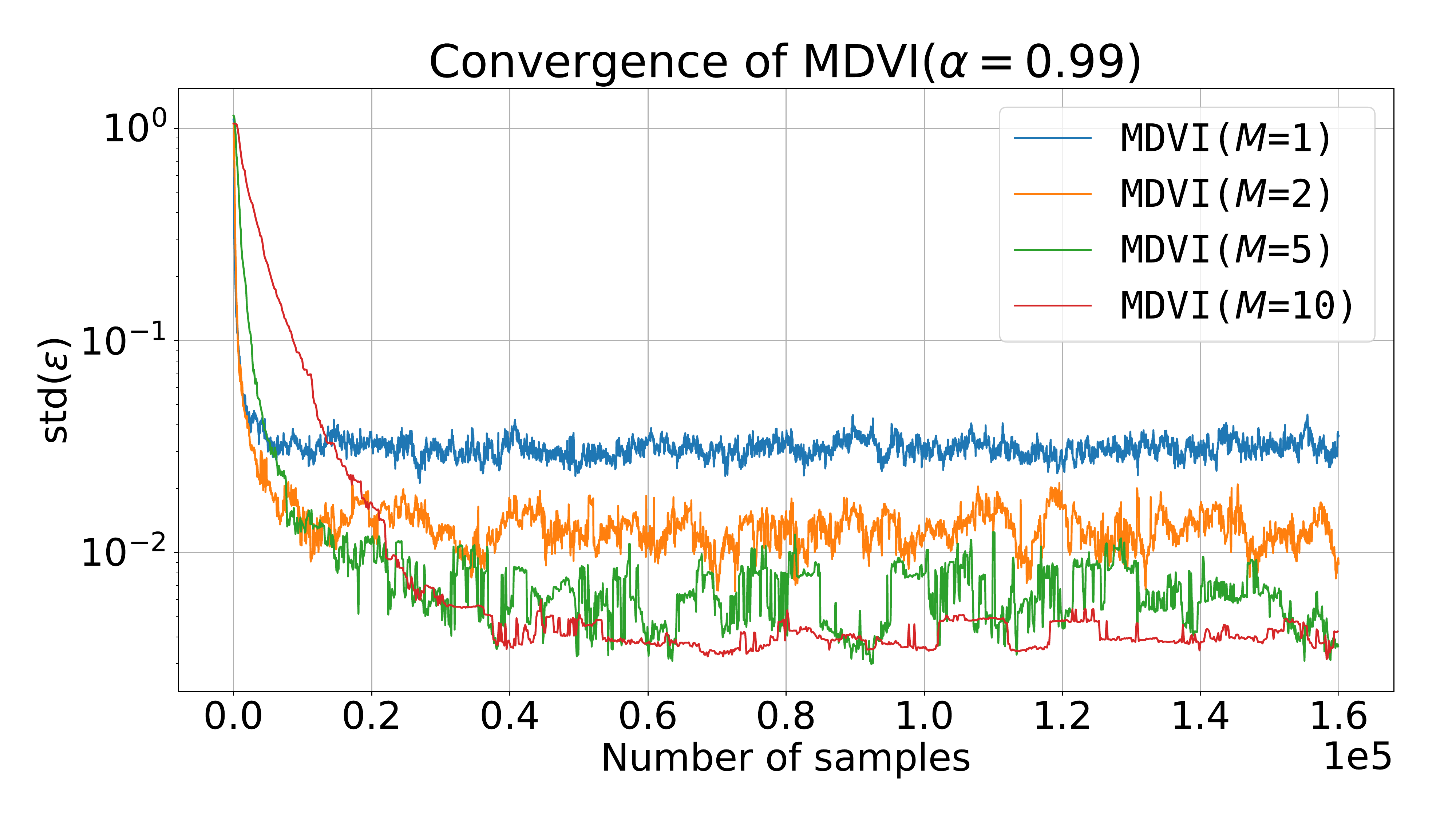}
    \caption{Error of the policy computed by \mdvi in function of the number of samples used, for different values of $M$. \textbf{Left:} mean, \textbf{Right:} standard deviation; estimated over $1000$ MDPs. For this value of $\gamma=0.9$, choosing $\alpha=0.99$ matches the condition $\alpha = 1 - (1-\gamma)^2$.}
    \label{fig:convergence_m}
\end{figure}

\end{document}